
\documentclass[nohyperref]{article}

\usepackage{microtype}
\usepackage{graphicx}
\usepackage{booktabs} 
\usepackage{enumitem}
\usepackage{xcolor}
\usepackage{multirow}
\usepackage{caption}
\usepackage{subcaption}

\usepackage{hyperref}



\usepackage[accepted]{icml2022}

\usepackage{amsmath}
\usepackage{amssymb}
\usepackage{mathtools}
\usepackage{amsthm}
\usepackage{macros}

\usepackage[capitalize,noabbrev]{cleveref}

\theoremstyle{plain}
\newtheorem{theorem}{Theorem}[section]
\newtheorem{proposition}[theorem]{Proposition}
\newtheorem{lemma}[theorem]{Lemma}

\theoremstyle{definition}
\newtheorem{definition}{Definition}[section]
\newtheorem{assumption}{Assumption}[section]
\newtheorem{condition}{Condition}[section]
\newtheorem{problem}{Problem}[section]
\newtheorem{example}{Example}[section]
\theoremstyle{remark}
\newtheorem{remark}{Remark}[section]

\usepackage[textsize=tiny]{todonotes}

\icmltitlerunning{A Theoretical Analysis on Independence-driven Importance Weighting for Covariate-shift Generalization}

\DeclareMathOperator{\var}{Var}
\DeclareMathOperator{\cov}{Cov}
\DeclareMathOperator{\mbd}{BD}
\DeclareMathOperator{\mbl}{BL}
\DeclareMathOperator{\pmbd}{MinStable}
\DeclareMathOperator{\pmbl}{Stable}
\DeclareMathOperator{\mbdtest}{\mbd^{test}}
\DeclareMathOperator{\mbltest}{\mbl^{test}}

\newcommand{\boldX}{\boldsymbol{X}}
\newcommand{\boldS}{\boldsymbol{S}}
\newcommand{\boldV}{\boldsymbol{V}}

\newcommand{\boldD}{\boldsymbol{D}}
\newcommand{\boldW}{\boldsymbol{W}}
\newcommand{\boldA}{\boldsymbol{A}}
\newcommand{\boldB}{\boldsymbol{B}}
\newcommand{\boldC}{\boldsymbol{C}}
\newcommand{\boldZ}{\boldsymbol{Z}}
\newcommand{\boldx}{\boldsymbol{x}}
\newcommand{\bolds}{\boldsymbol{s}}
\newcommand{\boldv}{\boldsymbol{v}}
\newcommand{\boldu}{\boldsymbol{u}}
\newcommand{\boldd}{\boldsymbol{d}}
\newcommand{\boldw}{\boldsymbol{w}}

\newcommand{\calX}{\mathcal{X}}
\newcommand{\calS}{\mathcal{S}}
\newcommand{\calV}{\mathcal{V}}

\newcommand{\calD}{\mathcal{D}}
\newcommand{\calW}{\mathcal{W}}
\newcommand{\calY}{\mathcal{Y}}
\newcommand{\bbE}{\mathbb{E}}
\newcommand{\bbR}{\mathbb{R}}
\newcommand{\tildeX}{\tilde{\boldsymbol{X}}}
\newcommand{\boldbeta}{\boldsymbol{\beta}}
\newcommand{\boldxi}{\boldsymbol{\xi}}
\newcommand{\appr}{\mathrm{approx}_w}
\newcommand{\noise}{\mathrm{noise}}
\newcommand{\pr}{\mathrm{Pr}}
\newcommand{\boundx}{B}
\newcommand{\boundappr}{C_w}
\newcommand{\boundlambda}{\Lambda_w}
\newcommand{\boundweight}{\delta_{\hat{w}}}
\newcommand{\minlambda}{\lambda_{\min}}

\newcommand{\ptrain}{P^{\text{tr}}}
\newcommand{\ptest}{P^{\text{te}}}

\newenvironment{enum}
{\begin{enumerate}[noitemsep,topsep=0pt,parsep=0pt,partopsep=0pt]}
{\end{enumerate}}

\makeatletter
\def\maketag@@@#1{\hbox{\m@th\normalfont\normalsize#1}}
\makeatother

\begin{document}

\twocolumn[
\icmltitle{A Theoretical Analysis on Independence-driven Importance Weighting for Covariate-shift Generalization}



\icmlsetsymbol{equal}{*}

\begin{icmlauthorlist}
\icmlauthor{Renzhe Xu}{tsinghua}
\icmlauthor{Xingxuan Zhang}{tsinghua}
\icmlauthor{Zheyan Shen}{tsinghua}
\icmlauthor{Tong Zhang}{hkust}
\icmlauthor{Peng Cui}{tsinghua}
\end{icmlauthorlist}

\icmlaffiliation{tsinghua}{Department of Computer Science and Technology, Tsinghua University, Beijing, China}
\icmlaffiliation{hkust}{Computer Science \& Mathematics, The Hong Kong University of Science and Technology, Hong Kong, China. Emails: xrz199721@gmail.com, xingxuanzhang@hotmail.com, shenzy17@mails.tsinghua.edu.cn, tongzhang@tongzhang-ml.org, cuip@tsinghua.edu.cn}

\icmlcorrespondingauthor{Peng Cui}{cuip@tsinghua.edu.cn}

\icmlkeywords{Stable Learning, Covariate-shift Generalization}

\vskip 0.3in
]



\printAffiliationsAndNotice{}  

\begin{abstract}
    Covariate-shift generalization, a typical case in out-of-distribution (OOD) generalization, requires a good performance on the unknown test distribution, which varies from the accessible training distribution in the form of covariate shift. Recently, independence-driven importance weighting algorithms in stable learning literature have shown empirical effectiveness to deal with covariate-shift generalization on several learning models, including regression algorithms and deep neural networks, while their theoretical analyses are missing. In this paper, we theoretically prove the effectiveness of such algorithms by explaining them as feature selection processes. We first specify a set of variables, named \textbf{minimal stable variable set}, that is the minimal and optimal set of variables to deal with covariate-shift generalization for common loss functions, such as the mean squared loss and binary cross-entropy loss. Afterward, we prove that under ideal conditions, independence-driven importance weighting algorithms could identify the variables in this set. Analysis of non-asymptotic properties is also provided. These theories are further validated in several synthetic experiments. The source code is available at \url{https://github.com/windxrz/independence-driven-IW}.
\end{abstract}

\section{Introduction}
Although modern machine learning techniques have achieved great success in various areas, many researchers have demonstrated the vulnerability of machine learning models under distribution shifts \citep{shen2021towards}.
This issue arises from the violation of the \textit{i.i.d.} assumption (\textit{i.e.}, training and test data are independent and identically distributed) and stimulates recent research on out-of-distribution (OOD) generalization~\citep{shen2021towards,zhang2022nico++}.
Among different types of distribution shifts considered in OOD literature, covariate shift~\citep{shimodaira2000improving,sugiyama2007covariate,ben2007analysis}, where the marginal distribution of variables shifts from the training data to the test data while the labeling function keeps unchanged, is the most common one~\citep{shen2021towards}.
Further, covariate-shift generalization is much more challenging, given that the test distribution remains unknown in the training phase.

With the prior knowledge of the test distribution, importance weighting (IW) is common in dealing with covariate shift~\citep{shimodaira2000improving,sugiyama2007covariate,sugiyama2007direct,sugiyama2008direct,fang2020rethinking}.
In detail, IW methods consist of two steps, namely weight estimation and weighted regression~\citep{fang2020rethinking}. The weight estimation step estimates sample weights that characterize the density ratio between the training and test distribution. The weighted regression step trains predictors after plugging the sample weights into loss functions.
However, IW methods can not adapt to covariate-shift generalization problems directly because the test distribution is unknown.

Recently, independence-driven importance weighting methods \citep{shen2020stable,kuang2020balance,zhang2021deep,zhang2022towards2} in stable learning literature \citep{cui2022stable} have shown empirical effectiveness to deal with covariate-shift generalization on several learning tasks involving regression algorithms and deep models.
Without the knowledge of the test distribution, in the weight estimation step, they propose to learn sample weights that guarantee the statistical independence between features in the weighted distribution.
Although the advantages of these algorithms have been proved empirically, the theoretical explanations for these methods are missing. In this paper, we take a step towards the theoretical analysis of independence-driven IW methods on covariate-shift generalization problems by explaining them as feature selection processes.

We first show that for common loss functions, including the mean squared loss and binary cross-entropy loss, the covariate-shift generalization problem can be tackled by a minimal set of variables $\boldS$ that satisfies the condition: $\mathbb{E}[Y|\boldS] = \mathbb{E}[Y|\boldX]$.
Such a minimal set of variables is named the \textbf{minimal stable variable set}.
Afterward, we prove that independence-driven IW algorithms could identify the minimal stable variable set.
We analyze the typical algorithms \citep{kuang2020balance, shen2020stable} where the weighted least squares (WLS) is adopted in the weighted regression step.
Variables whose corresponding coefficients of WLS are not zero could be considered as chosen variables. 
Under ideal conditions, \textit{i.e.}, perfectly learned sample weights and infinite samples, the selected variables are proved to be the minimal stable variable set.
We further provide non-asymptotic properties and error analysis when the ideal conditions are not satisfied.
We highlight that although a linear model (WLS) is adopted, these theoretical results hold for both linear and non-linear data-generating processes.
Along with the optimality and minimality of the minimal stable variable set, these theories provide a way to explain why independence-driven IW methods work for covariate-shift generalization. These theories are further validated in several synthetic experiments.
 
\subsection{Overview of Results}
We begin with a simplified presentation of our results. Consider a set of variables $(\boldX, Y)$ where $\boldX$ represents features and $Y$ represents the outcome that we try to predict from $\boldX$. We consider covariate-shift generalization problems, which is the most common one among the different distribution shifts \citep{shen2021towards}. In detail, covariate shift considers the scenario where the marginal distribution of $\boldX$ shifts from the training phase to the test phase while the labeling function keeps unchanged.

\begin{assumption} \assumptionlabel{assum:covariate-shift}
    Suppose the test distribution $\ptest$ differs from the training distribution $\ptrain$ in covariate shift only, \textit{i.e.},
    \begin{equation}
        \ptest(\boldX, Y) = \ptest(\boldX)\ptrain(Y|\boldX).
    \end{equation}
    In addition, $\ptest$ has the same support of $\ptrain$.
\end{assumption}

\begin{problem} [Covariate-shift generalization problem] \problemlabel{prob:covariate}
    Given the samples from the training distribution $\ptrain$, covariate-shift generalization problem is to design an algorithm which can guarantee the performance on the unknown test distribution $\ptest$ that satisfies \assumptionref{assum:covariate-shift}.
\end{problem}

We focus on several common loss functions, including the mean squared loss and binary cross-entropy loss, under which circumstances
$\mathbb{E}_{\ptest}[Y|\boldX]$ is the global optimum for the test distribution $\ptest$.

\begin{theorem} [Informal version of \theoremref{thrm:ood-prediction}] \theoremlabel{thrm:informal-ood}
Let $\ptest$ be the unknown test distribution in the covariate-shift generalization problem defined in \problemref{prob:covariate}. Then a subset of variables $\boldS \subseteq \boldX$ that can fit the target $\mathbb{E}_{\ptest}[Y|\boldX]$ if and only if it satisfies $\mathbb{E}_{\ptrain}[Y | \boldS] = \mathbb{E}_{\ptrain}[Y | \boldX]$.
\end{theorem}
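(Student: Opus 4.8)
The plan is to reduce the if-and-only-if to a single measurability statement about the regression function that covariate shift forces the two distributions to share, and then to move between the distributions using the tower property together with the equal-support hypothesis. First I would pin down what ``fit the target'' means for the loss functions at issue: for the mean squared loss (and analogously for the binary cross-entropy loss), the best predictor that is measurable with respect to $\boldS$ is $\mathbb{E}_{\ptest}[Y|\boldS]$, so $\boldS$ fits the target $\mathbb{E}_{\ptest}[Y|\boldX]$ precisely when $\mathbb{E}_{\ptest}[Y|\boldS] = \mathbb{E}_{\ptest}[Y|\boldX]$, i.e. when $\mathbb{E}_{\ptest}[Y|\boldX]$ is itself a measurable function of $\boldS$ alone.

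The central observation is that \assumptionref{assum:covariate-shift} makes the regression function invariant: from $\ptest(Y|\boldX) = \ptrain(Y|\boldX)$ we get $\mathbb{E}_{\ptest}[Y|\boldX] = \mathbb{E}_{\ptrain}[Y|\boldX] =: m(\boldX)$, a single function that does not depend on which marginal of $\boldX$ we use. Both sides of the claimed equivalence can then be rephrased purely in terms of $m$: the training-side condition $\mathbb{E}_{\ptrain}[Y|\boldS] = \mathbb{E}_{\ptrain}[Y|\boldX]$ says exactly that $m(\boldX)$ is $\sigma(\boldS)$-measurable $\ptrain$-almost surely, while fittability on the test distribution says the same thing $\ptest$-almost surely. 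The whole theorem thus collapses to showing that these two almost-sure measurability statements are equivalent, and this is where the equal-support hypothesis does its work.

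For the direction that assumes the training condition, I would write $m(\boldX) = g(\boldS)$ $\ptrain$-a.s. for some function $g$; because $\ptest$ shares the support of $\ptrain$, this identity also holds $\ptest$-a.s., and the tower property then gives $\mathbb{E}_{\ptest}[Y|\boldS] = \mathbb{E}_{\ptest}[m(\boldX)|\boldS] = \mathbb{E}_{\ptest}[g(\boldS)|\boldS] = g(\boldS) = m(\boldX) = \mathbb{E}_{\ptest}[Y|\boldX]$, so $\boldS$ fits the target. The converse is symmetric: fittability yields $m(\boldX) = h(\boldS)$ $\ptest$-a.s., hence $\ptrain$-a.s. by equal support, and the identical tower-property computation under $\ptrain$ recovers $\mathbb{E}_{\ptrain}[Y|\boldS] = \mathbb{E}_{\ptrain}[Y|\boldX]$.

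I expect the main obstacle to be measure-theoretic rather than algebraic. The tempting but false shortcut is to assert $\mathbb{E}_{\ptrain}[Y|\boldS] = \mathbb{E}_{\ptest}[Y|\boldS]$ outright; this is wrong because conditioning on $\boldS$ integrates the complement $\boldV = \boldX \setminus \boldS$ against $P(\boldV \mid \boldS)$, which genuinely differs between train and test. The argument must therefore never equate the two conditional expectations directly, but only transfer the almost-sure identity $m(\boldX) = g(\boldS)$ across the two distributions, which is exactly what the equal-support assumption licenses. Getting this transfer correct, and keeping track of which distribution each almost-sure qualifier attaches to at each step, is the only delicate point; everything else is a routine application of the tower property.
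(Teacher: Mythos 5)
Your proof is correct and takes essentially the same route as the paper: both arguments pivot on the invariance $\mathbb{E}_{\ptrain}[Y|\boldX] = \mathbb{E}_{\ptest}[Y|\boldX]$ forced by covariate shift, and then identify ``fitting the target'' with the stable-variable-set condition $\mathbb{E}[Y|\boldS] = \mathbb{E}[Y|\boldX]$ under the relevant distribution. If anything, your handling of the train-test transfer (carrying the $\ptrain$-a.s.\ identity $m(\boldX) = g(\boldS)$ over to a $\ptest$-a.s.\ identity via the equal-support assumption, rather than ever equating $\mathbb{E}_{\ptrain}[Y|\boldS]$ with $\mathbb{E}_{\ptest}[Y|\boldS]$) is more explicit than the paper's, which compresses this step into ``it is obvious'' before invoking its auxiliary lemma on optimal predictors and stable variable sets.
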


We define the minimal set of variables that satisfies $\mathbb{E}_{\ptrain}[Y | \boldS] = \mathbb{E}_{\ptrain}[Y | \boldX]$ as the \textbf{minimal stable variable set} (\definitionref{defn:minimal-stable-set}). Under mild assumptions (\assumptionref{assum:positive}), the existence and uniqueness of the variable set are guaranteed (\theoremref{thrm:unique-stable-set}). As relationships between $\boldX$ vary from the training phase to the test phase, \textit{i.e.}, $\ptrain(\boldX) \ne \ptest(\boldX)$, it is reasonable to find the minimal set of variables to make predictions so that it can relieve the negative impact of other features in the test distribution. We will show the optimality property of the minimal stable variable set empirically in \figureref{fig:ood-S-2.5}.

Now we consider independence-driven IW algorithms (The framework of such algorithms can be found in \sectionref{sect:stable-learning-algorithm} and \algorithmref{alg:stable-learning}). Typical independence-driven IW algorithms~\citep{shen2020stable,kuang2018stable} learn sample weights first to make features statistically independent in the weighted distribution and then adopt a weighted least squares regression step. The algorithms can be considered as processes of feature selection by examining the coefficients of WLS. In detail, the variables with non-zero coefficients are chosen. The variables chosen by independence-driven IW algorithms have the following properties.

\begin{theorem} [Informal version of \theoremref{thrm:findV} and \theoremref{thrm:findS}] \theoremlabel{thrm:informal-identifiabillity}
    Under ideal conditions (perfectly learned sample weights and infinite samples),
    \begin{itemize}[labelindent=0pt,noitemsep,topsep=0pt,parsep=0pt,partopsep=0pt]
        \item if a variable $X_i$ is not in the minimal stable variable set, then independence-driven IW algorithms could filter it out with any weighting function that satisfies the independence condition, and
        \item if a variable $X_i$ is in the minimal stable variable set, then there exist weighting functions with which independence-driven IW algorithms could identify $X_i$.
    \end{itemize}
\end{theorem}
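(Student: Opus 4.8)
The plan is to exploit two structural facts that the independence reweighting produces. First, reweighting the covariates never alters the conditional law $\ptrain(Y\mid\boldX)$, since any admissible weight depends on $\boldX$ alone; writing $P_w(\boldX,Y)=w(\boldX)\ptrain(\boldX)\ptrain(Y\mid\boldX)$ and $g(\boldS):=\mathbb{E}_{\ptrain}[Y\mid\boldS]=\mathbb{E}_{\ptrain}[Y\mid\boldX]$, I would record that $\mathbb{E}_w[Y\mid\boldX]=g(\boldS)$, and that $\mathbb{E}_w[Y\mid\boldS]=g(\boldS)$ as well because $g$ is already $\boldS$-measurable. Second, under the independence condition the centered weighted Gram matrix $\mathbb{E}_w[\boldX\boldX^\top]$ is diagonal, so the WLS solution decouples coordinatewise and (with an intercept, i.e.\ after centering) the $i$-th coefficient reduces to the univariate form $\beta_i=\operatorname{Cov}_w(X_i,Y)/\operatorname{Var}_w(X_i)=\operatorname{Cov}_w(X_i,g(\boldS))/\operatorname{Var}_w(X_i)$. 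The whole statement then becomes a question about a single weighted covariance.

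For the first bullet ($X_i\notin\boldS$), the independence condition forces the weighted law of $\boldX$ to be a product measure, so $X_i$ is independent of the entire vector $\boldS\subseteq\boldX_{-i}$ under $P_w$. Hence $\operatorname{Cov}_w(X_i,g(\boldS))=\mathbb{E}_w[X_i\,g(\boldS)]-\mathbb{E}_w[X_i]\mathbb{E}_w[g(\boldS)]=0$ by factorization, giving $\beta_i=0$ for every admissible weight. I expect this direction to be routine once the two structural facts are established.

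For the second bullet ($X_i\in\boldS$), I would argue by contradiction, supposing $\beta_i=0$, i.e.\ $\operatorname{Cov}_w(X_i,g(\boldS))=0$, for every weight meeting the independence condition. Using the freedom to select the product marginals within the support guaranteed by \assumptionref{assum:positive}, fix an arbitrary marginal $q_{\boldsymbol{T}}$ on $\boldsymbol{T}:=\boldS\setminus\{X_i\}$ and set $\tilde g(x_i):=\int g(x_i,\boldsymbol{t})\,q_{\boldsymbol{T}}(d\boldsymbol{t})$; under any product weight with these marginals, $\operatorname{Cov}_w(X_i,g(\boldS))=\operatorname{Cov}_{q_i}(X_i,\tilde g(X_i))$. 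If this vanishes for every admissible marginal $q_i$ of $X_i$, then $\tilde g$ must be constant, since a two-point $q_i$ placed where $\tilde g$ takes different values would otherwise produce a nonzero covariance. As $q_{\boldsymbol{T}}$ was arbitrary, a standard duality (letting $q_{\boldsymbol{T}}$ range over a rich enough family and taking differences in $x_i$) forces $g(x_i,\boldsymbol{t})$ to be independent of $x_i$, so $g(\boldS)$ is $\boldsymbol{T}$-measurable and $\mathbb{E}_{\ptrain}[Y\mid\boldsymbol{T}]=\mathbb{E}_{\ptrain}[Y\mid\boldX]$. This contradicts the minimality of $\boldS$, so some admissible weight must give $\beta_i\neq0$.

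The main obstacle I anticipate is precisely this last construction: converting the abstract nonvanishing conditional dependence of $Y$ on $X_i$ (delivered only implicitly by minimality) into an explicit admissible product weight that certifies $\beta_i\neq0$, all while respecting the support and positivity constraints so that the chosen product marginals correspond to a legitimate nonnegative weight with finite second moments and with $\operatorname{Var}_w(X_i)$ bounded away from zero. Care is also needed to confirm that the algorithm enforces joint (not merely pairwise) independence, since the factorization $\operatorname{Cov}_w(X_i,g(\boldS))=0$ for nonlinear $g$ requires $X_i$ to be independent of the whole vector $\boldS$ rather than of each coordinate separately.
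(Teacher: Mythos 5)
Your proposal is correct in substance and, for the first bullet, is essentially the paper's own argument: the paper proves the invariance $\tilde{P}_w(Y\mid\boldX)=\ptrain(Y\mid\boldX)$ as a lemma, writes $\bbE[Y\mid\boldX]=f(\boldX_{-i})$ for $X_i\notin\pmbd(Y)$, and concludes $\cov_{\tilde{P}_w}(X_i,Y)=0$ by exactly the factorization you give, so $\boldbeta_w(X_i)=0$ (it also silently uses the same centering/decoupling reduction you flag). For the second bullet your route is the contrapositive of the paper's: the paper directly \emph{constructs} a weight $w=\tilde{P}/\ptrain$ with $\tilde{P}=\tilde{P}_{-i}(\boldX_{-i})\,\tilde{P}_i(X_i)\,\ptrain(Y\mid\boldX)$, asserting (i) existence of a product $\tilde{P}_{-i}$ with the same support under which $g(X_i)\triangleq\bbE_{\tilde{P}_{-i}}\bigl[\bbE_{\ptrain}[Y\mid X_i,\boldX_{-i}]\bigr]$ still depends on $X_i$, and (ii) existence of $\tilde{P}_i$ under which $g(X_i)$ is linearly correlated with $X_i$, then computes $\boldbeta_w(X_i)\neq 0$; you instead suppose every admissible product weight yields zero covariance and deduce via approximate point masses that $\bbE[Y\mid\boldX]$ does not depend on $x_i$, contradicting minimality. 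The two arguments hinge on the identical key object ($\tilde g$, the product-average of the regression function), but yours actually supplies the justification for the existence claims (i)--(ii) that the paper leaves as bare assertions, which is a genuine gain in rigor. One repair your version needs (and which you partly anticipate): two-point marginals and point masses are not admissible under \definitionref{defn:weighting}, since $w$ must be a strictly positive function with $\tilde{P}_w$ supported on all of $\calX$; you must replace them by smoothed bumps mixed with a small multiple of the original marginals (covariance depends continuously on the mixture, so nonvanishing survives), and check finite moments --- the same care the paper's unproved existence assertions implicitly require.
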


We further analyze the error of coefficients if these ideal conditions are not satisfied (\theoremref{thrm:big-asumptotic}) under several mild assumptions.

\theoremref{thrm:informal-ood} and \theoremref{thrm:informal-identifiabillity} provide a general picture of the effectiveness of independence-driven IW algorithms. To conclude, under ideal assumptions, they could identify the minimal stable variable set, which is the minimal and optimal set of variables to deal with covariate-shift generalization.
\subsection{Related Works} \sectionlabel{sect:related-works}
\paragraph{OOD and covariate-shift generalization} OOD generalization has raised great concerns. According to \citep{shen2021towards}, OOD methods could be categorized into unsupervised representation learning methods \citep{bengio2013representation,yang2021causalvae,zhang2022towards}, supervised learning models \citep{peters2016causal,zhou2021domain,liu2021heterogeneous,liu2021kernelized,zhou2022sparse,lin2022bayesian,lin2022zin}, and optimization methods \citep{duchi2020distributionally,duchi2021learning,zhou2022model}. More thorough discussions could refer to \citep{shen2021towards}.

There are many types of distribution shift, including covariate shift \citep{shimodaira2000improving}, label shift \citep{garg2020unified}, and concept shift \citep{gama2014survey} and covariate shift is the most common distribution shift \citep{shen2021towards}.
To deal with the covariate-shift generalization problem, there are several methods recently \citep{shen2020stable,kuang2020balance,zhang2021deep,duchi2021learning,krueger2021out,ruan2021optimal}. In this paper, we focus on independence-driven IW algorithms \citep{shen2020stable,kuang2020balance,zhang2021deep} and provide a theoretical analysis of them.

\paragraph{Importance weighting (IW) and independence-driven IW algorithms}~
Importance weighting methods are common practices to tackle distribution shifts. In traditional domain adaptation (DA) problems~\citep{daume2006domain,ben2007analysis}, importance weighting methods assume the prior knowledge of the test distribution and they can estimate the density ratio between the training and test distributions directly~\citep{shimodaira2000improving,huang2006correcting,storkey2007mixture,sugiyama2007covariate,sugiyama2007direct,bickel2007discriminative,sugiyama2008direct,kanamori2009least,fang2020rethinking}. As a result, the ERM training on the weighted distribution is unbiased in the test distribution~\citep{fang2020rethinking}.

Compared to typical DA settings, covariate-shift generalization problems consider a much more challenging setting where the test distribution is unknown \citep{shen2021towards}. Without the knowledge of the test distribution, independence-driven IW algorithms~\citep{shen2018causally,kuang2020stable,shen2020stable,zhang2021deep} in stable learning literature~\citep{cui2022stable} propose to learn sample weights that make features statistically independent in the weighted distribution.
Although the effectiveness of such algorithms on covariate-shift generalization has been proved empirically, their detailed theoretical analysis is missing.

\paragraph{Feature Selection}
Feature selection aims to construct a diagnostic or predictive model for a given regression or classification task via selecting a minimal-size subset of variables that show the best performance \citep{guyon2003introduction}.
Feature selection approaches can be broadly divided into four categories, namely filter methods, wrapper methods, embedded methods, and others.
Filter methods adopt statistical criteria to rank and select features before building classifiers with selected features \citep{john1994irrelevant,langley1994selection,guyon2003introduction, law2004simultaneous}.
Given filter methods are usually independent of the learning of the classifiers, they show superiority in operating time and applicability over other methods \citep{kira1992practical,bolon2013review}. Wrapper methods heuristically search variable subsets via learning a predictive model, thus they can identify the best performing feature subsets for the given modeling algorithm, but are typically computationally intensive \citep{menze2009comparison, bolon2013review, urbanowicz2018relief}. Embedded methods seek to minimize the size of the selected feature subset while maximizing the classification performance simultaneously \citep{tibshirani1996regression,rakotomamonjy2003variable,zou2005regularization,loh2011classification,chen2016xgboost}. Some methods attempt to combine the advantages of wrapper methods and filter methods \citep{cortizo2006multi, liu2014global, benoit2013feature}.
However, discussions on feature selection problems under covariate-shift generalization settings are missing. In this paper, we specify the optimal and minimal set of variables to deal with covariate-shift generalization and prove that independence-driven IW algorithms could identify them. 

\section{Preliminaries} \sectionlabel{sect:preliminary}
\paragraph{Notations}
Let $\boldX = (X_1, X_2, \dots X_d)^T \in \mathbb{R}^d$ denote the $d$-dimensional features and $Y \in \mathbb{R}$ denote the outcome. The training data is from a joint training distribution $\ptrain(\boldX, Y)$. Let $\mathcal{X}$, $\mathcal{X}_j$, and $\mathcal{Y}$ denote the support of $\boldX$, $X_j$, and $Y$, respectively. Suppose we get $n$ \textit{i.i.d.} samples, $\left\{\boldx^{(i)} = \left(x_1^{(i)}, \dots, x_d^{(i)}\right)^T, y^{(i)}\right\}_{i=1}^n$ sampled from the distribution. Let $\ptest$ denote the unknown test distribution.

We use $\boldS \subseteq \boldX$ to indicate that $\boldS$ is a subset of features $\boldX$ and $\subsetneq$ to mean proper subset. We write $\boldA \perp \boldB \mid \boldC$ when two sets of variables $\boldA, \boldB \subseteq \boldX$ are statistically independent given another set of variables $\boldC \subseteq \boldX$. We also adopt $\boldA \perp \boldB$ when conditioning set is empty to indicate that $\boldA$ and $\boldB$ are statistically independent.

We use $\mathbb{E}_{Q(\cdot)}[\cdot]$ and $\mathbb{E}_{Q(\cdot)}[\cdot | \cdot]$ to denote expectation and conditional expectation, respectively, under a distribution $Q$. 
For example, $\mathbb{E}_{Q(\boldX)}[\boldX]=\int_{\calX}\boldx Q(\boldX=\boldx)\mathrm{d}x$ represent the expectation of $\boldX$ and $\mathbb{E}_{Q(\boldX,Y)}[Y|\boldX]=\int_{\calY}Q(Y=y|\boldX)y\mathrm{d}y$ represent the conditional expectation of $Y$ given $\boldX$ under distribution $Q$. 
$Q$ could be chosen as the training distribution $\ptrain$, test distribution $\ptest$, or any other proper distributions. 
If not confusing, we will use $\mathbb{E}[\cdot]$ and $\mathbb{E}[\cdot | \cdot]$ to denote the expectation and conditional expectation under the training distribution $\ptrain$. We use $\hat{\bbE}[\cdot]$ to denote the empirical expectation \textit{w.r.t.} $n$ samples.

\paragraph{Basic assumption} We consider the following assumption.
\begin{assumption} [Strictly positive density assumption] \assumptionlabel{assum:positive}
    $\forall x_1 \in \mathcal{X}_1, x_2 \in \mathcal{X}_2, \dots, x_d \in \mathcal{X}_d$, $\ptrain(X_1 = x_1, X_2 = x_2, \dots, X_d = x_d) > 0$.
\end{assumption}

\begin{remark}
    \assumptionref{assum:positive} is reasonable on the grounds that there always exists uncertainty in the data \citep{pearl2014probabilistic,strobl2016markov}. Therefore, we suppose the strictly positive density assumption in the whole paper for simplicity.
\end{remark}

\section{Minimal Stable Variable Set for Covariate-shift Generalization} \sectionlabel{sect:local-causal}
In this section, we specify the set of variables that are suitable for covariate-shift generalization problems. We first provide the definition of the minimal and optimal predictor.

\begin{definition} [Optimal predictor \citep{statnikov2013algorithms}]
    Given a dataset sampled from $\ptrain(\boldX, Y)$, a learning algorithm $\mathbb{L}$, and a performance metric $\mathbb{M}$ to assess learner’s models, a variable set $\boldS \subseteq \boldX$ is an optimal predictor of $Y$ if $\boldS$ maximizes the performance metric $\mathbb{M}$ for predicting $Y$ using learner $\mathbb{L}$ in the dataset.
\end{definition}

\begin{definition} [Minimal and optimal predictor \citep{strobl2016markov}]
    Let $\boldS$ be an optimal predictor of $Y$. If no proper subset of $\boldS$ satisfies the definition of the optimal predictor of $Y$, then $\boldS$ is a minimal and optimal predictor of $Y$.
\end{definition}

The minimal and optimal predictor for covariate-shift generalization can be given as follows.

\begin{theorem} \theoremlabel{thrm:ood-prediction}
    Under \assumptionref{assum:covariate-shift} and \assumptionref{assum:positive}, if $\mathbb{M}$ is a performance metric that is maximized only when $\mathbb{E}_{\ptest}[Y | \boldX]$ is estimated accurately and $\mathbb{L}$ is a learning algorithm that can approximate any conditional expectation. Suppose $\boldS \subseteq \boldX$ is a subset of variables, then
    \begin{enum}
        \item $\boldS$ is an optimal predictor of $Y$ under distribution $\ptest$ if and only if $\mathbb{E}_{\ptrain}[Y| \boldX] =\mathbb{E}_{\ptrain}[Y | \boldS]$, and
        \item $\boldS$ is a minimal and optimal predictor of $Y$ under distribution $\ptest$ if and only if $\mathbb{E}_{\ptrain}[Y| \boldX] =\mathbb{E}_{\ptrain}[Y | \boldS]$ and no proper subset $\boldS' \subsetneq \boldS$ satisfies $\mathbb{E}_{\ptrain}[Y | \boldX] = \mathbb{E}_{\ptrain}[Y | \boldS']$.
    \end{enum}
\end{theorem}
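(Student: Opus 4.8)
The plan is to reduce the statement to a question about conditional expectations and then to exploit the strictly positive density assumption to pass from almost-everywhere equalities to pointwise ones. First I would record the single consequence of covariate shift that drives everything: since $\ptest(\boldX,Y) = \ptest(\boldX)\ptrain(Y|\boldX)$, the conditional law of $Y$ given $\boldX$ is preserved, so $\mathbb{E}_{\ptest}[Y|\boldX] = \mathbb{E}_{\ptrain}[Y|\boldX]$ as functions on the common support. Next I would translate the hypotheses on $\mathbb{M}$ and $\mathbb{L}$ into a clean optimality criterion: because $\mathbb{L}$ can realize any conditional expectation and $\mathbb{M}$ is uniquely maximized by an accurate estimate of $\mathbb{E}_{\ptest}[Y|\boldX]$, the best model the learner can produce from $\boldS$ is $\mathbb{E}_{\ptest}[Y|\boldS]$, so $\boldS$ is an optimal predictor under $\ptest$ if and only if $\mathbb{E}_{\ptest}[Y|\boldS] = \mathbb{E}_{\ptest}[Y|\boldX]$ holds $\ptest$-almost everywhere.

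With this reformulation, Part 1 becomes the equivalence $\big(\mathbb{E}_{\ptest}[Y|\boldS] = \mathbb{E}_{\ptest}[Y|\boldX]\big) \iff \big(\mathbb{E}_{\ptrain}[Y|\boldS] = \mathbb{E}_{\ptrain}[Y|\boldX]\big)$, which I would prove in both directions after writing $\boldX = (\boldS, \boldZ)$ with $\boldZ$ the complementary coordinates. For the direction assuming the training condition, the tower property gives $\mathbb{E}_{\ptest}[Y|\boldS] = \mathbb{E}_{\ptest}\big[\mathbb{E}_{\ptrain}[Y|\boldX]\,\big|\,\boldS\big] = \mathbb{E}_{\ptest}\big[\mathbb{E}_{\ptrain}[Y|\boldS]\,\big|\,\boldS\big] = \mathbb{E}_{\ptrain}[Y|\boldS]$, while the same training condition rewrites the target as $\mathbb{E}_{\ptest}[Y|\boldX] = \mathbb{E}_{\ptrain}[Y|\boldX] = \mathbb{E}_{\ptrain}[Y|\boldS]$, so the two agree regardless of how the marginal of $\boldX$ was shifted. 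For the converse I would use optimality in the pointwise form $\mathbb{E}_{\ptest}[Y|\boldS=\bolds] = \mathbb{E}_{\ptrain}[Y|\boldX=(\bolds,\boldz)]$: the left side does not depend on $\boldz$, so the right side must be constant in $\boldz$, which forces $\mathbb{E}_{\ptrain}[Y|\boldX]$ to be $\sigma(\boldS)$-measurable and hence equal to $\mathbb{E}_{\ptrain}[Y|\boldS]$.

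This last deduction is where \assumptionref{assum:positive} does the real work, and I expect it to be the main obstacle to state rigorously: the optimality criterion only delivers equality $\ptest$-almost everywhere, and I must upgrade it to an identity valid for every admissible pair $(\bolds,\boldz)$ before I am allowed to conclude that the right-hand side is genuinely independent of $\boldz$. Strict positivity of the joint density, shared by $\ptrain$ and $\ptest$ through the same-support clause, guarantees that every $(\bolds,\boldz)$ lies in the support, so the null set on which the equality could fail is empty; this is also what legitimizes, in the converse argument, treating $\boldz$ as a free variable rather than one tied to $\bolds$. Finally, Part 2 follows with no extra analysis: by Part 1 the optimal predictors are exactly the sets satisfying $\mathbb{E}_{\ptrain}[Y|\boldS] = \mathbb{E}_{\ptrain}[Y|\boldX]$, so unwinding the definition of a minimal and optimal predictor yields precisely the stated condition that $\boldS$ satisfies the equality while no proper subset $\boldS' \subsetneq \boldS$ does.
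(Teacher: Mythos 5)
Your proposal is correct and takes essentially the same route as the paper: the paper first proves an auxiliary lemma stating that, within a fixed strictly positive distribution whose metric targets $\mathbb{E}[Y|\boldX]$, a set $\boldS$ is an optimal (resp.\ minimal and optimal) predictor if and only if $\mathbb{E}[Y|\boldS]=\mathbb{E}[Y|\boldX]$ (resp.\ with no proper subset satisfying this), and then concludes by citing the covariate-shift invariance $\mathbb{E}_{\ptest}[Y|\boldX]=\mathbb{E}_{\ptrain}[Y|\boldX]$ --- exactly your two main steps. The only differences are matters of completeness: you explicitly establish the cross-distribution transfer $\mathbb{E}_{\ptest}[Y|\boldS]=\mathbb{E}_{\ptest}[Y|\boldX] \iff \mathbb{E}_{\ptrain}[Y|\boldS]=\mathbb{E}_{\ptrain}[Y|\boldX]$ (tower property one way, $\sigma(\boldS)$-measurability the other), a step the paper's one-line proof of the theorem leaves implicit, while your claim that strict positivity makes the exceptional null set ``empty'' is a slight overstatement --- what the argument actually needs, and what same support plus positivity delivers, is that $\ptrain$ and $\ptest$ share null sets so the almost-everywhere identity transfers between the two distributions.
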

\begin{remark}
    To deal with covariate-shift generalization, $\mathbb{M}$ should be measured on the unknown test distribution $\ptest$ with common loss functions.
    In practice, researchers often adopt the mean squared loss in regression problems and the binary cross-entropy loss in binary classification problems. It is easy to check that the global optimum for both loss functions is $\mathbb{E}_{\ptest}[Y|\boldX]$ if applying the loss functions on the test distribution $\ptest$.
\end{remark}

As a result, we provide the following definitions.
\begin{definition} [Stable variable set] \definitionlabel{defn:stable-set}
    A stable variable set of $Y$ under distribution $P$ is any subset $\boldS$ of $\boldX$ for which
    \begin{equation} \equationlabel{eq:stable-set}
        \mathbb{E}_P[Y | \boldS] = \mathbb{E}_P[Y | \boldX].
    \end{equation}
    The set of all stable variable sets for $Y$ is denoted as $\pmbl_P(Y)$. In addition, we use $\pmbl(Y)$ to denote the set under the training distribution $\ptrain$ for simplicity, \textit{i.e.}, $\pmbl(Y) \triangleq \pmbl_{\ptrain}(Y)$.
\end{definition}

\begin{definition} [Minimal stable variable set] \definitionlabel{defn:minimal-stable-set}
    A minimal stable variable set of $Y$ is a minimal set in $\pmbl(Y)$, \textit{i.e.}, none of its proper subsets satisfies \equationref{eq:stable-set}.
\end{definition}

With these definitions, the conclusions of \theoremref{thrm:ood-prediction} become: (1) $\boldS$ is an optimal predictor of $Y$ under $\ptest$ if and only if it is a stable variable set under $\ptrain$, and (2) $\boldS$ is a minimal and optimal predictor of $Y$ under $\ptest$ if and only if it is a minimal stable variable set under $\ptrain$. Furthermore, the existence and uniqueness of the minimal stable variable set are given by the following theorem.
\begin{theorem} 
\theoremlabel{thrm:unique-stable-set}
    Under \assumptionref{assum:positive}, there exists a unique minimal stable variable set of $Y$, which can be denoted as $\pmbd(Y)$.
    Furthermore, with the unique $\pmbd(Y)$, the set of all stable variable sets of $Y$ under the training distribution $\ptrain$, \textit{i.e.}, $\pmbl(Y)$, is
    \begin{equation}
        \pmbl(Y) = \{\boldS \subseteq \boldX \mid \pmbd(Y) \subseteq \boldS\}.
    \end{equation}
\end{theorem}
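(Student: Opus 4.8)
The plan is to recast the stability condition as a statement about functional dependence, prove that stable variable sets are closed under intersection, and then read off existence, uniqueness, and the superset characterization. Throughout write $g(\boldX) \triangleq \mathbb{E}[Y \mid \boldX]$ (expectations under $\ptrain$).

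First I would record the reformulation: $\boldS \in \pmbl(Y)$ if and only if $g(\boldX)$ is, $\ptrain$-almost surely, a function of $\boldS$ alone. This follows from the tower rule, since $\mathbb{E}[Y\mid \boldS] = \mathbb{E}[g(\boldX)\mid \boldS]$, so the defining equation $\mathbb{E}[Y\mid \boldS]=\mathbb{E}[Y\mid\boldX]$ is equivalent to $\mathbb{E}[g(\boldX)\mid\boldS]=g(\boldX)$, i.e.\ to $g(\boldX)$ being $\sigma(\boldS)$-measurable. Under \assumptionref{assum:positive} the support of $\boldX$ is the full product $\calX_1\times\cdots\times\calX_d$, so this measurability means concretely: for any two points $\boldx,\boldx'\in\calX$ agreeing on the coordinates indexed by $\boldS$, one has $g(\boldx)=g(\boldx')$.

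The key step is an intersection (closure) lemma: if $\boldS_1,\boldS_2\in\pmbl(Y)$ then $\boldS_1\cap\boldS_2\in\pmbl(Y)$. I would partition the coordinates of $\boldX$ into the four disjoint blocks $\boldA=\boldS_1\cap\boldS_2$, $\boldB=\boldS_1\setminus\boldS_2$, $\boldC=\boldS_2\setminus\boldS_1$, and $\boldD=\boldX\setminus(\boldS_1\cup\boldS_2)$, so that stability of $\boldS_1$ makes $g$ independent of the $(\boldC,\boldD)$-block and stability of $\boldS_2$ makes $g$ independent of the $(\boldB,\boldD)$-block. Given two points that agree on $\boldA$, I move from one to the other through an intermediate point, first altering the $(\boldC,\boldD)$-values and then the $\boldB$-values; each move leaves $g$ unchanged, so $g$ depends only on $\boldA$, proving $\boldA\in\pmbl(Y)$. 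I expect this to be the main obstacle, and it is precisely where \assumptionref{assum:positive} is indispensable: strict positivity guarantees the intermediate point lies in the support, so the functional-dependence relations may legitimately be applied there; without it the intersection property (and hence uniqueness) can fail.

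Finally I would assemble the three conclusions. The class $\pmbl(Y)$ is nonempty since $\boldX\in\pmbl(Y)$, and because $\boldX$ is finite-dimensional there are only finitely many subsets, so $\boldS^\star \triangleq \bigcap_{\boldS\in\pmbl(Y)}\boldS$ is a finite intersection; repeatedly applying the lemma shows $\boldS^\star\in\pmbl(Y)$. As $\boldS^\star\subseteq\boldS$ for every stable $\boldS$, it is the unique minimal stable variable set, giving $\pmbd(Y)=\boldS^\star$. For the last claim, if $\pmbd(Y)\subseteq\boldS$ then $g$, being a function of $\pmbd(Y)$, is a fortiori a function of $\boldS$, so $\boldS\in\pmbl(Y)$; this yields $\{\boldS\subseteq\boldX \mid \pmbd(Y)\subseteq\boldS\}\subseteq\pmbl(Y)$, and the reverse inclusion is immediate because $\pmbd(Y)$ is contained in every stable set.
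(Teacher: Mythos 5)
Your proposal is correct, and it shares the paper's structural skeleton: both hinge on the same key lemma (closure of stable variable sets under intersection, which is where \assumptionref{assum:positive} enters) followed by minimality arguments. The execution differs, though, in a way worth noting. The paper works directly with conditional expectations: its intersection lemma is proved by writing $\mathbb{E}[Y|\boldS=\bolds]$ as an integral of $\mathbb{E}[Y|\boldS=\bolds,\bar{\boldS}_1=\bar{\bolds}_1]$ against $\ptrain(\bar{\boldS}_1|\boldS)$ and pulling the value $\mathbb{E}[Y|\boldS=\bolds,\bar{\boldS}_2=\bar{\bolds}_2]$ (constant in $\bar{\bolds}_1$) out of the integral, and the inclusion $\{\boldS \subseteq \boldX \mid \pmbd(Y)\subseteq\boldS\}\subseteq\pmbl(Y)$ requires a second, similar integral computation. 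You instead reformulate stability as $\sigma(\boldS)$-measurability of $g(\boldX)=\mathbb{E}[Y|\boldX]$ via the tower rule, after which the intersection lemma becomes a pointwise chaining argument through an intermediate support point, and the superset inclusion is immediate (a function of $\pmbd(Y)$ is a fortiori a function of $\boldS$). Your assembly is also more complete on one point: by taking $\boldS^\star=\bigcap_{\boldS\in\pmbl(Y)}\boldS$, a finite intersection that the lemma shows is itself stable, you get existence and uniqueness simultaneously, whereas the paper argues only uniqueness directly (existence being implicit in the finiteness of the subset lattice). What the paper's integral route buys in exchange is that it never needs to interpret ``almost surely a function of $\boldS$'' pointwise; your reading of measurability (any two support points agreeing on the $\boldS$-coordinates give equal $g$) implicitly fixes a version of the conditional expectation, a measure-theoretic informality that the paper shares in spirit but leans on less directly. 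Both proofs invoke strict positivity for the same purpose: the support of $\boldX$ is the full product of the marginal supports, so every recombination of coordinates (in particular your intermediate point) is a legitimate point at which to apply the stability relations.
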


\theoremref{thrm:ood-prediction} and \theoremref{thrm:unique-stable-set} provide a way to ensure promising OOD performance for covariate-shift generalization problems. 
The minimal stable variable set under the training distribution $\ptrain$ is a minimal and optimal predictor in the test distribution $\ptest$, with which we can learn reliable models \citep{john1994irrelevant, guyon2003introduction}. 
As relationships between $\boldX$ are usually unstable and $\ptrain(\boldX) \ne \ptest(\boldX)$, it is reasonable to find the minimal and optimal predictor, \textit{i.e.}, $\pmbd(Y)$, to make predictions so that it can relieve the negative impact from $\boldX \backslash \pmbd(Y)$ under the test distribution.

\paragraph{Comparing the minimal stable variable set with other variable sets}
$\pmbd(Y)$ could be explained as the direct causal variables in typical data-generating processes. Consider the following mechanism  \citep{tibshirani1996regression,ravikumar2009sparse,hastie2017generalized,kuang2020stable},
\begin{equation}
    \boldX = (\boldS, \boldV), \quad Y = f(\boldS) + \epsilon, \quad \epsilon \perp \boldX.
\end{equation}
Here variables $\boldX$ contain two kinds of variables ($\boldS$ and $\boldV$) while $Y$ depends on $\boldS$ only. The relationship between $\boldS$ and $\boldV$ is arbitrary. In such common cases, $\boldS$ is the set of all the direct causal variables and is the minimal stable variable set of $Y$.

In addition, the minimal stable variable set has relationships with the stable blanket proposed by \citet{pfister2021stabilizing}. However, the stable blankets are defined in causal graphs over a set of interventions while the minimal stable variable set targets for the covariate-shift generalization.

Furthermore, the minimal stable variable set is closely related to the Markov boundary \citep{pearl2014probabilistic}. Under the performance metric in \theoremref{thrm:ood-prediction}, the minimal stable variable set shares the same prediction power of $Y$ with the Markov boundary while the minimal stable variable set contains fewer variables and thus combats covariate-shift generalization problems better. A detailed comparison between the minimal stable variable set and the Markov boundary can be found in \appendixref{sect:markov-boundary}.

\section{Independence-driven IW Algorithms} \sectionlabel{sect:stable-learning-algorithm}
\subsection{General Framework}
The framework of typical independence-driven importance weighting algorithms~\citep{shen2020stable,kuang2020stable} is shown in \algorithmref{alg:stable-learning}. Similar to standard IW algorithms~\citep{fang2020rethinking}, independence-driven IW algorithms consist of two steps, which are independence-driven weight estimation and weighted least squares respectively.

\subsubsection{Independence-driven Weight Estimation}
Independence-driven IW algorithms consider weighting functions that depend on $\boldX$ only.

\begin{definition} [Weighting function and weighted distribution] \definitionlabel{defn:weighting}
    Let $\mathcal{W}$ be the set of \textbf{weighting functions} that satisfies
    \begin{equation}
        \mathcal{W} = \left\{w: \mathcal{X} \rightarrow \mathbb{R}^{+} \mid \mathbb{E}_{\ptrain}[w(\boldX)] = 1 \right\}.
    \end{equation}
    Then $\forall w \in \mathcal{W}$, the corresponding \textbf{weighted distribution} $\tilde{P}_w$ can be determined by the following probability density function.
    \begin{equation}
        \tilde{P}_w(\boldX, Y) = w(\boldX)\ptrain(\boldX, Y).
    \end{equation}
    $\tilde{P}_w$ is well defined with the same support of $\ptrain$.
\end{definition}

Furthermore, instead of the whole set $\mathcal{W}$, independence-driven IW algorithms consider a subset $\mathcal{W}_{\perp} \subseteq \mathcal{W}$. The weighting functions in $\mathcal{W}_{\perp}$ satisfies that $\boldX$ are mutually independent of each other in the corresponding weighted distribution $\tilde{P}_w$ and the expectation of $\boldX$ in the weighted distribution is $\boldsymbol{0}$, \textit{i.e.},
\begin{equation}
    \small
    \begin{aligned}
        \mathcal{W}_{\perp} \triangleq \Huge\{ w \in \mathcal{W} \mid & \boldX \text{ are statistically independent in } \tilde{P}_w, \\
        & \bbE_{\ptrain}\left[w(\boldX)\boldX\right] = \boldsymbol{0} \Huge\}.
    \end{aligned}
\end{equation}

\def\NoNumber#1{{\def\alglinenumber##1{}\State #1}\addtocounter{ALG@line}{-1}}
\begin{algorithm}[tb]
    \caption{Independence-driven IW Algorithm}
    \algorithmlabel{alg:stable-learning}
    \begin{algorithmic}[1]
        \STATE {\bfseries Input:} Dataset $\left\{\boldx^{(i)} = \left(x_1^{(i)}, \dots, x_d^{(i)}\right)^T, y^{(i)}\right\}_{i=1}^n$
        \STATE Learn sample weight $w \in \calW_{\perp}$ so that $\boldX$ are statistically independent in the weighted distribution $\tilde{P}_w$.
        \STATE Solve weighted least squares with weighting function $w(\boldX)$. The solution is $\hat{\boldbeta}_w$.
        \STATE {\bfseries Output:} Cofficients of weighted least squares $\hat{\boldbeta}_w$.
    \end{algorithmic}
\end{algorithm}





\subsubsection{Weighted Least Squares}
Let $w \in \mathcal{W}$ be a weighting function. With $n$ datapoints sampled from $\ptrain(\boldX, Y)$, the weighted least squares solves the following equation
\begin{equation} \equationlabel{eq:weight-LS}
    \small
    \hat{\boldbeta}_w  = \argmin_{\boldbeta} \hat{\mathbb{E}}\left[ w\left(\boldX\right)\left(\boldbeta^T\boldX - Y\right)^2\right] = \hat{\Sigma}_w^{-1}\hat{\bbE}[w(\boldX)\boldX Y].
\end{equation}
Here $\hat{\Sigma}_w \triangleq \hat{\bbE}[w(\boldX)\boldX\boldX^T]$ represents the empirical covariance matrix with sample weights $w$. Furthermore, we denote the solution to population level weighted least squares under distribution $\ptrain(\boldX, Y)$ as
\begin{equation} \equationlabel{eq:weight-LS-population}
    \small
    \boldbeta_w = \argmin_{\boldbeta} \mathbb{E}\left[ w\left(\boldX\right)\left(\boldbeta^T\boldX - Y\right)^2\right] = \Sigma_w^{-1}\bbE[w(\boldX)\boldX Y].
\end{equation}
Here $\Sigma_w \triangleq \bbE[w(\boldX)\boldX\boldX^T]$ represents the population level covariance matrix. In addition, we use $\boldbeta_w(X_i)$ and $\hat{\boldbeta}_w(X_i)$ to denote the corresponding coefficient of $\boldbeta_w$ and $\hat{\boldbeta}_w$ on the $i$-th feature $X_i$.

\subsection{Two Specific Implementations} \sectionlabel{sect:specific}
\algorithmref{alg:stable-learning} has two typical implementations, namely DWR \citep{kuang2020stable} and SRDO \citep{shen2020stable}. They differ mainly in the way to learn sample weights $w$.

\paragraph{DWR} \citet{kuang2020stable} propose to decorrelate every two features, \textit{i.e.},
\begin{equation} \equationlabel{eq:DWR}
    w(\boldX) = \arg \min_{w_0(\boldX)} \sum_{1\le i,j \le d, i\ne j}\left(\cov(X_i, X_j; w_0)\right)^2,
\end{equation}
where $\cov(X_i, X_j; w_0)$ represents the covariance of features $X_i$ and $X_j$ in the weighted distribution $\tilde{P}_{w_0}$. The loss function in \equationref{eq:DWR} focuses on the linear correlation only and is used as an approximation for statistical independence. They proved that linear decorrelation suffices to generate good prediction models under simple models. Recently, \citet{zhang2021deep} combined DWR with random fourier features \citep{rahimi2007random} to achieve statistical independence and showed that deep models could perform better if the representations are statistically independent instead of linearly decorrelated.

\paragraph{SRDO} \citet{shen2020stable} propose to learn $w(\boldX)$ by estimating the density ratio of the training distribution $\ptrain$ and a specific weighted distribution $\tilde{P}$. The weighted distribution $\tilde{P}$ is determined by performing random resampling on each feature so that $\tilde{P}(X_1, X_2, \dots, X_d) = \prod_{i=1}^d\ptrain(X_i)$.
As a result, the weighting function $w(\boldX)$ is given by
\begin{equation} \equationlabel{eq:SRDO}
    w(\boldX) = \frac{\tilde{P}(\boldX)}{\ptrain(\boldX)} = \frac{\ptrain(X_1)\ptrain(X_2)\cdots\ptrain(X_d)}{\ptrain(X_1, X_2, \dots, X_d)}.
\end{equation}
The density ratio in \equationref{eq:SRDO} can be tackled by class-probability estimation problems and can be learned by several methods such as the binary cross-entropy loss, the LSIF loss \citep{kanamori2009least}, and the KLIEP loss \citep{sugiyama2009density}. A thorough review of density ratio estimation methods is presented by \citet{menon2016linking}. As a result, SRDO can guarantee statistical independence between variables $\boldX$ if the density ratio is estimated accurately.
\section{Theoretical Analysis of Independence-driven IW Algorithms} \sectionlabel{sect:theory}
In this section, we will show that independence-driven IW algorithms as shown in \algorithmref{alg:stable-learning} can be considered as a process of feature selection according to the coefficients of weighted least squares. The chosen features are the minimal stable variable set in \definitionref{defn:minimal-stable-set}. We first show the identifiability result with perfectly learned weighting functions and infinite samples in \sectionref{section:population-level}. Afterward, we relax the assumptions and study the non-asymptotic properties in \sectionref{section:asymptotic}. These theoretical results, along with \theoremref{thrm:ood-prediction} could prove the effectiveness of independence-driven IW algorithms for the covariate-shift generalization problem (\problemref{prob:covariate}).

\subsection{Population Level Properties} \sectionlabel{section:population-level}
Generally speaking, with infinite samples, for any perfectly learned proper weighting function $w \in \mathcal{W}_{\perp}$ adopted by the algorithms, the coefficient on variables that do not belong to the minimal stable variable set will be zero (\theoremref{thrm:findV}). In addition, there exist proper weighting functions with which the coefficients on the minimal stable variable set would not be zero (\theoremref{thrm:findS}).

\begin{theorem} \theoremlabel{thrm:findV}
    Under \assumptionref{assum:positive}, suppose $X_i \not\in \pmbd(Y)$. Let $w$ be any weighting function in $\mathcal{W}_\perp$. Suppose $\mathbb{E}_{\ptrain(\boldX)}\left[w(\boldX)\lVert \boldX\rVert_2^2\right] < \infty$ and $\mathbb{E}_{\ptrain(\boldX, Y)}\left[w(\boldX)Y^2\right] < \infty$. Then the population level solution $\boldbeta_w$ of weighted least squares under $w$ satisfies $\boldbeta_w(X_i) = 0$.
\end{theorem}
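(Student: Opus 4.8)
The plan is to read the population weighted least squares in \equationref{eq:weight-LS-population} as an ordinary least squares projection under the weighted distribution $\tilde{P}_w$ and to exploit two structural facts that this distribution enjoys. First, because a weighting function in $\mathcal{W}_\perp \subseteq \mathcal{W}$ depends on $\boldX$ only (\definitionref{defn:weighting}), the conditional law of $Y$ is preserved, $\tilde{P}_w(Y\mid\boldX)=\ptrain(Y\mid\boldX)$, so that $\bbE_{\tilde{P}_w}[Y\mid\boldX]=\bbE_{\ptrain}[Y\mid\boldX]$. Writing $\boldS=\pmbd(Y)$ and $\boldV=\boldX\setminus\boldS$ (so $X_i\in\boldV$) and invoking the defining stability identity $\bbE_{\ptrain}[Y\mid\boldS]=\bbE_{\ptrain}[Y\mid\boldX]$ of the minimal stable variable set (\equationref{eq:stable-set}, \theoremref{thrm:unique-stable-set}), this conditional mean equals $g(\boldS):=\bbE_{\ptrain}[Y\mid\boldS]$, a function of $\boldS$ alone. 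Second, $w\in\mathcal{W}_\perp$ makes the coordinates of $\boldX$ mutually independent under $\tilde{P}_w$; in particular each $X_i\in\boldV$ is independent of $\boldS$, hence of every function of $\boldS$.

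Next I would reduce the regression target from $Y$ to $g(\boldS)$. By the tower property under $\tilde{P}_w$, $\bbE_{\tilde{P}_w}[X_j Y]=\bbE_{\tilde{P}_w}\!\bigl[X_j\,\bbE_{\tilde{P}_w}[Y\mid\boldX]\bigr]=\bbE_{\tilde{P}_w}[X_j\,g(\boldS)]$ for every $j$, so the normal equations $\Sigma_w\boldbeta=\bbE_{\tilde{P}_w}[\boldX Y]$ coincide with those of regressing $g(\boldS)$ on $\boldX$ in $L^2(\tilde{P}_w)$. It therefore suffices to show that the best linear predictor of $g(\boldS)$ from $\boldX$ puts zero weight on $X_i$.

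I would then construct an explicit candidate and verify optimality. Solve the intercept-augmented weighted regression of $g(\boldS)$ on $\boldS$ only, obtaining coefficients $\tilde{\boldbeta}_{\boldS}$ and intercept $\alpha$, with residual $r:=g(\boldS)-(\alpha+\tilde{\boldbeta}_{\boldS}^T\boldS)$ satisfying, by construction, $\bbE_{\tilde{P}_w}[r]=0$ and $\bbE_{\tilde{P}_w}[\boldS r]=0$; extend this to all of $\boldX$ by assigning coefficient zero to every coordinate of $\boldV$. To confirm the extended vector solves the full normal equations I only need the $\boldV$-rows: for $X_i\in\boldV$, since $r$ is a function of $\boldS$ and $X_i\perp\boldS$ under $\tilde{P}_w$, independence gives $\bbE_{\tilde{P}_w}[X_i r]=\bbE_{\tilde{P}_w}[X_i]\,\bbE_{\tilde{P}_w}[r]=0$. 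Thus the candidate satisfies every normal equation, and because $\Sigma_w$ is invertible (as \equationref{eq:weight-LS-population} presupposes, so that $\boldbeta_w$ is well defined) the minimizer is unique; hence $\boldbeta_w$ equals the candidate and $\boldbeta_w(X_i)=0$.

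The delicate point, and the step I expect to be the main obstacle, is the role of the intercept: the identity $\bbE_{\tilde{P}_w}[X_i r]=\bbE_{\tilde{P}_w}[X_i]\,\bbE_{\tilde{P}_w}[r]$ collapses to zero only because $\bbE_{\tilde{P}_w}[r]=0$, which requires a constant term in the design (equivalently a coordinate of $\boldX$ fixed to $1$); without it the coefficient on $X_i$ would in general be the nonzero quantity $\bbE_{\tilde{P}_w}[X_i]\,\bbE_{\tilde{P}_w}[r]$. I would therefore make explicit that an intercept is present, which is consistent with $\mathcal{W}_\perp$ since a constant feature is trivially independent of all others, and separately confirm invertibility of $\Sigma_w$ so that ``the'' coefficient $\boldbeta_w(X_i)$ is meaningful. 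The integrability hypotheses $\bbE_{\ptrain(\boldX)}[w(\boldX)\lVert\boldX\rVert_2^2]<\infty$ and $\bbE_{\ptrain(\boldX,Y)}[w(\boldX)Y^2]<\infty$ are used only to guarantee that all the second moments manipulated above (in $\Sigma_w$, in $\bbE_{\tilde{P}_w}[\boldX Y]$, and in the cross terms with $g(\boldS)$ and $r$) are finite, so that the projection argument is legitimate.
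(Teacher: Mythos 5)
Your proposal is correct, and it rests on the same two pillars as the paper's own proof: the invariance of the conditional law under reweighting, $\tilde{P}_w(Y\mid\boldX)=\ptrain(Y\mid\boldX)$ (the paper's \lemmaref{lemma:invariance}), and the mutual independence of the coordinates of $\boldX$ under $\tilde{P}_w$, which makes $X_i$ independent of the function $g(\boldS)=\bbE_{\ptrain}[Y\mid\boldS]$, $\boldS=\pmbd(Y)$, to which the conditional mean reduces. Where you diverge is the final step, and the divergence matters. The paper computes $\cov_{\tilde{P}_w}(X_i,Y)=0$ and then concludes in one line that $\boldbeta_w(X_i)=\var_{\tilde{P}_w}(X_i)^{-1}\cov_{\tilde{P}_w}(X_i,Y)=0$; you instead exhibit an explicit candidate (the $\boldS$-only regression extended by zeros on $\boldV$), verify every row of the multivariate normal equations, and invoke uniqueness via invertibility of $\Sigma_w$. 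Your route is longer but more careful, and the ``delicate point'' you isolate is real: equating the $i$-th coefficient of a multivariate WLS with the univariate ratio $\cov/\var$ is, even under mutual independence, valid only when the design is centered or contains an intercept, whereas the paper's definition $\boldbeta_w=\Sigma_w^{-1}\bbE[w(\boldX)\boldX Y]$ with $\Sigma_w=\bbE[w(\boldX)\boldX\boldX^T]$ is an uncentered, intercept-free regression. Without an intercept the statement can genuinely fail: if $g$ is nonlinear, the residual of the intercept-free $\boldS$-only regression need not have mean zero, and a two-variable example (say $Y=S^2$ with $S$ and $X_i$ independent under $\tilde{P}_w$ and both having nonzero mean) produces a unique WLS solution whose coefficient on $X_i$ is nonzero. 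So your verification-plus-uniqueness argument, with the intercept (equivalently, centering under $\tilde{P}_w$) made explicit, is the version that actually closes the proof; the paper's proof tacitly assumes the same normalization without stating it, and your write-up has the merit of exposing that hidden hypothesis.
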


\begin{theorem} \theoremlabel{thrm:findS}
    Under \assumptionref{assum:positive}, suppose $X_i \in \pmbd(Y)$. Then there exists $w \in \mathcal{W}_\perp$ and constant $\alpha \ne 0$, such that the population level solution $\boldbeta_w$ satisfies $\boldbeta_w(X_i) = \alpha$.
\end{theorem}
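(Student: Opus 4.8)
The plan is to argue by contradiction, combining the minimality of $\pmbd(Y)$ with the richness of the class $\mathcal{W}_\perp$. Write $\boldS = \pmbd(Y)$, $\boldS_{-i} = \boldS \setminus \{X_i\}$, and $h(\boldS) = \mathbb{E}_{\ptrain}[Y \mid \boldS]$. Since any $w \in \mathcal{W}$ depends on $\boldX$ only, reweighting leaves the conditional law of $Y$ given $\boldX$ untouched, so $\mathbb{E}_{\tilde{P}_w}[Y \mid \boldX] = \mathbb{E}_{\ptrain}[Y\mid\boldX] = h(\boldS)$, where the last equality holds because $\boldS$ is a stable variable set. Hence the population weighted least squares only ever sees the fixed target $h(\boldS)$, fitted under the reweighted law of $\boldX$. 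For $w \in \mathcal{W}_\perp$ the features are mutually independent under $\tilde{P}_w$, so by the same diagonalization of the weighted (centered) second-moment matrix that underlies \theoremref{thrm:findV}, the coefficient decouples coordinatewise into
\[
\boldbeta_w(X_i) = \frac{\cov_{\tilde{P}_w}(X_i, h(\boldS))}{\var_{\tilde{P}_w}(X_i)},
\]
using that the regression residual $Y - h(\boldS)$ is uncorrelated with every function of $\boldX$ under $\tilde{P}_w$. Thus it suffices to exhibit a single $w \in \mathcal{W}_\perp$ for which $\cov_{\tilde{P}_w}(X_i, h(\boldS)) \ne 0$, and then set $\alpha = \boldbeta_w(X_i)$.

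Next I would parametrize $\mathcal{W}_\perp$ concretely. Under \assumptionref{assum:positive}, any product density $q(\boldX) = \prod_{j} q_j(X_j)$ with each $q_j$ a strictly positive density on $\mathcal{X}_j$ yields an admissible weighting function $w(\boldX) = q(\boldX)/\ptrain(\boldX) \in \mathcal{W}_\perp$, since $w > 0$ and $\mathbb{E}_{\ptrain}[w] = \int q = 1$ automatically. Because $h$ depends on $\boldS$ only, the covariance of interest involves only the marginals $\{q_j\}_{j \in \boldS}$. Suppose, for contradiction, that $\cov_q(X_i, h(\boldS)) = 0$ for every such product law $q$.

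Then I would first fix the marginals on $\boldS_{-i}$ and vary $q_i$ alone. Conditioning on $\boldS_{-i}$, which is independent of $X_i$ under $q$, gives $\cov_q(X_i, h(\boldS)) = \cov_{q_i}(X_i, \bar h(X_i))$ with $\bar h(x_i) = \mathbb{E}_{q_{-i}}[h(x_i, \boldS_{-i})]$. Testing this against a family of densities approximating the two-point measure $\tfrac12(\delta_{x} + \delta_{x'})$ drives the covariance to $\tfrac14 (x - x')(\bar h(x) - \bar h(x'))$, so its vanishing for all $q_i$ forces $\bar h$ to be constant in $x_i$; repeating the same perturbation across the remaining marginals $q_{-i}$ forces $h(x_i, \bolds_{-i}) = h(x_i', \bolds_{-i})$ at every point of the (full, by \assumptionref{assum:positive}) support. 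Consequently $h$ does not depend on $X_i$, whence $\mathbb{E}_{\ptrain}[Y \mid \boldS_{-i}] = \mathbb{E}_{\ptrain}[h(\boldS)\mid \boldS_{-i}] = h(\boldS) = \mathbb{E}_{\ptrain}[Y\mid\boldX]$. This makes $\boldS_{-i} \subsetneq \boldS$ a stable variable set, contradicting the minimality of $\pmbd(Y)$ (\definitionref{defn:minimal-stable-set}, together with uniqueness from \theoremref{thrm:unique-stable-set}). Hence some $w \in \mathcal{W}_\perp$ gives a nonzero covariance and the desired $\alpha \ne 0$.

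I expect the main obstacle to be the perturbation step, since point masses are not admissible weights: I must realize the two-point test measures as limits of strictly positive densities (e.g.\ narrow bumps around $x$ and $x'$ plus a vanishing full-support remainder) and pass to the limit, which needs mild regularity of $h$ or a Lebesgue-point argument to handle a general measurable $h$. A secondary bookkeeping point is verifying the moment conditions $\mathbb{E}_{\ptrain}[w(\boldX)\lVert\boldX\rVert_2^2] < \infty$ and $\mathbb{E}_{\ptrain}[w(\boldX)Y^2] < \infty$ required for the weighted least squares to be well defined; these can be secured by choosing the tilting marginals $q_j$ to be light-tailed (e.g.\ compactly supported) so that $\tilde{P}_w$ has all the needed moments.
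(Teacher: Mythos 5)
Your proposal is correct and is essentially the paper's own argument run in contrapositive form: both proofs reduce the problem to exhibiting a product-form weighted distribution under which $X_i$ has nonzero covariance with $\mathbb{E}[Y\mid\boldX]$ (so that $\boldbeta_w(X_i)$, written as a covariance-over-variance ratio, is nonzero), and both ultimately rest on the minimality of $\pmbd(Y)$ combined with two-point-style choices of marginals that convert functional dependence on $X_i$ into linear correlation. The only real difference is presentational: the paper argues directly, asserting the existence of a product distribution $\tilde{P}_{-i}$ for which $g(X_i) = \mathbb{E}_{\tilde{P}_{-i}}\left[\mathbb{E}[Y\mid\boldX]\right]$ still depends on $X_i$ and of a marginal $\tilde{P}_i$ making $g$ linearly correlated with $X_i$, whereas you obtain the same two facts by contradiction and explicitly flag the bump-approximation and Lebesgue-point regularity details that the paper's existence assertions leave implicit.
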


\begin{remark}
    In very rare cases, independence-driven IW algorithms may fail to identify the minimal stable variable set if $X_i$ is not independent of $Y$ but is linearly decorrelated with $Y$ in the weighted distribution $\tilde{P}_w$.
\end{remark}

These two theorems, along with \theoremref{thrm:ood-prediction}, prove the effectiveness of independence-driven IW algorithms for the covariate-shift generalization problem (\problemref{prob:covariate}). In detail, under ideal conditions, \textit{i.e.}, perfectly learned sample weights and infinite samples, independence-driven IW algorithms could find the minimal stable variable set of $Y$, which is the minimal and optimal predictor under the test distribution $\ptest$ according to \theoremref{thrm:ood-prediction}.

\subsection{Non-asymptotic Properties} \sectionlabel{section:asymptotic}
We further analyze the non-asymptotic properties of independence-driven IW algorithms in this subsection. Given a weighting function $w \in \calW$, let
\begin{equation}
    \left\{
    \begin{aligned}
        \appr(\boldX) & \triangleq \bbE[Y | \boldX] - \langle\boldbeta_w, \boldX\rangle, \\
        \noise(\boldX) & \triangleq Y - \bbE[Y | \boldX].
    \end{aligned}
    \right.
\end{equation}

Here $\appr(\boldX)$ denotes the model misspecification term \textit{w.r.t.} linear models and $\noise(\boldX)$ represents the noise term of $Y$. For a non-trivial non-asymptotic property of the independence-driven IW algorithms, similar to \citet{zhang2005learning,hsu2014random}, we first make the assumptions about the data-generating process between $\boldX$ and $Y$.

\begin{assumption} [Bounded covariate] \assumptionlabel{assum:bound-covariate}
    There exists a finite constant $\boundx > 0$ such that, in the training distribution $\ptrain$, almost surely, $\|\boldX\|_2 \le \boundx$.
\end{assumption}

\begin{assumption} [Bounded approximation error] \assumptionlabel{assum:bound-appr}
    There exists a finite constant $\boundappr > 0$ such that, in the training distribution $\ptrain$, almost surely, $|\appr(\boldX)| \le \boundappr$.
\end{assumption}

\begin{assumption} [Sub-gaussian noise] \assumptionlabel{assum:noise}
    There exists a finite constant $\sigma \ge 0$ such that, in the training distribution $\ptrain$, almost surely, $\forall \eta \in \bbR$, $\bbE\left[\left. \exp\left(\eta \cdot \noise(\boldX)\right) \right| \boldX\right] \le \exp\left(\eta^2\sigma^2/2\right)$.
\end{assumption}

Furthermore, we assume that the chosen weighting function is non-degenerate.

\begin{assumption} [Non-degenerate weighting function] \assumptionlabel{assum:target-weighting}
    The minimal eigenvalue of $\Sigma_w$ is greater than $0$, \textit{i.e.}, $\lambda_{\min}\left(\Sigma_w\right) \triangleq \boundlambda > 0$.
\end{assumption}

In practice, we can not obtain the true weighting function $w$ and we need to estimate it from finite samples. The estimated weighting function is denoted as $\hat{w}$. We further provide assumptions about it.

\begin{assumption} [Small estimation error of the weighting function] \assumptionlabel{assum:estimate-weighting}
    The estimation error of the estimated weighting function $\hat{w}$ is small. In detail, $\bbE\left[\left(w(\boldX) - \hat{w}(\boldX)\right)^2\right] \triangleq \epsilon^2 < \boundlambda^2 /\bbE[\|\boldX\|_2^4]$.
\end{assumption}
\begin{assumption} [Bounded estimated weighting function] \assumptionlabel{assum:bound-weight}
    There exists a finite constant $\boundweight > 0$ such that, in the training distribution $\ptrain$, almost surely, $\hat{w}(\boldX) < \boundweight$.
\end{assumption}

\begin{remark}
    To ensure a small $\epsilon^2$ in \assumptionref{assum:estimate-weighting}, we can adopt LSIF \citep{kanamori2009least} to optimize $\mathbb{E}_{\ptrain(\boldX)}[(w(\boldX) - \hat{w}(\boldX))^2]$ directly. If we know a weighted distribution $Q$ and want to learn a weighting function $w(\boldX) = Q(\boldX)/\ptrain(\boldX)$. According to \citet{menon2016linking}, the loss of LSIF is $ L(w) = \mathbb{E}_{Q(\boldX)}[-w(\boldX)] + \mathbb{E}_{\ptrain(\boldX)}\left[w(\boldX)^2/2\right]$.
    It is easy to see that $w^*(\boldX) = \min_w L(w) = Q(\boldX) / \ptrain(\boldX)$ and $L(w) - L(w^*) = \mathbb{E}_{\ptrain(\boldX)}\left[(w^*(\boldX) - w(\boldX))^2\right] / 2$. As a result, minimizing the loss of LSIF will meet the assumption which requires that $\mathbb{E}_{\ptrain(\boldX)}[(w(\boldX) - \hat{w}(\boldX))^2] = \epsilon^2$ be small enough.
\end{remark}

\begin{remark}
    The difference between SRDO and DWR lies in \assumptionref{assum:estimate-weighting} due to the way of learning sample weights. Specifically, it is harder for DWR to satisfy \assumptionref{assum:estimate-weighting} because DWR focuses more on the linear correlation. As a result, its performance may drop when $Y$ has a complex non-linear relationship with $\mathbf{X}$, which is further validated by our experiments as shown in the fourth point in \sectionref{sect:exp-result}.    
\end{remark}

With the assumptions, we can provide the non-asymptotic property of independence-driven IW algorithms.

\begin{theorem} \theoremlabel{thrm:big-asumptotic}
    Let $w \in \calW$ be a weighting function. Suppose \assumptionsref{assum:positive}, \ref{assum:assum:bound-covariate}-\ref{assum:assum:bound-weight} (with parameters $\boundx$, $\boundappr$, $\sigma$, $\boundlambda$, $\epsilon$, $\boundweight$) hold. Pick any $t > \max\{0, 2.6 - \log d\}$, let
    \begin{equation}
        n \ge \frac{6\boundweight \boundx^2 (\log d + t)}{\boundlambda - \epsilon \sqrt{\mathbb{E}\left[\|\boldX\|_2^4\right]}}.
    \end{equation}
    Then with probability at least $1 - 3e^{-t}$,
    \begin{small}
        \begin{equation} \equationlabel{eq:big-bound}
            \begin{aligned}
                & \left\|\hat{\boldbeta}_{\hat{w}} - \boldbeta_{w}\right\|_2^2 \\
                \le & \underbrace{\frac{4\boundweight\sigma^2(d + 2\sqrt{td} + 2t)}{n\left(\boundlambda - \epsilon \sqrt{\mathbb{E}\left[\|\boldX\|_2^4\right]}\right)} + \frac{8\boundweight\boundx^2\boundappr^2(1+\epsilon)\left(1+\sqrt{8t}\right)^2}{n\left(\boundlambda - \epsilon \sqrt{\mathbb{E}\left[\|\boldX\|_2^4\right]}\right)^2}}_{\text{error caused by WLS from finite samples}} \\
                + & \underbrace{\frac{4\epsilon^2M_w}{\left(\boundlambda - \epsilon \ \sqrt{\mathbb{E}\left[\|\boldX\|_2^4\right]}\right)^2}}_{\text{error caused by imperfectly learned weights}} + \, o(1 / n).
            \end{aligned}
        \end{equation}
    \end{small}
    Here $M_w = \|\Sigma_w\|_2^2\|\boldbeta_w\|_2^2 \left(\frac{\mathbb{E}\left[\|\boldX\|_2^4\right]}{\|\Sigma_w\|_2^2}+\frac{\mathbb{E}\left[\|\boldX Y\|_2^2\right]}{\|\bbE[w(\boldX)\boldX Y]\|_2^2}\right)$ is a constant when $w$ is fixed. In particular, if $w \in \calW_{\perp}$, then
    \begin{equation}
        \left\|\hat{\boldbeta}_{\hat{w}} - \boldbeta_{w}\right\|_2^2 = \left\|\hat{\boldbeta}_{\hat{w}}(\boldV)\right\|_2^2 + \left\|\hat{\boldbeta}_{\hat{w}}(\boldS) - \boldbeta_{w}(\boldS)\right\|_2^2,
    \end{equation}
    where $\hat{\boldbeta}_{\hat{w}}(\boldV)$ represents the coefficients of $\hat{\boldbeta}_{\hat{w}}$ on $\boldX \backslash \pmbd(Y)$ and $\hat{\boldbeta}_{\hat{w}}(\boldS)$, $\boldbeta_{\hat{w}}(\boldS)$ represent the coefficients of $\hat{\boldbeta}_{\hat{w}}$, $\boldbeta_w$ on $\pmbd(Y)$. As a result, $\left\|\hat{\boldbeta}_{\hat{w}}(\boldV)\right\|_2^2$ and $\left\|\hat{\boldbeta}_{\hat{w}}(\boldS) - \boldbeta_{w}(\boldS)\right\|_2^2$ are also bounded by the RHS of \equationref{eq:big-bound}.
\end{theorem}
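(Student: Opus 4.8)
The plan is to split the total error by inserting the population weighted least squares solution $\boldbeta_{\hat{w}}$ computed under the \emph{estimated} weight $\hat{w}$ (but at the population level), writing
\[
\hat{\boldbeta}_{\hat{w}} - \boldbeta_w = \underbrace{(\hat{\boldbeta}_{\hat{w}} - \boldbeta_{\hat{w}})}_{\text{finite-sample WLS error}} + \underbrace{(\boldbeta_{\hat{w}} - \boldbeta_w)}_{\text{weight-estimation error}},
\]
so that the two groups underbraced in \equationref{eq:big-bound} are controlled by disjoint mechanisms, and the final $\|\cdot\|_2^2$ is obtained after an expansion step whose cross terms are shown to be $o(1/n)$.

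First I would dispatch the weight-estimation error, a deterministic population perturbation bound. Using $\boldbeta_{\hat{w}} = \Sigma_{\hat{w}}^{-1}\bbE[\hat{w}(\boldX)\boldX Y]$ and the normal equation $\bbE[w(\boldX)\boldX(Y - \langle\boldbeta_w,\boldX\rangle)] = 0$, I would rewrite
\[
\boldbeta_{\hat{w}} - \boldbeta_w = \Sigma_{\hat{w}}^{-1}\Bigl(\bbE[(\hat{w} - w)\boldX Y] - (\Sigma_{\hat{w}} - \Sigma_w)\boldbeta_w\Bigr).
\]
Two ingredients finish this: (i) a lower bound $\lambda_{\min}(\Sigma_{\hat{w}}) \ge \boundlambda - \epsilon\sqrt{\bbE[\|\boldX\|_2^4]} > 0$ from Weyl's inequality together with $\|\Sigma_{\hat{w}} - \Sigma_w\| \le \epsilon\sqrt{\bbE[\|\boldX\|_2^4]}$ (Cauchy--Schwarz and \assumptionref{assum:estimate-weighting}, which guarantees positivity); and (ii) the Cauchy--Schwarz bounds $\|\bbE[(\hat{w} - w)\boldX Y]\| \le \epsilon\sqrt{\bbE[\|\boldX Y\|_2^2]}$ and $\|(\Sigma_{\hat{w}}-\Sigma_w)\boldbeta_w\| \le \epsilon\sqrt{\bbE[\|\boldX\|_2^4]}\,\|\boldbeta_w\|$. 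Squaring and collecting the $\epsilon^2$ factor matches the $4\epsilon^2 M_w / (\boundlambda - \epsilon\sqrt{\bbE[\|\boldX\|_2^4]})^2$ term, since $M_w$ is assembled precisely from $\bbE[\|\boldX\|_2^4]\|\boldbeta_w\|^2$ and $\bbE[\|\boldX Y\|_2^2]$ (noting $\bbE[w(\boldX)\boldX Y] = \Sigma_w\boldbeta_w$).

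The heart of the argument is the finite-sample WLS error $\|\hat{\boldbeta}_{\hat{w}} - \boldbeta_{\hat{w}}\|_2^2$, which I would treat by a random-design analysis in the spirit of \citet{hsu2014random}. Writing $\hat{\Sigma}_{\hat{w}}(\hat{\boldbeta}_{\hat{w}} - \boldbeta_{\hat{w}}) = \hat{\bbE}[\hat{w}(\boldX)\boldX(\noise(\boldX) + \widetilde{r}(\boldX))]$, where $\widetilde{r} = \bbE[Y|\boldX] - \langle\boldbeta_{\hat{w}},\boldX\rangle$ is the approximation residual relative to $\boldbeta_{\hat{w}}$, I would (a) use a matrix Chernoff bound on the bounded PSD summands $\hat{w}(\boldX)\boldX\boldX^T$ (operator norm at most $\boundweight\boundx^2$ by \assumptionref{assum:bound-covariate} and \assumptionref{assum:bound-weight}) to transfer the eigenvalue bound $\boundlambda - \epsilon\sqrt{\bbE[\|\boldX\|_2^4]}$ from $\Sigma_{\hat{w}}$ to $\hat{\Sigma}_{\hat{w}}$ once $n$ meets the stated threshold; (b) control the conditionally mean-zero noise contribution via the sub-Gaussian \assumptionref{assum:noise}, producing the chi-square-type factor $d + 2\sqrt{td} + 2t$ and hence the first (noise) summand, which carries a single power of the eigenvalue gap; and (c) control the bounded approximation contribution by a Bernstein/Hoeffding tail, producing the $(1+\sqrt{8t})^2$ factor and the second summand, which carries two powers of the gap. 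The extra $(1+\epsilon)$ in that summand absorbs the inflation of $|\widetilde{r}|$ over $\boundappr$ caused by replacing $\boldbeta_w$ with $\boldbeta_{\hat{w}}$, bounded by $\boundx\|\boldbeta_{\hat{w}}-\boldbeta_w\| = O(\epsilon)$. A union bound over the three failure events (eigenvalue, noise, approximation) yields the $1 - 3e^{-t}$ probability.

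The main obstacle I anticipate is steps (b)--(c): extracting the \emph{sharp} dimensional dependence $d + 2\sqrt{td} + 2t$ requires a self-normalized quadratic-form tail bound for the score $\hat{\Sigma}_{\hat{w}}^{-1/2}\hat{\bbE}[\hat{w}(\boldX)\boldX\noise(\boldX)]$ rather than a crude operator-norm estimate, and one must keep the two inverse-covariance factors straight (one power of the gap for the variance term, two for the approximation term). The final assembly is then routine: expand $\|\hat{\boldbeta}_{\hat{w}} - \boldbeta_w\|_2^2$, bound the cross term between the $O_p(1/\sqrt{n})$ finite-sample fluctuation and the $O(\epsilon)$ population gap, and fold the genuinely higher-order remainder into $o(1/n)$. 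For the $w\in\calW_\perp$ addendum, \theoremref{thrm:findV} gives $\boldbeta_w = \boldsymbol{0}$ on $\boldV = \boldX\setminus\pmbd(Y)$, so the coordinate split $\|\hat{\boldbeta}_{\hat{w}}-\boldbeta_w\|_2^2 = \|\hat{\boldbeta}_{\hat{w}}(\boldV)\|_2^2 + \|\hat{\boldbeta}_{\hat{w}}(\boldS)-\boldbeta_w(\boldS)\|_2^2$ is immediate and each piece inherits the same bound.
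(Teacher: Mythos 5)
Your proposal follows essentially the same architecture as the paper's proof: the same split $\hat{\boldbeta}_{\hat{w}} - \boldbeta_w = (\hat{\boldbeta}_{\hat{w}} - \boldbeta_{\hat{w}}) + (\boldbeta_{\hat{w}} - \boldbeta_w)$, the same Weyl-plus-Cauchy--Schwarz control of $\lambda_{\min}(\Sigma_{\hat{w}})$ (\propositionref{prop:spectral-lower-bound}), and the same three-event random-design analysis in the spirit of \citet{hsu2014random} for the finite-sample term (\propositionref{prop:condition-satisfied} reduces it to \theoremref{thrm:wls}). The differences are cosmetic: for the weight-estimation error the paper routes the perturbation through the linear-system lemma of \citet{chandrasekaran1995sensitivity} (\lemmaref{lemma:linear-systems}, \propositionref{prop:error-weightes}) while you do the algebra directly, which is valid and in fact slightly tighter since $\|\bbE[w(\boldX)\boldX Y]\|_2 \le \|\Sigma_w\|_2\|\boldbeta_w\|_2$; you invoke matrix Chernoff where the paper uses matrix Bernstein; and your explicit residual $\widetilde{r}$ relative to $\boldbeta_{\hat{w}}$ is, if anything, more careful than the paper, which reuses $\appr$ (defined relative to $\boldbeta_w$) when applying \theoremref{thrm:wls} to $\hat{w}$ and obtains the $(1+\epsilon)$ factor from $\bbE[\hat{w}(\boldX)] \le 1+\epsilon$ rather than from residual inflation; this changes constants by $O(\epsilon)$ but not the structure.

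There is, however, one step that would fail as written: the claim that the cross terms in the expansion of $\left\|\hat{\boldbeta}_{\hat{w}} - \boldbeta_w\right\|_2^2$ are $o(1/n)$. The cross term is $2\langle\hat{\boldbeta}_{\hat{w}} - \boldbeta_{\hat{w}},\, \boldbeta_{\hat{w}} - \boldbeta_w\rangle$, which is of order $\epsilon/\sqrt{n}$, and $\epsilon$ is a \emph{fixed} constant in this theorem (\assumptionref{assum:estimate-weighting} only requires $\epsilon^2 < \boundlambda^2/\bbE[\|\boldX\|_2^4]$; it does not shrink with $n$), so the cross term dominates $1/n$ and cannot be folded into the remainder. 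The paper avoids expanding at all: it applies $\|a+b\|_2^2 \le 2\|a\|_2^2 + 2\|b\|_2^2$ at the outset, and this doubling is exactly why the constants in \equationref{eq:big-bound} read $4$, $8$, and $4\epsilon^2 M_w$ rather than $2$, $4$, and $2\epsilon^2 M_w$. With that one-line replacement of your expansion step, your argument goes through and recovers the stated bound, including the $\calW_\perp$ addendum, which you handle exactly as the paper does via \theoremref{thrm:findV}.
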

\begin{remark}
    \equationref{eq:big-bound} applies for any weighting function $w \in \calW$ that satisfies the listed assumptions. Excluding the high-order term of $o(1/n)$, the RHS of \equationref{eq:big-bound} consists of two parts. The first part is caused by WLS from finite samples and it vanishes when $n \rightarrow \infty$. The second part is caused by the error between the estimated function $\hat{w}$ and the true weighting function $w$ and it also vanishes when $\epsilon \rightarrow 0$.
    
    In particular, let $w \in \calW_{\perp}$ be a weighting function adopted by independence-driven IW algorithms. According to \theoremsref{thrm:findV} and \ref{thm:thrm:findS}, the coefficients on $\boldX \backslash \pmbd(Y)$ (\textit{i.e.}, $\|\hat{\boldbeta}_{\hat{w}}(\boldV)\|_2^2$) and the error of coefficients on $\pmbd(Y)$ (\textit{i.e.}, $\|\hat{\boldbeta}_{\hat{w}}(\boldS) - \boldbeta_{w}(\boldS)\|_2^2$) will be bounded by the RHS of \equationref{eq:big-bound} and become zero when $n \rightarrow \infty$ and $\epsilon \rightarrow 0$. This property guarantees that we could eliminate $\boldX \backslash \pmbd(Y)$ and find $\pmbd(Y)$ with finite samples and imperfectly learned sample weights.
\end{remark}

\section{Synthetic Experiments}
We run various experiments on synthetic data to verify the effectiveness of independence-driven IW algorithms in discovering the minimal stable variable set in covariate-shift generalization problems. We consider the following data-generating process similar to \citet{kuang2020stable}.
\begin{figure*}[t]
    \centering
    \includegraphics[width=0.95\linewidth]{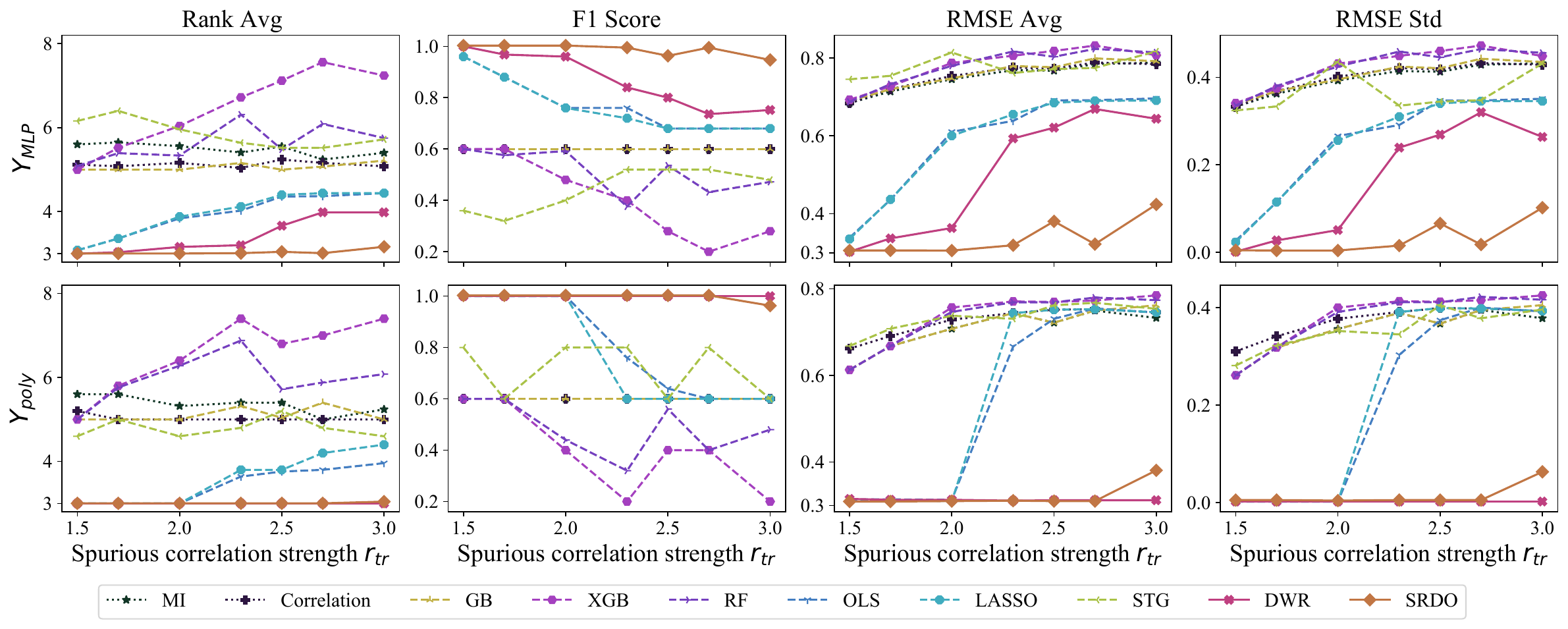}
    \vspace{-10px}
    \caption{Experimental results on synthetic data (MLP non-linear function $Y_{\text{MLP}}$ and polynomial function $Y_{\text{poly}}$ from top to bottom). Varying the spurious correlation strength $r_{\text{tr}}$, we compare independence-driven IW algorithms (DWR and SRDO shown in solid lines) with several baselines (shown in dashed lines) on both feature selection (Rank average and F1 score) and covariate-shift generalization (RMSE average and standard deviation) metrics. Independence-driven IW algorithms outperform other methods in the synthetic experiments.}
    \figurelabel{fig:experiment}
    \vspace{-10px}
\end{figure*}

\begin{figure} [th]
    \centering
    \includegraphics[width=\linewidth]{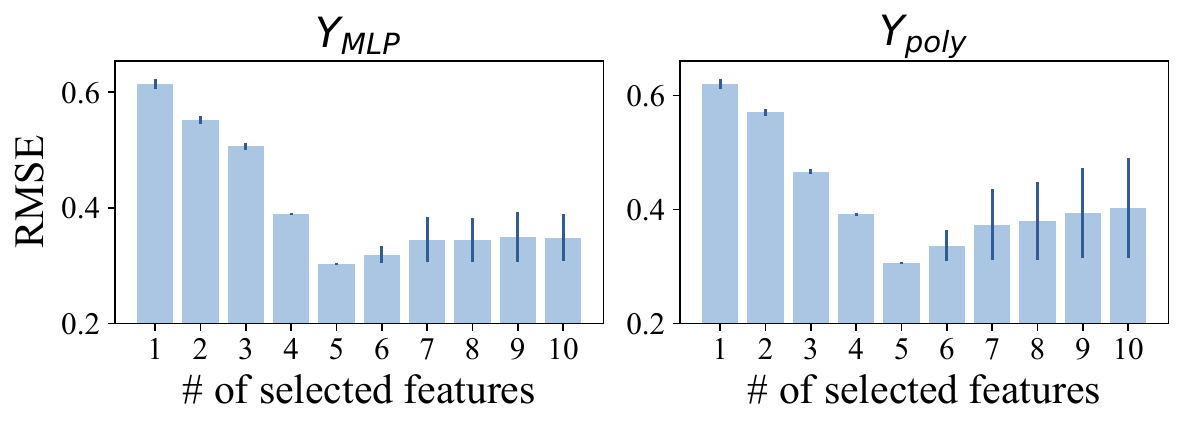}
    \vspace{-25px}
    \caption{The covariate-shift generalization metrics (RMSE average and standard deviation) \textit{w.r.t.} the number of selected features. Fix $r_{\text{tr}}=2.5$ here and the feature ranking lists are provided by SRDO. The minimal stable variable set (5 features) achieves the optimal performance.}
    \figurelabel{fig:ood-S-2.5}
    \vspace{-10px}
\end{figure}

\subsection{Data-generating Process}
\paragraph{Data} 
Let $\boldX=(\boldS, \boldV)$ and the dimension of $\boldX$ is fixed to $d = 10$. 
In our experiments, the dimensions of $S$ and $V$ are specified as $d_s = d_v = 0.5 \cdot d = 5$. 
Covariate $\boldX$ is generated by the following process.
\begin{equation}
    \begin{aligned}
        & Z_1, Z_2, \dots, Z_{d + 1} \sim N(0, 1),  V_1, V_2, \dots, V_{d_v} \sim N(0,1), \\
        & S_{i} = 0.8Z_i + 0.2Z_{i+1}, i = 1, 2, \dots, d_s. \\
    \end{aligned}
\end{equation}
We further clip features $\boldX$ into $[-2, 2]$ by letting $\boldX = \max(\min(\boldX, 2), -2)$. 
The outcome Y is generated through $Y = f(\boldS) + \epsilon$, in which $f(\boldS)$ may contain both linear and non-linear transformations.
To test the performance with different forms of non-linear terms in $f(\boldS)$, we generate the outcome Y from an MLP non-linear function ($Y_{\text{MLP}}$) and a polynomial one ($Y_{\text{poly}}$), respectively:
\begin{equation}
    \left\{
    \begin{aligned}
        Y_{\text{MLP}} & = f(\boldS) + \epsilon = \boldbeta^T\boldS + \text{MLP}([S_1, S_2, S_3]; \theta) + \epsilon, \\
        Y_{\text{poly}} & = f(\boldS) + \epsilon = \boldbeta^T\boldS + S_1S_2S_3/4 + \epsilon.
    \end{aligned}
    \right.
\end{equation}
Here $\boldbeta=\{1/3, -2/3, 1, -1/3, 2/3\}$, $\epsilon \sim N(0, 0.3^2)$, and $\text{MLP}([S_1, S_2, S_3]; \theta)$ represents the transformation of MLP with two hidden layers (sizes $3$ and $3$, respectively) parametrized by randomly generated $\theta \sim U(-1, 1)$.

\paragraph{Generating various environments} We generate various environments by constructing spurious correlations between $Y$ and $V_4$, $V_5$. Specifically, we fix a bias rate $r \in \bbR$ $(|r| > 1)$ in each generated environment. For each sample $\boldx^{(i)} = (\bolds^{(i)}, \boldv^{(i)})$, we select it to the corresponding environment with probability $\pr\left(\text{select}|\boldx^{(i)}; r\right) = \prod_{j=4}^5|r|^{-10D_j^{(i)}}$, where $D_j^{(i)} = \left|f(\boldS) - \text{sgn}(r) * \boldv^{(i)}_j\right|$ and $\text{sgn}(r)$ is the indicator function on whether $r > 0$. Intuitively, $r$ controls the strength and direction of spurious correlations. Specifically, $r>1$ corresponds to the positive spurious correlation between $Y$ and $\boldV$ and $r<-1$ corresponds to the negative spurious correlation. In addition, the higher $|r|$ is, the stronger correlation between $Y$ and $\boldV$ becomes.

Here $P(Y | \boldX)$ is 
obviously invariant across different environments and the data-generating process satisfies the covariate-shift condition. Moreover, the minimal stable variable set $\pmbd(Y)$ is $\boldS$ in each environment.

\vspace{-10px}

\paragraph{Experimental setting} In the $Y_{\text{MLP}}$ setting, we randomly generate $5$ different MLPs and report the results averaged over the $5$ MLPs. We train feature selection models on one training dataset with a specific bias rate $r_{\text{tr}}$ and $n=10,000$ samples. We then choose the top 5 features selected by each model and further train an MLP regressor on them. The regressor is then evaluated on $10$ test environments with corresponding $r_{\text{te}} = -3.0$, $-2.5$, $-2.0$, $-1.5$, $-1.3$, $1.3$, $1.5$, $2.0$, $2.5$, $3.0$. To test the effect of spurious correlation strength on feature selection models, we vary $r_{\text{tr}} = 1.5$, $1.7$, $2.0$, $2.3$, $2.5$, $2.7$, $3.0$.
    
\subsection{Baselines and Evaluation Metrics}
\paragraph{Baselines}
We compare independence-driven IW algorithms (including \textbf{DWR}~\citep{kuang2020stable} and \textbf{SRDO}~\citep{shen2020stable}) with filter methods (including mutual information based (\textbf{MI}) and correlation based (\textbf{Correlation}) methods), wrapper methods (including gradient boosting (\textbf{GB})~\citep{friedman2001greedy}, XGBoost (\textbf{XGB})~\citep{chen2016xgboost}, and random forests (\textbf{RF})~\citep{diaz2006gene}), and embedded methods (including \textbf{OLS}, \textbf{LASSO}~\citep{tibshirani1996regression}, and \textbf{STG}~\citep{yamada2020feature}). More details on baseline implementations can be found in \appendixref{sect:experimental-detail}.

\vspace{-10px}

\paragraph{Evaluation metrics} On the one hand, to test the performances on feature selection, we report the rank average and F1 score of selected features. 
To compute the rank average, we utilize the scores that each model assigns to the features and then rank all features according to the scores. 
The rank average is calculated as the mean of the ranks of the minimal stable variable set $\pmbd(Y) = \boldS$. The F1 score is defined as the harmonic mean of the precision and recall, where precision and recall are computed by comparing the selected features to the true features, \textit{i.e.}, the minimal stable variable set. On the other hand, to test the performances on covariate-shift generalization, we calculate the root mean squared error (RMSE) in each test environment and report the mean and standard deviation of RMSE in various test environments.

\subsection{Experimental Results and Analysis} \sectionlabel{sect:exp-result}
The results are shown in \figureref{fig:experiment} and \figureref{fig:ood-S-2.5} and we have the following observations.
\begin{enumerate}[wide,noitemsep,topsep=0pt,parsep=0pt,partopsep=0pt]
    \item We first validate the optimality property of the minimal stable variable set on covariate-shift generalization proposed in \sectionref{sect:local-causal}.
    With fixed $r_{\text{tr}}=2.5$ and predicted feature ranking by SRDO, we vary the number of top selected features and train an MLP on them. Afterward, we test the performances of the MLP on test distributions and show the results in \figureref{fig:ood-S-2.5}. The results demonstrate that the minimal stable variable set (5 features) achieves the optimal performance under covariate-shift generalization. The figures with different $r_{\text{tr}}$ and more experimental details are provided in \appendixref{sect:experimental-detail}.
    \item Independence-driven IW algorithms perform much better on the discovery of the minimal stable variable set than other feature selection methods. As shown in \figureref{fig:experiment}, SRDO and DWR achieve the minimal rank average and maximal F1 score for both data-generating processes $Y_{\text{MLP}}$ and $Y_{\text{poly}}$. As a result, with the accurate discovery of the minimal stable variable set, SRDO and DWR further achieve the best covariate-shift generalization metrics (RMSE average and standard deviation). This experiment result validates the theories in \sectionref{sect:theory}.
    \item The discovery of the minimal stable variable set becomes progressively challenging as spurious correlation strength $r_{\text{tr}}$ increases. As shown in \figureref{fig:experiment}, the rank average tends to increase while the F1 score tends to decrease for all methods as $r_{\text{tr}}$ increases. This phenomenon makes sense on the grounds that $V_1$ and $V_2$ become strongly correlated with $Y$ and models tend to select them when $r_{\text{tr}}$ is large.
    \item SRDO outperforms DWR in most settings, especially when $Y$ has a complex non-linear relationship with $\boldX$. As discussed in \sectionref{sect:specific}, DWR aims to decorrelate the linear relationships between features and can not guarantee strict statistical independence. In the $Y_{\text{MLP}}$ setting of our experiment, DWR fails to discover the minimal stable variable set when $r_{\text{tr}}$ is large while SRDO performs much better in the setting. However, DWR could effectively handle the polynomial $Y_{\text{poly}}$ setting, which is also suggested by \citet{kuang2020stable}.
\end{enumerate}
\section{Discussions} \sectionlabel{sect:discussions}
In this paper, we theoretically prove the effectiveness of independence-driven IW algorithms. We show that under ideal conditions, \textit{i.e.}, perfectly learned sample weights and infinite samples, the algorithms could identify the minimal stable variable set, which is the minimal set of variables that could provide good predictions under covariate shift. We further provide non-asymptotic properties and error analysis when these two conditions are not satisfied. Empirical results also demonstrate the superiority of these methods in selecting target variables.

\paragraph{Relationships between the minimal stable variable set and the Markov boundary}
The minimal stable variable set has close relationships with the Markov boundary, which we will further demonstrate in \appendixref{sect:markov-boundary}. Here we provide a brief discussion.

Firstly, we can easily verify that the minimal stable variable set is a subset of the Markov boundary (\theoremref{thrm:subset} and \exampleref{example:proper-subset}) by definition (\definitionref{defn:minimal-stable-set} and \definitionref{defn:markov-boundary}). However, not all variables in the Markov boundary are necessary for the covariate-shift generalization problem with common loss functions while the minimal stable variable set could provide the minimal set of variables (comparing \theoremref{thrm:ood-prediction} and \theoremref{thrm:markov-ood}).

In addition, traditional Markov boundary discovery algorithms mainly adopt the conditional independence test \citep{fukumizu2007kernel,sejdinovic2013equivalence,strobl2019approximate}, which is a particularly challenging hypothesis to test for~\citep{shah2020hardness} though. As a result, independence-driven IW algorithms would hopefully provide a proper approximation of the Markov boundary, which could be of independent interest.

\paragraph{Applicable scenarios and limitations}
We should notice that the definition of the minimal stable variable set is applicable only when $\mathbb{E}[Y|\boldX]$ is well defined. 
This implies that the definitions could be applied to typical regression and binary classification settings, but they may not be applicable in multi-class classification settings. 
In addition, under regression settings, $\mathbb{E}[Y | \boldX]$ will not be the solution in other forms of losses. 
For example, consider the Minkowski loss \citep[Section 1.5.5]{bishop2006pattern} given as $L_q = \mathbb{E}[|Y - f(\boldX)|^q]$.
It reduces to the expected squared loss when $q=2$. 
The minimum of $L_q$ is given by the conditional mean $\mathbb{E}[Y | \boldX]$ for $q = 2$, which is our case. 
But the solution becomes the conditional median for $q=1$ and the conditional mode for $q \rightarrow 0$.
Nevertheless, we highlight that the squared loss under regression settings and the cross-entropy loss under binary classification settings are general enough for most potential applications. 
We leave the theoretical analysis and applications of independence-driven IW algorithms on multi-class classification settings as future work.

\section*{Acknowledgments}
This work was supported in part by National Key R\&D Program of China (No. 2018AAA0102004), National Natural Science Foundation of China (No. 62141607, U1936219), and Beijing Academy of Artificial Intelligence (BAAI).

\bibliographystyle{icml2022}
\bibliography{references}

\clearpage

\appendix
\onecolumn
\section{Relationships between the Minimal Stable Variable Set and the Markov Boundary} \sectionlabel{sect:markov-boundary}
\subsection{Main Results}
Generally speaking, the minimal stable variable set is closely related to the Markov boundary and independence-driven IW algorithms would hopefully provide a proper approximation of the Markov boundary. In addition, if setting covariate-shift generalization as the goal, the Markov boundary is not necessary while the minimal stable variable set is sufficient and optimal. Details are provided as follows.

\paragraph{Definition and basic property of the Markov blankets and boundary}
According to \citep{statnikov2013algorithms,pearl2014probabilistic}, Markov blankets and Markov boundary are defined as follows.

\begin{definition} [Markov blanket]
    A Markov blanket of $Y$ under distribution $P$ is any subset $\boldS$ of $\boldX$ for which
    \begin{equation} \equationlabel{eq:blanket}
        Y \perp (\boldX \backslash \boldS) \mid \boldS.
    \end{equation}
    The set of all Markov blankets for $Y$ is denoted as $\mbl_P(Y)$. In addition, we use $\mbl(Y)$ to denote the set under the training distribution $\ptrain$ for simplicity, \textit{i.e.}, $\mbl(Y) \triangleq \mbl_{\ptrain}(Y)$.
\end{definition}

\begin{definition} [Markov boundary] \definitionlabel{defn:markov-boundary}
    A Markov Boundary of $Y$ is a minimal Markov blanket of $Y$, \textit{i.e.}, none of its proper subsets satisfy \equationref{eq:blanket}.
\end{definition}

The existence of Markov blankets and Markov boundaries are given by the following proposition.

\begin{proposition} 
\propositionlabel{prop:unique-boundary-all-blankets}
    Under \assumptionref{assum:positive}, there exists a unique Markov boundary of $Y$, which can be denoted as $\mbd(Y)$.
    Furthermore, with the unique Markov boundary $\mbd(Y)$, the set of all Markov blankets of $Y$, $\mbl(Y)$, can be expressed as
    \begin{equation}
        \mbl(Y) = \{\boldS \subseteq \boldX \mid \mbd(Y) \subseteq \boldS\}.
    \end{equation}
\end{proposition}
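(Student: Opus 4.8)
The plan is to derive both claims from the classical graphoid machinery, with the crucial input being the \emph{intersection property} of conditional independence, which is available precisely because \assumptionref{assum:positive} guarantees a strictly positive density. Recall that the intersection property states that, for pairwise disjoint $\boldA,\boldB,\boldC \subseteq \boldX$, if $Y \perp \boldA \mid \boldB \cup \boldC$ and $Y \perp \boldB \mid \boldA \cup \boldC$, then $Y \perp \boldA \cup \boldB \mid \boldC$. First I would record this property together with the standard decomposition, weak union, and contraction rules, since every subsequent step is a purely formal manipulation of these axioms.

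The key lemma I would establish is that $\mbl(Y)$ is closed under intersection: if $\boldS_1, \boldS_2 \in \mbl(Y)$ then $\boldS_1 \cap \boldS_2 \in \mbl(Y)$. To prove it, partition $\boldX$ into $A = \boldS_1 \setminus \boldS_2$, $B = \boldS_2 \setminus \boldS_1$, $T = \boldS_1 \cap \boldS_2$, and $D = \boldX \setminus (\boldS_1 \cup \boldS_2)$, so that the blanket conditions for $\boldS_1$ and $\boldS_2$ read $Y \perp (B \cup D) \mid (T \cup A)$ and $Y \perp (A \cup D) \mid (T \cup B)$. Applying weak union to each and then the intersection property yields $Y \perp (A \cup B) \mid (T \cup D)$; a second application of the intersection property, combined with decomposition, strips $D$ out of the conditioning set to give $Y \perp (A \cup D) \mid T$; and a final contraction step folds $B$ back in to produce $Y \perp (A \cup B \cup D) \mid T$, which is exactly the blanket condition \equationref{eq:blanket} for $T = \boldS_1 \cap \boldS_2$. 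This chain is routine but must be sequenced carefully, and I expect it to be the main obstacle: each axiom moves only one block of variables at a time, so an incorrect ordering leaves the derivation stuck, and one must also verify at every step that the sets involved are genuinely disjoint so that the intersection property applies.

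With the intersection lemma in hand, existence and uniqueness follow quickly. Existence of a Markov boundary is immediate, since $\boldX$ itself is a Markov blanket and $\boldX$ is finite, so a minimal element of $\mbl(Y)$ exists. For uniqueness, if $\boldS_1$ and $\boldS_2$ were both Markov boundaries, then $\boldS_1 \cap \boldS_2 \in \mbl(Y)$ by the lemma, and minimality of each forces $\boldS_1 = \boldS_1 \cap \boldS_2 = \boldS_2$. I would then denote the unique boundary by $\mbd(Y)$.

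Finally, I would prove the characterization $\mbl(Y) = \{\boldS \subseteq \boldX \mid \mbd(Y) \subseteq \boldS\}$ by two inclusions. For the $\supseteq$ direction, any superset $\boldS \supseteq \mbd(Y)$ is a blanket: writing $\boldX \setminus \mbd(Y) = (\boldS \setminus \mbd(Y)) \cup (\boldX \setminus \boldS)$ and applying weak union to $Y \perp (\boldX \setminus \mbd(Y)) \mid \mbd(Y)$ gives $Y \perp (\boldX \setminus \boldS) \mid \boldS$. For the $\subseteq$ direction, any blanket $\boldS \in \mbl(Y)$ satisfies $\boldS \cap \mbd(Y) \in \mbl(Y)$ by the intersection lemma, so minimality of $\mbd(Y)$ forces $\boldS \cap \mbd(Y) = \mbd(Y)$, i.e. $\mbd(Y) \subseteq \boldS$. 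Combining the two inclusions completes the argument.
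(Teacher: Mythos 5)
Your proof is correct, and its skeleton matches the paper's: both rest on the intersection property of conditional independence (valid under \assumptionref{assum:positive}) to show that $\mbl(Y)$ is closed under intersection, then use minimality to force $\mbd(Y) \subseteq \boldS$ for every blanket $\boldS$, and finally show that every superset of $\mbd(Y)$ is a blanket. The differences are in how the sub-steps are discharged, and your version is more self-contained on two counts. First, the paper cites \citet{statnikov2013algorithms} for existence and uniqueness of the boundary and merely asserts that closure under intersection follows from the intersection lemma; you instead prove closure explicitly via the four-block partition $A = \boldS_1 \setminus \boldS_2$, $B = \boldS_2 \setminus \boldS_1$, $T = \boldS_1 \cap \boldS_2$, $D = \boldX \setminus (\boldS_1 \cup \boldS_2)$ and the weak-union/intersection/decomposition/contraction chain, which I checked step by step and which is sound (the sets are disjoint at each application of intersection), and you get existence from finiteness and uniqueness from the closure lemma directly. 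Second, for the superset direction the paper carries out an explicit factorization of $\ptrain(Y, \boldV \mid \boldS)$ into $\ptrain(Y \mid \boldS)\ptrain(\boldV \mid \boldS)$ using the defining independence of $\mbd(Y)$, whereas you invoke weak union, a one-line semi-graphoid step that moreover holds for any distribution, with no positivity needed at that point. What the paper's route buys is brevity on the page (it leans on references and a direct computation); what yours buys is a fully axiomatic, self-contained derivation in which positivity is used exactly where it is needed, namely only for the intersection property.
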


\paragraph{Comparing the minimal stable variable set and the Markov boundary}
Besides the similarities in mathematical forms, there exist some connections between the stable variable set and the Markov blanket, and between the minimal stable variable set and the Markov boundary.

\begin{theorem} 
\theoremlabel{thrm:subset}
    Under \assumptionref{assum:positive}, a stable variable set is also a Markov blanket and the minimal stable variable set is a subset of the Markov boundary, \textit{i.e.},
    \begin{equation}
        \mbl(Y) \subseteq \pmbl(Y), \quad \pmbd(Y) \subseteq \mbd(Y).
    \end{equation}
\end{theorem}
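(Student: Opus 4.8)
The plan is to establish the two set inclusions separately, obtaining the minimality statement $\pmbd(Y)\subseteq\mbd(Y)$ as a consequence of the blanket-level inclusion $\mbl(Y)\subseteq\pmbl(Y)$ combined with the structural characterization of $\pmbl(Y)$ supplied by \theoremref{thrm:unique-stable-set}. The whole argument is essentially a translation between conditional independence and equality of conditional means, so almost all of the work is in making that single implication precise.

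First I would prove $\mbl(Y)\subseteq\pmbl(Y)$. Fix any Markov blanket $\boldS\in\mbl(Y)$ and write $\boldU=\boldX\backslash\boldS$, so that $\boldX=(\boldS,\boldU)$. By definition of the Markov blanket, $Y\perp\boldU\mid\boldS$, which means the conditional law of $Y$ given $(\boldS,\boldU)$ does not depend on $\boldU$; in particular neither does the conditional mean. Hence $\bbE_{\ptrain}[Y\mid\boldX]=\bbE_{\ptrain}[Y\mid\boldS,\boldU]=\bbE_{\ptrain}[Y\mid\boldS]$, which is exactly the stability condition \equationref{eq:stable-set}, so $\boldS\in\pmbl(Y)$. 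I note that this passage only goes from conditional independence to equality of conditional means; the converse fails in general (equal conditional means permit higher conditional moments to depend on $\boldU$), so the inclusion is genuinely one-sided, which is why the theorem asserts a subset relation rather than equality.

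Next, for $\pmbd(Y)\subseteq\mbd(Y)$ I would invoke \propositionref{prop:unique-boundary-all-blankets} (valid under \assumptionref{assum:positive}) to obtain a well-defined Markov boundary $\mbd(Y)\in\mbl(Y)$. By the inclusion just proved, $\mbd(Y)\in\pmbl(Y)$, i.e. the Markov boundary is itself a stable variable set. Now \theoremref{thrm:unique-stable-set} characterizes every stable variable set as a superset of the unique minimal one, namely $\pmbl(Y)=\{\boldS\subseteq\boldX\mid\pmbd(Y)\subseteq\boldS\}$. Applying this characterization to the particular element $\mbd(Y)$ immediately yields $\pmbd(Y)\subseteq\mbd(Y)$, completing the proof.

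The argument is short, and the only real (and mild) obstacle is the measure-theoretic justification of the step $\bbE_{\ptrain}[Y\mid\boldS,\boldU]=\bbE_{\ptrain}[Y\mid\boldS]$: one must argue that $Y\perp\boldU\mid\boldS$ gives $\ptrain(Y\mid\boldS,\boldU)=\ptrain(Y\mid\boldS)$ almost surely, and therefore the corresponding conditional expectations coincide, with \assumptionref{assum:positive} guaranteeing that the relevant conditional distributions are well defined on the entire support. Everything after this reduction is bookkeeping with the definitions and the already-established \theoremref{thrm:unique-stable-set}.
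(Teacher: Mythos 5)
Your proposal is correct and follows essentially the same route as the paper: conditional independence $Y \perp (\boldX \backslash \boldS) \mid \boldS$ gives equality of conditional means, hence $\mbl(Y) \subseteq \pmbl(Y)$, and then the characterization $\pmbl(Y)=\{\boldS \mid \pmbd(Y)\subseteq\boldS\}$ from \theoremref{thrm:unique-stable-set} applied to $\mbd(Y)$ yields $\pmbd(Y)\subseteq\mbd(Y)$. Your added remarks on the one-sidedness of the inclusion and the measure-theoretic justification are sound but not substantively different from the paper's argument.
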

The above theorem shows the inclusion relations between those two concepts, and the following example further illustrates a proper inclusion case.

\begin{example} [from \citet{strobl2016markov}] \examplelabel{example:proper-subset}
    Let $\boldX = (X_1, X_2)$ and the data-generating process is given as follows.
    \begin{equation}
        X_1, X_2 \sim N(0,1), \quad Y = f(X_1) + N\left(0, g(X_2)^2\right),
    \end{equation}
    where $f(\cdot)$ and $g(\cdot)$ are fixed functions. Then
    \begin{equation}
        \begin{aligned}
            & \{X_1\} = \pmbd(Y) \subsetneq \mbd(Y) = \{X_1, X_2\}, \\
            & \{\{X_1, X_2\}\} = \mbl(Y) \subsetneq \pmbl(Y) = \{\{X_1\}, \{X_1, X_2\}\}.
        \end{aligned}
    \end{equation}
\end{example}

The following proposition provides the property of the Markov boundary on covariate-shift generalization.

\begin{theorem} \theoremlabel{thrm:markov-ood}
    Under \assumptionref{assum:covariate-shift} and \assumptionref{assum:positive}, suppose $\mathbb{M}$ is a performance metric that is maximized only when $\ptest(Y | \boldX)$ is estimated accurately and $\mathbb{L}$ is a learning algorithm that can approximate any conditional probability distribution. Suppose $\boldS \subseteq \boldX$ is a subset of variables, then
    \begin{enum}
        \item $\boldS$ is an optimal predictor of $Y$ under the test distribution $\ptest$ if and only if it is a Markov blanket of $Y$ under the training distribution $\ptrain$, and
        \item $\boldS$ is a minimal and optimal predictor of $Y$ under the test distribution $\ptest$ if and only if it is a Markov boundary of $Y$ under the training distribution $\ptrain$.
    \end{enum}
\end{theorem}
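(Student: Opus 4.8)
The plan is to run the argument of \theoremref{thrm:ood-prediction} essentially verbatim, but with full conditional distributions in place of conditional expectations, since the performance metric here rewards recovering the whole law $\ptest(Y\mid\boldX)$ rather than just its mean. Everything rests on two reformulations. First, because $\mathbb{M}$ is maximized only when $\ptest(Y\mid\boldX)$ is estimated accurately and $\mathbb{L}$ can reproduce any conditional distribution, the best model obtainable from a feature set $\boldS$ is its Bayes-optimal conditional law $\ptest(Y\mid\boldS)$; hence $\boldS$ is an optimal predictor under $\ptest$ exactly when $\ptest(Y\mid\boldS)=\ptest(Y\mid\boldX)$ on the support. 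Second, by the definition of conditional independence, $\boldS$ being a Markov blanket under $\ptrain$ (\textit{i.e.}\ $Y\perp(\boldX\backslash\boldS)\mid\boldS$) is equivalent to $\ptrain(Y\mid\boldS)=\ptrain(Y\mid\boldX)$, that is, to $\ptrain(Y\mid\boldX)$ being a function of the $\boldS$-coordinates alone. The first part therefore reduces to the single equivalence $\ptest(Y\mid\boldS)=\ptest(Y\mid\boldX)\iff\ptrain(Y\mid\boldS)=\ptrain(Y\mid\boldX)$.

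To establish it I would write $\boldX=(\boldS,\boldX\backslash\boldS)$ and use \assumptionref{assum:covariate-shift} to substitute $\ptest(Y\mid\boldX)=\ptrain(Y\mid\boldX)$ pointwise. For sufficiency ($\Leftarrow$), if $\boldS$ is a Markov blanket then $\ptrain(Y\mid\boldX=(\bolds,\boldv))$ does not depend on $\boldv$, so neither does $\ptest(Y\mid\boldX=(\bolds,\boldv))$; marginalizing over $\boldX\backslash\boldS$ against $\ptest(\boldv\mid\bolds)$ then leaves this common value unchanged, giving $\ptest(Y\mid\boldS=\bolds)=\ptest(Y\mid\boldX=(\bolds,\boldv))$ and hence optimality for the covariate-shifted $\ptest$.

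The direction I expect to require the most care is necessity ($\Rightarrow$). Assuming $\boldS$ is optimal, the identity $\ptest(Y\mid\boldS=\bolds)=\ptest(Y\mid\boldX=(\bolds,\boldv))=\ptrain(Y\mid\bolds,\boldv)$ must hold simultaneously for every admissible $\boldv$; since the left-hand side is a function of $\bolds$ only, the right-hand side cannot genuinely depend on $\boldv$, forcing $\ptrain(Y\mid\boldX)=\ptrain(Y\mid\boldS)$. The subtle point — and the reason \assumptionref{assum:positive} together with the same-support clause of \assumptionref{assum:covariate-shift} is indispensable — is that this pointwise comparison is legitimate only because $\ptest$ assigns positive density to every $(\bolds,\boldv)$ on the support, so that no admissible value of $\boldv$ is silently excluded; equivalently, one can phrase it as saying a $\ptest(\boldv\mid\bolds)$-weighted average of the quantities $\ptrain(Y\mid\bolds,\boldv)$ equals each of its terms, which with strictly positive weights forces all terms to coincide. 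This completes the first part.

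The second part then follows formally: by definition a minimal and optimal predictor is an optimal predictor admitting no optimal proper subset, which by the first part is precisely a Markov blanket admitting no proper Markov-blanket subset, \textit{i.e.}\ a Markov boundary in the sense of \definitionref{defn:markov-boundary}. The only auxiliary facts needed are the existence and uniqueness of the Markov boundary, which merely guarantee the objects are well defined; all of the real content sits in the conditional-distribution equivalence above, which is the natural distributional analogue of the conditional-mean argument behind \theoremref{thrm:ood-prediction}.
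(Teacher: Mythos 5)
Your proposal is correct and takes essentially the same route as the paper: both proofs hinge on showing that, under covariate shift with shared support and strict positivity, the Markov-blanket property (equivalently, the identity $\ptest(Y\mid\boldS)=\ptest(Y\mid\boldX)$ versus $\ptrain(Y\mid\boldS)=\ptrain(Y\mid\boldX)$) is preserved between $\ptrain$ and $\ptest$, and then invoke the equivalence between optimal predictors and Markov blankets to conclude both parts. The only presentational difference is that you derive that latter equivalence inline from the definitions of $\mathbb{M}$ and $\mathbb{L}$, whereas the paper imports it as \propositionref{prop:markov-iid} from the literature.
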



\begin{remark}
The main difference between \theoremref{thrm:ood-prediction} and \theoremref{thrm:markov-ood} is the requirement on the performance metric $\mathbb{M}$. The Markov boundary is the minimal and optimal predictor if $\mathbb{M}$ is chosen as maximizing $\ptest(Y|\boldX)$.
However, for regression problems with the mean squared loss and binary classification problems with the cross-entropy loss, $\mathbb{E}_{\ptest}[Y|\boldX]$ is optimal in the test distribution $\ptest$.
\end{remark}

As a result, compared with the Markov boundary, the minimal stable variable set can bring two advantages.
\begin{enum}
    \item The conditional independence test is the crux to the precise discovery of the Markov boundary. \citet{shah2020hardness} have shown that conditional independence is a particularly challenging hypothesis to test for, which highlights the challenges of discovering the Markov boundary in real-world tasks. However, discovering the minimal stable variable set is relatively easier and proved possible in this paper.
    \item In several common machine learning tasks, including regression and binary classification, not all variables in the Markov boundary are necessary. As shown in \exampleref{example:proper-subset}, if a variable only affects the variance of the response variable $Y$, it would not be useful to predict $Y$ when adopting mean squared loss. The minimal stable variable set is proved to be a subset of the Markov boundary and it excludes useless variables in the Markov boundary for covariate-shift generalization.
\end{enum}

In addition, since the precise discovery of the Markov boundary is challenging, independence-driven IW algorithms would hopefully provide a proper approximation of the Markov boundary, which could be of independent interest.

\subsection{Related Works on Causal Discovery and Markov Boundary}
Causal literature can be categorized into two frameworks, namely the potential outcome \citep{rosenbaum1983central,holland1986statistics,rubin2005causal,imbens2015causal,johansson2016learning,zou2019focused,zou2020counterfactual} and the structural causal model framework \citep{pearl2014probabilistic}. The definition of the minimal stable variable set in this work is closely related to the Markov boundary, which falls into the structural causal model framework. Traditional causal discovery literature aims to discover the causal relationship between all variables. Typical methods include constraint-based \citep{spirtes2000causation,spirtes2013causal}, scored-based \citep{chickering2002optimal,huang2018generalized}, and learning-based \citep{zheng2018dags,zheng2020learning, he2021daring} methods.

Markov blankets and Markov boundary \citep{pearl2014probabilistic} are the cores of local causal discovery. Under the intersection assumption \citep{pearl2014probabilistic}, the Markov boundary is proved unique and the discovery algorithms include \citep{tsamardinos2003towards,tsamardinos2003time,tsamardinos2003algorithms,mani2004causal,aliferis2010locala,aliferis2010localb,pena2007towards}. Moreover, \citet{liu2010ensemblea,liu2010ensembleb,statnikov2013algorithms} studied the setting when multiple Markov boundaries exist. In this paper, we assume that the probabilities are strictly positive, which is a stronger assumption than the intersection assumption \citep{pearl2014probabilistic} but is also common in reality \citep{strobl2016markov}. With this assumption, we can guarantee the uniqueness of the Markov boundary and the minimal stable variable set proposed in this paper.

Traditional discovery algorithms mainly use the conditional independence test \citep{fukumizu2007kernel,sejdinovic2013equivalence,strobl2019approximate}. However, \citet{shah2020hardness} proved that conditional independence is indeed a particularly difficult hypothesis to test for and there is no free lunch in conditional independence testing, which limits the application of these methods in reality. \citep{strobl2016markov} propose a regression-based method to discover Markov boundaries, which is mostly closed to us. They proved that their method could find a subset of the Markov boundary but did not discuss the detailed properties of the subset. Here we further demonstrate that under ideal conditions, independence-driven IW algorithms could identify the exact subset of variables, \textit{i.e.}, the minimal stable variable set.
\section{More Experimental Details} \sectionlabel{sect:experimental-detail}
\paragraph{Implementation details} We use scikit-learn\footnote{\url{https://scikit-learn.org}} for mutual information based (MI), correlation based (Correlation), gradient boosting (GB), random forest (RF), and LASSO, XGBoost package\footnote{\url{https://xgboost.readthedocs.io/en/stable/}} for XGBoost (XGB), and original implementation\footnote{\url{https://github.com/runopti/stg}} of STG. The hyperparameter search ranges for these baselines are shown in \tableref{tab:hyper}.

For independence-driven IW algorithms,  \textit{i.e.}, DWR, and SRDO, the feature scores are calculated as the absolute values of WLS coefficients. To be specific, for the DWR algorithm, following \citet{kuang2020stable}, we learn sample weights $\boldw = \{w_i\}_{i=1}^n$ by
\begin{equation} \equationlabel{eq:app-DWR}
    \boldw = \argmin_{\boldw \in \bbR^n} \sum_{1\le i,j \le d, i\ne j}\left(\widetilde{\cov}(X_i, X_j; \boldw)\right)^2 + \lambda_1 \cdot \left(\sum_{i=1}^n w_i - 1\right)^ 2 + \lambda_2 \cdot \sum_{i=1}^nw_i^2.
\end{equation}
Here $\widetilde{\cov}(X_i, X_j; \boldw)$ denotes the empirical covariance of $X_i$ and $X_j$ with sample weights $\boldw$. \equationref{eq:app-DWR} is optimized by the Adam algorithm~\citep{kingma2014adam} with a learning rate of $0.001$. For the SRDO algorithm, we implement it according to the official code\footnote{\url{https://github.com/Silver-Shen/Stable_Linear_Model_Learning}}. In detail, an MLP classifier (two hidden layers with sizes $30$ and $10$, respectively) is utilized to discriminate between the training distribution $\ptrain(\boldX)$ and the weighted distribution $\ptrain(X_1)\ptrain(X_2)\dots\ptrain(X_d)$. We adopt the binary cross-entropy loss and the Adam algorithm with a learning rate of $0.001$. To further restrict the range of learned sample weights, we clip the weights to $[1 / \gamma, \gamma]$. The search ranges of the hyperparameters are shown in \tableref{tab:hyper}.

We run each model $5$ times on various training datasets. In each run, we train feature selection models and get the top $5$ features selected by the algorithms. We then train an MLP regressor on these features to predict $Y$. The MLP adopted here has two hidden layers with sizes $5$ and $5$, respectively. The optimizer is the Adam algorithm with a learning rate of $0.001$. All MLPs in our paper adopt the ReLU activation function.

\begin{table}[h]
    \centering
    \caption{Hyperparameter search ranges of the methods.}
    \begin{tabular}{c|c}
        \toprule
        Method & Hyperparameters \\
        \midrule
        Mutual information based (MI) & $\mathrm{n\_neighbors} \in \{3, 5, 10, 20\}$ \\
        Correlation based (Correlation) & N/A \\
        Gradient boosting (GB) & $\mathrm{n\_estimators} \in \{50, 100, 200\}$, $\mathrm{max\_depth} \in \{6, 8 ,10\}$ \\
        XGBoost (XGB) & $\mathrm{n\_estimators} \in \{50, 100, 200\}$, $\mathrm{max\_depth} \in \{6, 8 ,10\}$ \\
        Random forests (RF) & $\mathrm{n\_estimators} \in \{50, 100, 200\}$, $\mathrm{max\_depth} \in \{6, 8 ,10\}$ \\
        OLS & N/A \\
        LASSO & $\alpha \in \{0.0003, 0.001, 0.01, 0.1\}$ \\
        STG & $\lambda \in \{0.001, 0.01, 0.1, 1.0, 10.0\}$ \\
        DWR & $\lambda_1 \in \{0.02, 0.05, 0.1\}$, $\lambda_2 \in \{0.02, 0.05, 0.1\}$ \\
        SRDO & $\gamma \in \{5, 10, 20\}$ \\
        \bottomrule
    \end{tabular}
    \tablelabel{tab:hyper}
\end{table}

\begin{figure}[h]
    \centering
    \begin{subfigure}[b]{0.33\linewidth}
        \centering
        \includegraphics[width=\linewidth]{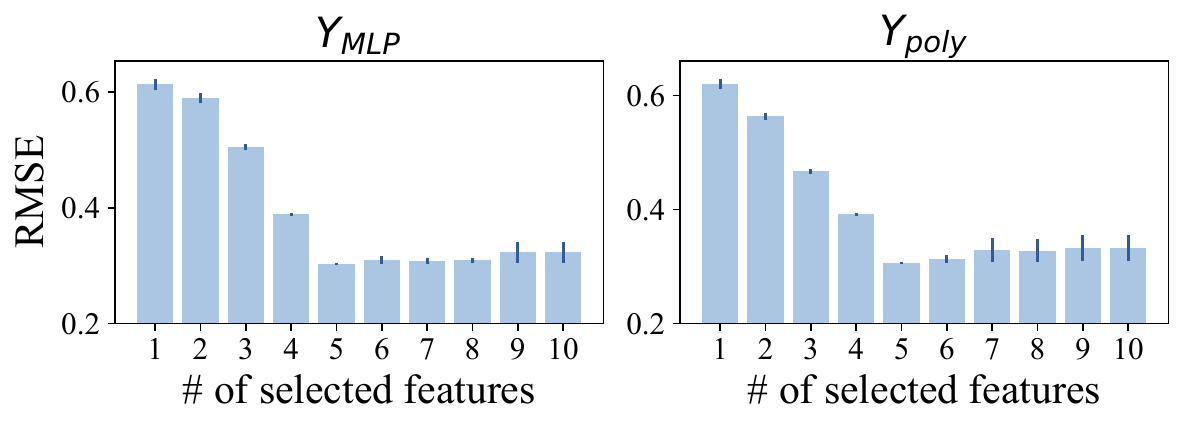}
        \caption{$r_{\text{tr}}=1.5$}
        \figurelabel{fig:ood-S-1.5}
    \end{subfigure}
    \hfill
    \begin{subfigure}[b]{0.33\linewidth}
        \centering
        \includegraphics[width=\linewidth]{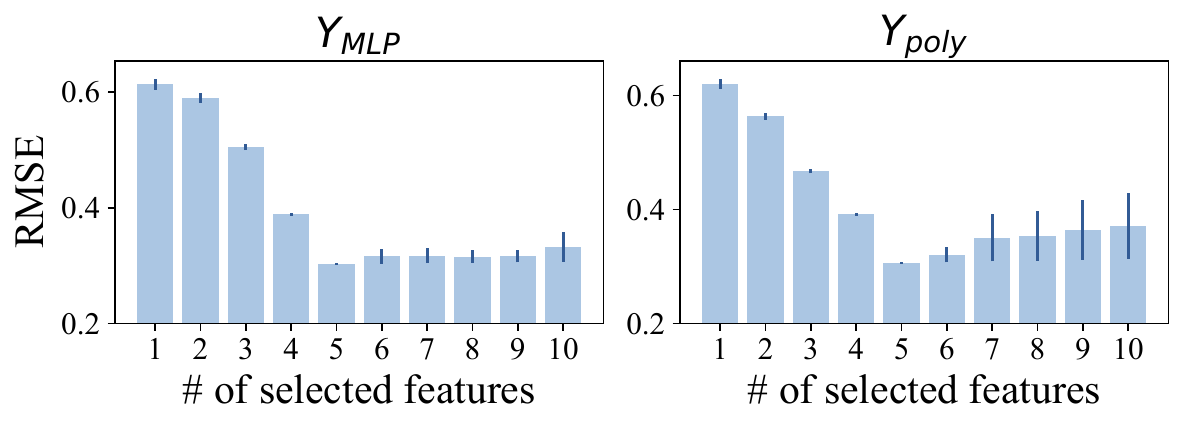}
        \caption{$r_{\text{tr}}=2.0$}
        \figurelabel{fig:ood-S-2.0}
    \end{subfigure}
    \hfill
    \begin{subfigure}[b]{0.33\linewidth}
        \centering
        \includegraphics[width=\linewidth]{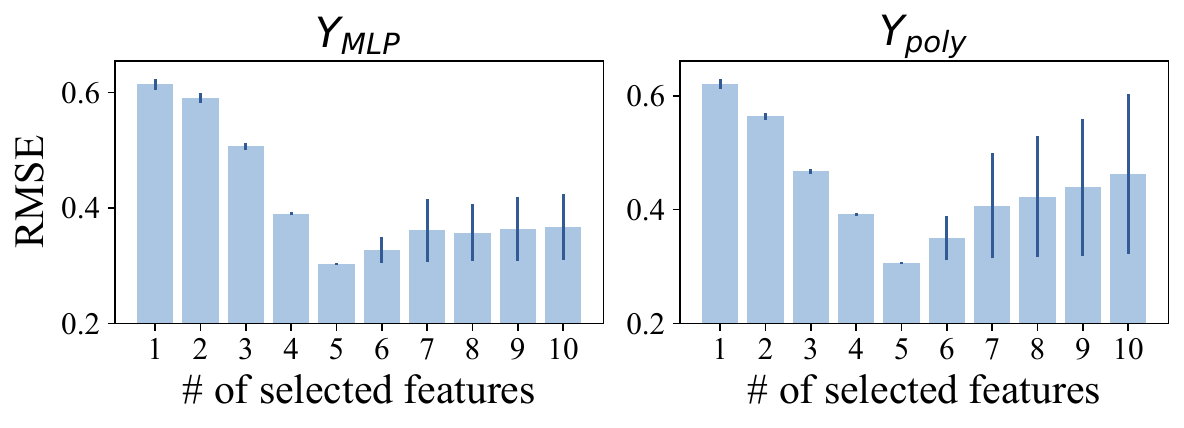}
        \caption{$r_{\text{tr}}=3.0$}
        \figurelabel{fig:ood-S-3.0}
    \end{subfigure}
    \caption{The covariate-shift generalization metrics (RMSE average and standard deviation) \textit{w.r.t.} the number of selected features. Fix $r_{\text{tr}}=1.5, 2.0, 3.0$ and the feature ranking lists are provided by SRDO. The minimal stable variable set (5 features) achieves the optimal performance.}
    \figurelabel{fig:ood-S-more}
\end{figure}

\begin{table}[h]
    \centering
    \caption{Feature rankings in \figureref{fig:ood-S-2.5} and \figureref{fig:ood-S-more}.}
    \begin{tabular}{c|c|c}
        \toprule
        $r_{\text{train}}$ & $Y_{\text{MLP}}$ & $Y_{\text{poly}}$ \\
        \midrule
        $1.5$ & $S_3, S_5, S_2, S_1, S_4, V_5, V_4, V_2, V_1, V_3$ & $S_3, S_2, S_5, S_1, S_4, V_4, V_3, V_1, V_5, V_2$ \\
        $2.0$ & $S_3, S_5, S_2, S_1, S_4, V_5, V_4, V_3, V_2, V_1$ & $S_3, S_2, S_5, S_1, S_4, V_4, V_3, V_1, V_2, V_5$ \\
        $2.5$ & $S_3, S_2, S_5, S_1, S_4, V_5, V_4, V_2, V_3, V_1$ & $S_3, S_2, S_5, S_1, S_4, V_4, V_5, V_2, V_3, V_1$ \\
        $3.0$ & $S_3, S_5, S_2, S_1, S_4, V_4, V_5, V_3, V_1, V_2$ & $S_3, S_2, S_5, S_1, S_4, V_5, V_4, V_2, V_1, V_3$ \\
        \bottomrule
    \end{tabular}
    \tablelabel{tab:feature-ranking}
\end{table}

\paragraph{The optimality property of the minimal stable variable set} We adopt the SRDO method to generate feature rankings in both the $Y_{\text{MLP}}$ (We only sample one MLP in this experiment.) and $Y_{\text{poly}}$ settings. The feature rankings in different settings are shown in \tableref{tab:feature-ranking}.

We vary $r_{\text{tr}}$ to test the covariate-shift generalization metrics \textit{w.r.t.} the number of selected features. The results are shown in \figureref{fig:ood-S-more}. We observe a similar phenomenon as that shown in \figureref{fig:ood-S-2.5}. These empirical results validate the advantage of the minimal stable variable set on covariate-shift generalization. 

\section{Omitted Proofs}
\subsection{Proof of \texorpdfstring{\theoremref{thrm:ood-prediction}}{Theorem}}
\begin{lemma} \lemmalabel{lemma:stable-sets-and-iid}
    Under \assumptionref{assum:positive}, suppose $\mathbb{M}$ is a performance metric that is maximized only when $\mathbb{E}_{\ptrain}[Y | \boldX]$ is estimated accurately and $\mathbb{L}$ is a learning algorithm that can approximate any conditional expectation.  Suppose $\boldS \subseteq \boldX$ is a subset of variables, then
    \begin{enum}
        \item $\boldS$ is an optimal predictor of $Y$ if and only if it is a stable variable set of $Y$ under distribution $\ptrain$, and
        \item $\boldS$ is a minimal and optimal predictor of $Y$ if and only if it is a minimal stable variable set of $Y$ under distribution $\ptrain$.
    \end{enum}
\end{lemma}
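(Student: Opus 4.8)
The plan is to reduce everything to a single observation: because the learner $\mathbb{L}$ can approximate any conditional expectation, feeding it a feature subset $\boldS$ produces exactly the prediction $\mathbb{E}_{\ptrain}[Y \mid \boldS]$, so the quality of $\boldS$ as a predictor is $\mathbb{M}\big(\mathbb{E}_{\ptrain}[Y \mid \boldS]\big)$. First I would pin down the global maximum of $\mathbb{M}$: using the full set $\boldX$, the learner produces $\mathbb{E}_{\ptrain}[Y \mid \boldX]$, which by hypothesis maximizes $\mathbb{M}$; hence the largest value attainable by any feature subset is $\mathbb{M}\big(\mathbb{E}_{\ptrain}[Y \mid \boldX]\big)$, and this is the benchmark against which the definition of optimal predictor is measured.

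For part 1 the argument is then a clean two-way implication. For the ``if'' direction, suppose $\boldS$ is a stable variable set, i.e. $\mathbb{E}_{\ptrain}[Y \mid \boldS] = \mathbb{E}_{\ptrain}[Y \mid \boldX]$; then the prediction produced from $\boldS$ coincides with $\mathbb{E}_{\ptrain}[Y \mid \boldX]$ and therefore attains the benchmark value, so $\boldS$ is an optimal predictor. For the ``only if'' direction, suppose $\boldS$ is an optimal predictor; then $\mathbb{E}_{\ptrain}[Y \mid \boldS]$ achieves $\mathbb{M}\big(\mathbb{E}_{\ptrain}[Y \mid \boldX]\big)$, and since $\mathbb{M}$ is maximized \emph{only} when $\mathbb{E}_{\ptrain}[Y \mid \boldX]$ is estimated accurately, we must have $\mathbb{E}_{\ptrain}[Y \mid \boldS] = \mathbb{E}_{\ptrain}[Y \mid \boldX]$ almost surely, which is exactly the defining condition of a stable variable set.

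Part 2 then follows with no further probabilistic content. By part 1 the collection of optimal predictors equals $\pmbl(Y)$, the set of all stable variable sets. A minimal and optimal predictor is by definition an optimal predictor none of whose proper subsets is an optimal predictor; substituting the identification from part 1, this is precisely a stable variable set none of whose proper subsets is a stable variable set, i.e. a minimal stable variable set in the sense of \definitionref{defn:minimal-stable-set}. Matching the two minimality conditions term by term gives the claim.

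I expect the only genuinely delicate step to be the ``only if'' direction of part 1, where I must justify that the learner's output from $\boldS$ is genuinely the conditional expectation $\mathbb{E}_{\ptrain}[Y \mid \boldS]$ (rather than merely some function of $\boldS$) and then convert the scalar equality $\mathbb{M}\big(\mathbb{E}_{\ptrain}[Y \mid \boldS]\big) = \mathbb{M}\big(\mathbb{E}_{\ptrain}[Y \mid \boldX]\big)$ into the functional almost-sure equality of the two conditional expectations. This conversion is exactly what the ``maximized only when $\dots$ estimated accurately'' clause in the hypothesis on $\mathbb{M}$ is there to supply, with \assumptionref{assum:positive} ensuring the relevant conditional expectations are well defined throughout the support. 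Everything else is bookkeeping with the definitions.
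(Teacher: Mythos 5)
Your proof is correct and takes essentially the same route as the paper's: part 1 identifies optimal predictors with stable variable sets using exactly the two hypotheses on $\mathbb{M}$ and $\mathbb{L}$ (the paper merely phrases the ``only if'' direction as a contradiction, which is the same content as your direct argument), and part 2 is the same definition-matching exercise built on part 1. The delicate point you flag (that achieving the maximum forces the learner's function of $\boldS$ to equal $\mathbb{E}_{\ptrain}[Y\mid\boldX]$) is handled at the same informal level in the paper, so there is no gap relative to the paper's own standard of rigor.
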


\begin{proof}[Proof of \lemmaref{lemma:stable-sets-and-iid}]
    We omit the subscript of $\mathbb{E}_{\ptrain}[\cdot | \cdot]$ for simplicity.

    Consider the first part. On the one hand, if $\boldS$ is a stable variable set of $Y$, then $\mathbb{E}[Y |\boldX] = \mathbb{E}[Y | \boldS]$ by definition. Hence $\boldS$ is an optimal predictor because $\mathbb{E}[Y |\boldX] = \mathbb{E}[Y | \boldS]$ can be approximated perfectly by $\mathbb{L}$ and $\mathbb{M}$ will be maximized. On the other hand, assume $\boldS$ is an optimal predictor but not a stable variable set, which implies that $\mathbb{E}[Y|\boldS]\ne\mathbb{E}[Y|\boldX]$. $\boldX$ is a stable variable set by definition. Hence, By first part of the proof, $\boldX$ is an optimal predictor of $Y$, similar to $\boldS$. Therefore, the following should hold: $\mathbb{E}[Y|\boldX] = \mathbb{E}[Y | \boldS]$, which contradicts the assumption that $\boldS$ is not a stable variable set. As a result, $\boldS$ is a stable variable set of $Y$.
    
    Consider the second part. On the one hand, if $\boldS$ is a minimal stable variable set of $Y$, then it is also a stable variable set of $Y$. So $\boldS$ is an optimal predictor. Moreover, by the definition of the minimal stable variable set, no proper subset of $\boldS$ is a stable variable set of $Y$. Therefore, no proper subset of $\boldS$ satisfies the definition of an optimal predictor. Thus, $\boldS$ is a minimal and optimal predictor of $Y$. On the other hand, assume $\boldS$ is a minimal and optimal predictor of $Y$. Then, $\boldS$ is also an optimal predictor of Y, which implies that $\boldS$ is a stable variable set of $Y$. By the definition of minimality, no proper subset of $\boldS$ is a minimal and optimal predictor. Hence, no proper subset of $\boldS$ is a stable variable set of $Y$. As a result, $\boldS$ is a minimal stable variable of $Y$.
\end{proof}

Now we prove the original theorem.
\begin{proof}[Proof of \theoremref{thrm:ood-prediction}]
    It is obvious that $\mathbb{E}_{\ptrain}[Y|\boldX] = \mathbb{E}_{\ptest}[Y|\boldX]$ from \assumptionref{assum:covariate-shift}. As a result, the original theorem is proved according to \lemmaref{lemma:stable-sets-and-iid}.
\end{proof}

\subsection{Proof of \texorpdfstring{\theoremref{thrm:unique-stable-set}}{Theorem}}
The proof is based on the following intersection property.
\begin{lemma} \lemmalabel{lemma:partial-intersection}
    Under \assumptionref{assum:positive}, if $\boldS_1, \boldS_2 \in \pmbl(Y)$, then $\boldS_1 \cap \boldS_2 \in \pmbl(Y)$.
\end{lemma}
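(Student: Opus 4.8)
The plan is to prove the intersection property $\boldS_1 \cap \boldS_2 \in \pmbl(Y)$ by showing directly that $\mathbb{E}[Y \mid \boldS_1 \cap \boldS_2] = \mathbb{E}[Y \mid \boldX]$, using the definition of a stable variable set (\equationref{eq:stable-set}) for $\boldS_1$ and $\boldS_2$ together with the strictly positive density assumption (\assumptionref{assum:positive}). Write $\boldS_1 \cap \boldS_2 = \boldC$, and decompose the variables as $\boldX = (\boldC, \boldA, \boldB, \boldD)$, where $\boldA = \boldS_1 \setminus \boldS_2$, $\boldB = \boldS_2 \setminus \boldS_1$, and $\boldD = \boldX \setminus (\boldS_1 \cup \boldS_2)$. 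The hypotheses $\mathbb{E}[Y \mid \boldS_1] = \mathbb{E}[Y \mid \boldX]$ and $\mathbb{E}[Y \mid \boldS_2] = \mathbb{E}[Y \mid \boldX]$ say, in this notation, that $\mathbb{E}[Y \mid \boldX]$ is simultaneously a function of $(\boldC, \boldA)$ only and a function of $(\boldC, \boldB)$ only.

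The key observation is that a function $h(\boldX) \triangleq \mathbb{E}[Y \mid \boldX]$ that depends on $(\boldC, \boldA)$ alone and also on $(\boldC, \boldB)$ alone must in fact depend on $\boldC$ alone. Here is where \assumptionref{assum:positive} is essential: because the joint density is strictly positive, every combination of values $(\boldc, \bolda, \boldb, \boldd)$ occurs with positive probability, so for fixed $\boldc$ one may freely and independently vary $\bolda$ and $\boldb$ while staying in the support. Fixing $\boldc$ and two arbitrary configurations $(\bolda, \boldb)$ and $(\bolda', \boldb')$, I would interpolate through $(\bolda, \boldb')$: invariance in the first argument (from the $\boldS_2$-representation, which ignores $\boldA$) gives $h(\boldc,\bolda,\boldb') = h(\boldc,\bolda',\boldb')$, and invariance in the second argument (from the $\boldS_1$-representation, which ignores $\boldB$) gives $h(\boldc,\bolda,\boldb) = h(\boldc,\bolda,\boldb')$. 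Chaining these equalities shows $h$ is constant in $(\boldA, \boldB)$ for each fixed $\boldc$, hence $\mathbb{E}[Y \mid \boldX]$ is a measurable function of $\boldC = \boldS_1 \cap \boldS_2$ alone.

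Once $\mathbb{E}[Y \mid \boldX] = g(\boldC)$ for some function $g$, I would finish by confirming $g(\boldC) = \mathbb{E}[Y \mid \boldC]$: since $g(\boldC)$ is $\boldC$-measurable and equals $\mathbb{E}[Y\mid\boldX]$, the tower property gives $\mathbb{E}[Y \mid \boldC] = \mathbb{E}[\mathbb{E}[Y \mid \boldX] \mid \boldC] = \mathbb{E}[g(\boldC) \mid \boldC] = g(\boldC)$. Therefore $\mathbb{E}[Y \mid \boldS_1 \cap \boldS_2] = \mathbb{E}[Y \mid \boldX]$, which is exactly \equationref{eq:stable-set} for $\boldC$, so $\boldS_1 \cap \boldS_2 \in \pmbl(Y)$.

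I expect the main obstacle to be making the ``interpolation'' step fully rigorous at the level of conditional expectations rather than pointwise function values, since $\mathbb{E}[Y\mid\boldS_1]$ and $\mathbb{E}[Y\mid\boldS_2]$ are only defined up to almost-sure equivalence. The strictly positive density assumption is precisely what rules out a support too degenerate to carry out the cross-variation argument; care is needed to phrase the freedom to vary $\boldA$ and $\boldB$ independently in terms of the positivity of the joint density (\textit{e.g.}, by working with explicit versions of the conditional expectations and integrating against the positive density), so that the chain of equalities holds almost everywhere rather than merely at isolated points.
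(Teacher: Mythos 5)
Your proposal is correct and follows essentially the same route as the paper: the same four-way decomposition of $\boldX$, the same cross-variation argument (the paper chains $\mathbb{E}[Y\mid\boldS,\bar{\boldS}_1]=\mathbb{E}[Y\mid\boldX]=\mathbb{E}[Y\mid\boldS,\bar{\boldS}_2]$ for all value combinations, which positivity makes meaningful over the full product support), and the same conclusion via the tower property. The only difference is presentational — the paper writes the tower-property step as an explicit integral against $\ptrain(\bar{\boldS}_1\mid\boldS)$ and pulls the constant out, which is precisely the rigorous version of the step you flagged as the main obstacle.
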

\begin{proof}[Proof of \lemmaref{lemma:partial-intersection}]
    Let $\boldS = \boldS_1 \cap \boldS_2$, $\bar{\boldS}_1 = \boldS_1 \backslash \boldS$, $\bar{\boldS}_2 = \boldS_2 \backslash \boldS$, and $\bar{\boldX} = \boldX \backslash (\boldS_1 \cup \boldS_2)$.
    Then $\boldX = (\boldS, \bar{\boldS}_1, \bar{\boldS}_2, \bar{\boldX})$.
    
    By definition, $\forall \bolds \in \mathcal{S}, \bar{\bolds}_1 \in \bar{\mathcal{S}}_1, \bar{\bolds}_2 \in \bar{\mathcal{S}}_2, \bar{\boldx} \in \bar{\mathcal{X}}$, $\mathbb{E}[Y|\boldS=\bolds, \bar{\boldS}_1 = \bar{\bolds}_1] = \mathbb{E}[Y|\boldS=\bolds, \bar{\boldS}_2 = \bar{\bolds}_2] = \mathbb{E}[Y|\boldS=\bolds, \bar{\boldS}_1 = \bar{\bolds}_1, \bar{\boldS}_2 = \bar{\bolds}_2, \bar{\boldX}=\bar{\boldx}]$. Let $\boldx = (\bolds, \bar{\bolds}_1, \bar{\bolds}_2, \bar{\boldx})$. Under  \assumptionref{assum:positive},
    \begin{equation*}
        \begin{aligned}
            & \mathbb{E}[Y|\boldS=\bolds] \\
            = & \int_{\mathcal{Y}}y \ptrain(Y=y|\boldS=\bolds)\mathrm{d}y \\
            = & \int_\mathcal{Y}\int_{\bar{\mathcal{S}}_1} y \ptrain(Y=y|\boldS=\bolds, \bar{\boldS}_1 = \bar{\bolds}_1)\ptrain(\bar{\boldS}_1 = \bar{\bolds}_1 | \boldS=\bolds)\mathrm{d}\bar{\bolds}_1\mathrm{d}y \\
            = & \int_{\bar{\mathcal{S}}_1}\mathbb{E}[Y|\boldS=\bolds, \bar{\boldS}_1 = \bar{\bolds}_1]\ptrain(\bar{\boldS}_1 = \bar{\bolds}_1 | \boldS=\bolds)\mathrm{d}\bar{\bolds}_1 \\
            = & \mathbb{E}[Y|\boldS=\bolds, \bar{\boldS}_2 = \bar{\bolds}_2] \int_{\bar{\mathcal{S}}_1}\ptrain(\bar{\boldS}_1 = \bar{\bolds}_1 | \boldS=\bolds)\mathrm{d}\bar{\bolds}_1 \\
            = & \mathbb{E}[Y|\boldS=\bolds, \bar{\boldS}_2 = \bar{\bolds}_2] = \mathbb{E}[Y|\boldS=\bolds, \bar{\boldS}_1 = \bar{\bolds}_1, \bar{\boldS}_2 = \bar{\bolds}_2, \bar{\boldX}=\bar{\boldx}] = \mathbb{E}[Y | \boldX=\boldx].
        \end{aligned}
    \end{equation*}
    As a result, $\boldS \in \pmbl(Y)$.
\end{proof}
Now we prove the original theorem.
\begin{proof}[Proof of \theoremref{thrm:unique-stable-set}]
    We first prove the uniqueness of the minimal stable variable set. Suppose there are two minimal stable variable sets \textit{w.r.t.} $Y$, denoted as $\pmbd_1(Y)$ and $\pmbd_2(Y)$. By definition, $\pmbd_1(Y), \pmbd_2(Y) \in \pmbl(Y)$. Under \assumptionref{assum:positive}, according to \lemmaref{lemma:partial-intersection}, $\pmbd_1(Y) \cap \pmbd_2(Y) \in \pmbl(Y)$. Because $\pmbd_1(Y)$ has no proper subset that is in $\pmbl(Y)$, we have $\pmbd_1(Y) \cap \pmbd_2(Y) = \pmbd_1(Y)$. Similarly, $\pmbd_1(Y) \cap \pmbd_2(Y) = \pmbd_2(Y)$, which means $\pmbd_1(Y) = \pmbd_2(Y)$.

    Next, we prove the exact form of the stable variable sets. Let
    $$
    \Omega = \{\boldS \subseteq \boldX \mid \pmbd(Y) \subseteq \boldS\}.
    $$
    On the one hand, $\forall \boldS \in \pmbl(Y)$, according to \lemmaref{lemma:partial-intersection}, $\boldS \cap \pmbd(Y) \in \pmbl(Y)$. Because of the minimality of $\pmbd(Y)$, $|\boldS \cap \pmbd(Y)| \ge |\pmbd(Y)|$. As a result, $\pmbd(Y)\subseteq \boldS$ and $\boldS \in \Omega$. Hence $\pmbl(Y) \subseteq \Omega$.
    
    On the other hand, $\forall \boldS \in \Omega$, let $\boldD = \pmbd(Y)$, $\boldW = \boldS \backslash \boldD$, and $\bar{\boldX} = \boldX \backslash \boldS$. Then $\forall \boldd \in \calD, \boldw \in \calW$, $\bolds = (\boldd, \boldw)$, we can get
    $$
    \begin{aligned}
        & \mathbb{E}[Y|\boldS = \bolds] = \int_{\calY}y \ptrain(Y=y|\boldD=\boldd, \boldW=\boldw)\mathrm{d}y \\
        = & \int_{\calY} \int_{\bar{\calX}}y \ptrain(Y=y|\boldD=\boldd, \boldW=\boldw, \bar{\boldX}=\bar{\boldx}) \ptrain(\bar{\boldX}=\bar{\boldx}|\boldD=\boldd, \boldW=\boldw)\mathrm{d}\bar{\boldx}\mathrm{d}y \\
        = & \int_{\bar{\calX}}\ptrain(\bar{\boldX}=\bar{\boldx}|\boldD=\boldd, \boldW=\boldw)\mathbb{E}[Y|\boldD=\boldd, \boldW=\boldw, \bar{\boldX}=\bar{\boldx}]\mathrm{d}\bar{\boldx} \\
        = & \int_{\bar{\calX}}\ptrain(\bar{\boldX}=\bar{\boldx}|\boldD=\boldd, \boldW=\boldw)\mathbb{E}[Y|\boldD=\boldd]\mathrm{d}\bar{\boldx} \\
        = & \mathbb{E}[Y|\boldD=\boldd]\int_{\bar{\calX}}\ptrain(\bar{\boldX}=\bar{\boldx}|\boldD=\boldd, \boldW=\boldw)\mathrm{d}\bar{\boldx} \\
        = & \mathbb{E}[Y|\boldD=\boldd] = \mathbb{E}[Y | \boldX=\boldx].
    \end{aligned}
    $$
    As a result, $\boldS$ satisfies the requirement of stable variable sets and $\boldS \in \pmbl(Y)$. Hence $\Omega \subseteq \pmbl(Y)$.
    
    To conclude, $\pmbl(Y) \subseteq \Omega$ and $\Omega \subseteq \pmbl(Y)$, which results in $\Omega = \pmbl(Y)$.
\end{proof}

\subsection{Proof of \texorpdfstring{\theoremref{thrm:findV}}{Theorem}}
We need the following lemma first.
\begin{lemma} \lemmalabel{lemma:invariance}
    Let $w \in \mathcal{W}$ be a weighting function, and $\tilde{P}_w$ be the corresponding weighted distribution. Then $\tilde{P}_w(Y |\boldX) = \ptrain(Y | \boldX)$.
\end{lemma}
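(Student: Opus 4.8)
The plan is to compute the conditional density directly from the definition of the weighted distribution. By \definitionref{defn:weighting}, the joint density factorizes as $\tilde{P}_w(\boldX, Y) = w(\boldX)\ptrain(\boldX, Y)$, and crucially the reweighting factor $w(\boldX)$ depends on $\boldX$ only, not on $Y$. So the essential observation is that multiplying the joint by a function of $\boldX$ alone leaves the conditional law of $Y$ given $\boldX$ untouched.

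First I would obtain the weighted marginal of $\boldX$ by integrating out $Y$:
\begin{equation*}
    \tilde{P}_w(\boldX) = \int_{\calY} \tilde{P}_w(\boldX, y)\,\mathrm{d}y = \int_{\calY} w(\boldX)\ptrain(\boldX, y)\,\mathrm{d}y = w(\boldX)\int_{\calY}\ptrain(\boldX, y)\,\mathrm{d}y = w(\boldX)\ptrain(\boldX),
\end{equation*}
where $w(\boldX)$ is pulled outside the integral precisely because it does not depend on $y$. Then I would form the conditional density and cancel the common factor:
\begin{equation*}
    \tilde{P}_w(Y \mid \boldX) = \frac{\tilde{P}_w(\boldX, Y)}{\tilde{P}_w(\boldX)} = \frac{w(\boldX)\ptrain(\boldX, Y)}{w(\boldX)\ptrain(\boldX)} = \frac{\ptrain(\boldX, Y)}{\ptrain(\boldX)} = \ptrain(Y \mid \boldX).
\end{equation*}

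There is essentially no hard part here; the only point requiring care is the legitimacy of the cancellation, which is guaranteed because every $w \in \calW$ takes values in $\mathbb{R}^{+}$ (so $w(\boldX) > 0$) and because $\tilde{P}_w$ is well defined with the same support as $\ptrain$ per \definitionref{defn:weighting}. Hence the division is valid wherever $\ptrain(\boldX) > 0$, and the identity $\tilde{P}_w(Y \mid \boldX) = \ptrain(Y \mid \boldX)$ holds throughout the common support, completing the proof. This invariance of the conditional law is exactly what will later let us transfer statements about $\bbE[Y \mid \boldX]$ between $\ptrain$ and $\tilde{P}_w$ in the proof of \theoremref{thrm:findV}.
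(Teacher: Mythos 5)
Your proposal is correct and follows essentially the same route as the paper's own proof: express $\tilde{P}_w(Y \mid \boldX)$ as the ratio of the weighted joint to the weighted marginal, pull $w(\boldX)$ out of the integral over $\calY$ since it does not depend on $y$, and cancel it. Your additional remark justifying the cancellation via $w(\boldX) > 0$ and the shared support is a nice (if minor) extra point of care beyond what the paper states explicitly.
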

\begin{proof}[Proof of  \lemmaref{lemma:invariance}]
    $\forall \boldx \in \calX, y \in \calY$,
    $$
    \begin{aligned}
        & \tilde{P}_w(Y = y | \boldX = \boldx) =  \frac{\tilde{P}_w(Y = y, \boldX=\boldx)}{\tilde{P}_w(\boldX=\boldx)} = \frac{\ptrain(Y = y, \boldX=\boldx)w(\boldx)}{\int_{y'}\tilde{P}_w(\boldX, Y=y')\mathrm{d} y'} \\
        = & \frac{\ptrain(Y = y, \boldX=\boldx)w(\boldx)}{w(\boldx)\int_{y'}\ptrain(\boldX=x, Y=y')\mathrm{d} y'} = \frac{\ptrain(Y=y, \boldX=\boldx)}{\ptrain(\boldX=\boldx)} = \ptrain(Y=y|\boldX=\boldx).
    \end{aligned}
    $$
\end{proof}

Now we prove the original theorem.
\begin{proof}[Proof of \theoremref{thrm:findV}]
    Let $\boldX_{-i}$ denote variables other than $X_i$ and $\mathcal{X}_{-i}$ denote the support of $\boldX_{-i}$.

    Given $X_i \not\in \pmbd(Y)$, there exists a function $f: \mathcal{X}_{-i} \rightarrow \mathcal{Y}$ such that $\mathbb{E}_{\ptrain(\boldX, Y)}[Y|\boldX] = f(\boldX_{-i})$. According to \lemmaref{lemma:invariance}, $\mathbb{E}_{\tilde{P}_w(\boldX, Y)}[Y|\boldX] = \mathbb{E}_{\ptrain(\boldX, Y)}[Y|\boldX] = f(\boldX_{-i})$. As a result, because $\mathbb{E}_{\ptrain(\boldX)}\left[w(\boldX)\lVert \boldX\rVert_2^2\right] < \infty$ and $\mathbb{E}_{\ptrain(\boldX, Y)}\left[w(\boldX)Y^2\right] < \infty$, the covariance between $X_i$ and $Y$ under $\tilde{P}_w$ is
    $$
    \begin{aligned}
        \cov_{\tilde{P}_w}[X_iY] =
        & \mathbb{E}_{\tilde{P}_w(X_i, Y)}[X_i Y] - \mathbb{E}_{\tilde{P}_w(X_i)}[X_i]\mathbb{E}_{\tilde{P}_w(Y)}[Y] \\
        = & \mathbb{E}_{\tilde{P}_w(\boldX)}\left[X_i\mathbb{E}_{\tilde{P}_w(\boldX, Y)}[Y|\boldX]\right]- \mathbb{E}_{\tilde{P}_w(X_i)}[X_i]\mathbb{E}_{\tilde{P}_w(\boldX)}\left[\mathbb{E}_{\tilde{P}_w(\boldX, Y)}[Y|\boldX]\right] \\
        = & \mathbb{E}_{\tilde{P}_w(\boldX)}[X_i f(\boldX_{-i})] - \mathbb{E}_{\tilde{P}_w(X_i)}[X_i]\mathbb{E}_{\tilde{P}_w(\boldX_{-i})}\left[f\left(\boldX_{-i}\right)\right] = 0.
    \end{aligned}
    $$
    The last equation is due to the independence between $X_i$ and $\boldX_{-i}$ in the weighted distribution $\tilde{P}_w$. As a result, the coefficient $\boldbeta_w(X_i)$ is
    $$
        \boldbeta_w(X_i) = \var_{\tilde{P}_w}(X_i)^{-1}\cov_{\tilde{P}_w}[X_iY] = 0.
    $$
\end{proof}

\subsection{Proof of \texorpdfstring{\theoremref{thrm:findS}}{Theorem}}
\begin{proof}
    Let $\boldX_{-i}$ denote the rest variable except $X_i$ and $\ptrain_{-i}$ denote the marginal distribution of $\ptrain$ on $\boldX_{-i}$. Because $X_i \in \pmbd(Y)$, $\mathbb{E}_{\ptrain(\boldX, Y)}[Y | \boldX]$ depends on $X_i$. Hence, there exists a probability density function $\tilde{P}_{-i}$ with the same support of $\ptrain_{-i}$ that satisfies
    \begin{enum}
        \item $\boldX_{-i}$ are mutually independent under $\tilde{P}_{-i}$, and
        \item $g(X_i) \triangleq \mathbb{E}_{\tilde{P}_{-i}(\boldX_{-i})}[\mathbb{E}_{\ptrain(\boldX, Y)}\left[Y | \boldX_{-i}, X_i]\right]$ depends on $X_i$.
    \end{enum}
    Moreover, there exist a probability density function $\tilde{P}_i$ with the same support of $\ptrain_i$ that satisfies $g(X_i)$ is linearly correlated with $X_i$ under $\tilde{P}_i$.
    
    Let $\tilde{P}$ be the joint distribution on $(\boldX, Y)$ and $\tilde{P}(\boldX_{-i}, X_i, Y) = \tilde{P}_{-i}(\boldX_{-i})\tilde{P}_i(X_i)\ptrain(Y|X)$. Hence,
    $$
    \mathbb{E}_{\tilde{P}(X_i, Y)}[Y | X_i] = \mathbb{E}_{\tilde{P}_{-i}(\boldX_{-i})}[\mathbb{E}_{\ptrain(\boldX, Y)}\left[Y | \boldX_{-i}, X_i]\right] = g(X_i).
    $$
    Let $w(\boldX) = \tilde{P}(\boldX) / \ptrain(\boldX)$. Because $\mathbb{E}_{\ptrain(\boldX, Y)}[Y | \boldX]$ depends on $X_i$, $\var_{\ptrain}(X_i) > 0$. Hence, $\var_{\tilde{P}}(X_i) > 0$. As a result, the coefficient on $X_i$ is
    $$
    \begin{aligned}
        & \boldbeta_w(X_i) \\
        = & \frac{1}{\var_{\tilde{P}_i}(X_i)}\left(\mathbb{E}_{\ptrain(\boldX, Y)}[w(\boldX)X_iY] - \mathbb{E}_{\ptrain(\boldX)}[w(\boldX)X_i]\mathbb{E}_{\ptrain(\boldX, Y)}[w(\boldX)Y]\right) \\
        = & \frac{1}{\var_{\tilde{P}_i}(X_i)}\left(\mathbb{E}_{\tilde{P}(X_i, Y)}[X_iY] - \mathbb{E}_{\tilde{P}(X_i)}[X_i]\mathbb{E}_{\tilde{P}(Y)}[Y]\right) \\
        = & \frac{1}{\var_{\tilde{P}_i}(X_i)}\left(\mathbb{E}_{\tilde{P}(X_i)}\left[X_i\mathbb{E}_{\tilde{P}(X_i, Y)}[Y|X_i]\right] - \mathbb{E}_{\tilde{P}_i(X_i)}[X_i] \mathbb{E}_{\tilde{P}_i(X_i)}\left[\mathbb{E}_{\tilde{P}(X_i, Y)}[Y | X_i]\right]\right) \\
        = & \frac{1}{\var_{\tilde{P}_i}(X_i)}\left(\mathbb{E}_{\tilde{P}_i(X_i)}[X_i g(X_i)] - \mathbb{E}_{\tilde{P}_i(X_i)}[X_i]\mathbb{E}_{\tilde{P}_i(X_i)}[g(X_i)]\right) \ne 0.
    \end{aligned}
    $$
\end{proof}

\subsection{Proof of \texorpdfstring{\theoremref{thrm:big-asumptotic}}{Theorem}}
We observe that
$$
\begin{aligned}
    \left\|\hat{\boldbeta}_{\hat{w}} - \boldbeta_{w}\right\|_2^2 & \le \left(\left\|\hat{\boldbeta}_{\hat{w}} - \boldbeta_{\hat{w}}\right\|_2 + \left\|\boldbeta_{\hat{w}} - \boldbeta_{w}\right\|_2\right)^2\\
    & \le 2\left(\left\|\hat{\boldbeta}_{\hat{w}} - \boldbeta_{\hat{w}}\right\|_2^2 + \left\|\boldbeta_{\hat{w}} - \boldbeta_{w}\right\|_2^2\right) \\
    & = 2\left(\left\|\Sigma_{\hat{w}}^{-1/2}\Sigma_{\hat{w}}^{1/2}\left(\hat{\boldbeta}_{\hat{w}} - \boldbeta_{\hat{w}}\right)\right\|_2^2 + \left\|\boldbeta_{\hat{w}} - \boldbeta_{w}\right\|_2^2\right) \\
    & \le 2\left(\left\|\Sigma_{\hat{w}}^{-1/2}\right\|_2^2\left\|\Sigma_{\hat{w}}^{1/2}\left(\hat{\boldbeta}_{\hat{w}} - \boldbeta_{\hat{w}}\right)\right\|_2^2 + \left\|\boldbeta_{\hat{w}} - \boldbeta_{w}\right\|_2^2\right) \\
    & = 2\left(\left\|\Sigma_{\hat{w}}^{-1}\right\|_2\left\|\hat{\boldbeta}_{\hat{w}} - \boldbeta_{\hat{w}}\right\|_{\Sigma_{\hat{w}}}^2 + \left\|\boldbeta_{\hat{w}} - \boldbeta_{w}\right\|_2^2\right)
\end{aligned}
$$
We analyze the upper bounds of the terms in the above equation and the first part of the claim follows from \propositionref{prop:spectral-lower-bound}, \propositionref{prop:condition-satisfied}, and \propositionref{prop:error-weightes}. Furthermore, the second part of the claim is then straightforward from \theoremref{thrm:findV} and \theoremref{thrm:findS}.

\subsubsection{Error Caused by WLS from Finite Samples}
\begin{proposition} \propositionlabel{prop:spectral-lower-bound}
    Suppose \assumptionref{assum:target-weighting} (with parameter $\boundlambda$) and \assumptionref{assum:estimate-weighting} (with parameter $\epsilon$) hold. Then $\left\|\Sigma_{\hat{w}}^{-1}\right\|_2 \le 1/\left(\boundlambda - \epsilon \sqrt{\mathbb{E}\left[\|\boldX\|_2^4\right]}\right)$.
\end{proposition}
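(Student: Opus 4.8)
The plan is to control $\|\Sigma_{\hat{w}}^{-1}\|_2$ through the smallest eigenvalue of $\Sigma_{\hat{w}}$, treating $\Sigma_{\hat{w}}$ as a perturbation of $\Sigma_w$, whose smallest eigenvalue is bounded below by $\boundlambda$ via \assumptionref{assum:target-weighting}. Both $\Sigma_w = \bbE[w(\boldX)\boldX\boldX^T]$ and $\Sigma_{\hat{w}} = \bbE[\hat{w}(\boldX)\boldX\boldX^T]$ are symmetric (and positive semidefinite), and $\|\Sigma_{\hat{w}}^{-1}\|_2 = 1/\minlambda(\Sigma_{\hat{w}})$ once $\Sigma_{\hat{w}}$ is known to be invertible. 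Hence it suffices to show $\minlambda(\Sigma_{\hat{w}}) \ge \boundlambda - \epsilon\sqrt{\bbE[\|\boldX\|_2^4]}$ and that this quantity is strictly positive.

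First I would write $\Sigma_{\hat{w}} = \Sigma_w + \Delta$ with $\Delta \triangleq \bbE[(\hat{w}(\boldX) - w(\boldX))\boldX\boldX^T]$, which is symmetric. By Weyl's inequality (equivalently, the variational characterization $\minlambda(A) = \min_{\|u\|_2=1} u^T A u$), one gets $\minlambda(\Sigma_{\hat{w}}) \ge \minlambda(\Sigma_w) - \|\Delta\|_2 \ge \boundlambda - \|\Delta\|_2$.

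The only place requiring real work is bounding $\|\Delta\|_2$. Here I would use convexity of the spectral norm (Jensen's inequality) to pass the norm inside the expectation, $\|\Delta\|_2 \le \bbE\bigl[\bigl\|(\hat{w}(\boldX) - w(\boldX))\boldX\boldX^T\bigr\|_2\bigr] = \bbE\bigl[|\hat{w}(\boldX) - w(\boldX)|\cdot\|\boldX\|_2^2\bigr]$, using that each $\boldX\boldX^T$ is rank one with spectral norm exactly $\|\boldX\|_2^2$. A single application of Cauchy--Schwarz then yields $\|\Delta\|_2 \le \sqrt{\bbE[(\hat{w}(\boldX) - w(\boldX))^2]}\,\sqrt{\bbE[\|\boldX\|_2^4]} = \epsilon\sqrt{\bbE[\|\boldX\|_2^4]}$, where the definition of $\epsilon$ comes from \assumptionref{assum:estimate-weighting}. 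I expect this norm-inside-expectation step, together with the rank-one identity, to be the main subtlety; everything else is routine eigenvalue perturbation.

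Combining the two bounds gives $\minlambda(\Sigma_{\hat{w}}) \ge \boundlambda - \epsilon\sqrt{\bbE[\|\boldX\|_2^4]}$. Finally, \assumptionref{assum:estimate-weighting} states $\epsilon^2 < \boundlambda^2/\bbE[\|\boldX\|_2^4]$, i.e. $\epsilon\sqrt{\bbE[\|\boldX\|_2^4]} < \boundlambda$, so the lower bound is strictly positive; therefore $\Sigma_{\hat{w}}$ is invertible and $\|\Sigma_{\hat{w}}^{-1}\|_2 = 1/\minlambda(\Sigma_{\hat{w}}) \le 1/\bigl(\boundlambda - \epsilon\sqrt{\bbE[\|\boldX\|_2^4]}\bigr)$, which is exactly the claimed bound.
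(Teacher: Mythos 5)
Your proof is correct and follows essentially the same route as the paper: write $\Sigma_{\hat{w}} = \Sigma_w + \Delta$, bound $\|\Delta\|_2 \le \epsilon\sqrt{\bbE\left[\|\boldX\|_2^4\right]}$ via Cauchy--Schwarz, and apply Weyl's theorem to get $\minlambda(\Sigma_{\hat{w}}) \ge \boundlambda - \epsilon\sqrt{\bbE\left[\|\boldX\|_2^4\right]} > 0$. The only cosmetic difference is that you pass the spectral norm inside the expectation by Jensen's inequality together with the rank-one identity $\|\boldX\boldX^T\|_2 = \|\boldX\|_2^2$, whereas the paper argues through $\sup_{\|c\|_2=1}\|\Delta c\|_2$ and then swaps the supremum with the expectation; both yield the identical bound.
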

\begin{proof}
    Let $\Delta_w(\boldX) \triangleq \hat{w}(\boldX) - w(\boldX)$ and $\Delta_{\Sigma}  = \mathbb{E}\left[\Delta_w(\boldX)\boldX\boldX^T\right]$.
    $$
    \begin{aligned}
        & \|\Delta_{\Sigma}\|_2 \\
        = & \sup_{\|c\|_2=1} \|\Delta_{\Sigma} \cdot c\|_2 = \sup_{\|c\|_2=1} \left\|\bbE\left[\Delta_w(\boldX)\boldX\boldX^Tc\right]\right\|_2 \\
        \le & \sup_{\|c\|_2=1} \mathbb{E}\left[\Delta_w(\boldX) \|\boldX\boldX^Tc\|_2\right]  && \text{(triangle inequality of norms)} \\
        \le & \sup_{\|c\|_2=1} \sqrt{\mathbb{E}\left[\Delta_w(\boldX)^2\right] \mathbb{E}\left[\|\boldX\boldX^Tc\|_2^2\right]} && \text{(Cauchy–Schwarz inequality)} \\
        = & \epsilon \sup_{\|c\|_2=1} \sqrt{\mathbb{E}\left[\|\boldX\boldX^Tc\|_2^2\right]} && (\mathbb{E}\left[\Delta_w(\boldX)^2\right] = \epsilon) \\
        \le & \epsilon \sqrt{\mathbb{E}\left[\sup_{\|c\|_2=1}\|\boldX\boldX^Tc\|_2^2\right]} && (\sup \mathbb{E}[\cdot] \le \mathbb{E}[\sup\cdot]) \\
        = & \epsilon \sqrt{\mathbb{E}\left[\|\boldX\|_2^4\right]}
    \end{aligned}
    $$
    As a result, according to Weyl's theorem \citep{horn2012matrix}, \assumptionref{assum:target-weighting}, and \assumptionref{assum:estimate-weighting},
    $$
    \minlambda\left(\Sigma_{\hat{w}}\right) = \minlambda\left(\Sigma_w + \Delta_{\Sigma} \right) \ge \boundlambda - \|\Delta_{\Sigma}\|_2 \ge \boundlambda - \epsilon \sqrt{\mathbb{E}\left[\|\boldX\|_2^4\right]} > 0.
    $$
    Therefore,
    $$
    \left\|\Sigma_{\hat{w}}^{-1}\right\|_2 = \frac{1}{\minlambda\left(\Sigma_{\hat{w}}\right)} \le \frac{1}{\boundlambda - \epsilon \sqrt{\mathbb{E}\left[\|\boldX\|_2^4\right]}}
    $$
\end{proof}

\begin{proposition} \propositionlabel{prop:condition-satisfied}
    Suppose \assumptionref{assum:bound-covariate} (with parameter $\boundx$), \assumptionref{assum:bound-appr} (with parameter $\boundappr$), \assumptionref{assum:noise} (with parameter $\sigma$), \assumptionref{assum:target-weighting} (with parameter $\boundlambda$), \assumptionref{assum:estimate-weighting} (with parameter $\epsilon$), and \assumptionref{assum:bound-weight} (with parameter $\boundweight$) hold. Then there exist constants $\rho_{\hat{w}}, b_{\hat{w}}, \sigma_{\hat{w}} > 0$ such that weighting function $\hat{w}$ satisfies \conditionref{cond:bounded-statistical} (with parameter $\rho_{\hat{w}}$), \conditionref{cond:bounded-approx} (with parameter $b_{\hat{w}}$), and \conditionref{cond:sub-gaussian} (with parameter $\sigma_{\hat{w}}$) and
    $$
    \left\{
    \begin{aligned}
        \rho_{\hat{w}} & \le \frac{\sqrt{\boundweight}\boundx}{\sqrt{d\left(\boundlambda - \epsilon \sqrt{\mathbb{E}\left[\|\boldX\|_2^4\right]}\right)}}, \\
        b_{\hat{w}} & \le \frac{\boundweight\boundappr\boundx}{\sqrt{d\left(\boundlambda - \epsilon \sqrt{\mathbb{E}\left[\|\boldX\|_2^4\right]}\right)}}, \\
        \sigma_{\hat{w}} & \le \sqrt{\boundweight}\sigma.
    \end{aligned}
    \right.
    $$
    Furthermore, pick any $t > \max\{0, 2.6 - \log d\}$, let
    $$
    n \ge \frac{6\boundweight \boundx^2 (\log d + t)}{\boundlambda - \epsilon \sqrt{\mathbb{E}\left[\|\boldX\|_2^4\right]}}.
    $$
    Then with probability at least $1 - 3e^{-t}$,
    $$
    \begin{aligned}
        \left\|\hat{\boldbeta}_{\hat{w}} - \boldbeta_{\hat{w}}\right\|_{\Sigma_{\hat{w}}}^2 & \le \frac{2\boundweight\sigma^2(d + 2\sqrt{td} + 2t)}{n} + \frac{4\boundweight\boundx^2\bbE\left[\hat{w}(\boldX)\appr(\boldX)^2\right]}{n\left(\boundlambda - \epsilon \sqrt{\mathbb{E}\left[\|\boldX\|_2^4\right]}\right)}\left(1+\sqrt{8t}\right)^2 + o(1 / n) \\
        & \le \frac{2\boundweight\sigma^2(d + 2\sqrt{td} + 2t)}{n} + \frac{4\boundweight\boundx^2\boundappr^2(1+\epsilon)}{n\left(\boundlambda - \epsilon \sqrt{\mathbb{E}\left[\|\boldX\|_2^4\right]}\right)}\left(1+\sqrt{8t}\right)^2 + o(1 / n).
    \end{aligned}
    $$
\end{proposition}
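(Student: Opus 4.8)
The plan is to recast the $\hat{w}$-weighted least squares problem as an ordinary (unweighted) random-design least squares problem and then invoke the general random-design bound established earlier (following \citet{hsu2014random}), once its three hypotheses \conditionref{cond:bounded-statistical}, \conditionref{cond:bounded-approx}, and \conditionref{cond:sub-gaussian} have been verified for $\hat{w}$. Concretely, multiplying the loss by $\hat{w}(\boldX)$ is the same as running unweighted least squares on the reweighted covariate $\sqrt{\hat{w}(\boldX)}\boldX$ and response $\sqrt{\hat{w}(\boldX)}Y$: the resulting second-moment matrix is exactly $\Sigma_{\hat{w}}$, the population minimizer is $\boldbeta_{\hat{w}}$, and the split $Y = \langle\boldbeta_w,\boldX\rangle + \appr(\boldX) + \noise(\boldX)$ carries over with each piece scaled by $\sqrt{\hat{w}(\boldX)}$. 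The key quantitative input for every condition is the spectral lower bound $\minlambda(\Sigma_{\hat{w}}) \ge \boundlambda - \epsilon\sqrt{\bbE[\|\boldX\|_2^4]}$ from \propositionref{prop:spectral-lower-bound}, which controls $\|\Sigma_{\hat{w}}^{-1/2}\|_2$.

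First I would check the three conditions and read off the stated parameters. For \conditionref{cond:bounded-statistical} I bound the weighted leverage $\sqrt{\hat{w}(\boldX)}\,\|\Sigma_{\hat{w}}^{-1/2}\boldX\|_2 \le \sqrt{\boundweight}\,\|\Sigma_{\hat{w}}^{-1/2}\|_2\,\boundx$ using \assumptionref{assum:bound-weight} ($\hat{w}\le\boundweight$) and \assumptionref{assum:bound-covariate} ($\|\boldX\|_2\le\boundx$); dividing by $\sqrt{d}$ and inserting \propositionref{prop:spectral-lower-bound} yields $\rho_{\hat{w}}\le \sqrt{\boundweight}\boundx/\sqrt{d(\boundlambda-\epsilon\sqrt{\bbE[\|\boldX\|_2^4]})}$. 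For \conditionref{cond:bounded-approx} I multiply the same leverage bound by the extra factor $\sqrt{\hat{w}(\boldX)}\,|\appr(\boldX)| \le \sqrt{\boundweight}\,\boundappr$ from \assumptionref{assum:bound-appr}, the two powers of $\sqrt{\boundweight}$ combining into $\boundweight$ to give the stated $b_{\hat{w}}$. For \conditionref{cond:sub-gaussian} I note that $\sqrt{\hat{w}(\boldX)}\,\noise(\boldX)$ is, conditionally on $\boldX$, sub-gaussian with parameter $\sqrt{\hat{w}(\boldX)}\,\sigma \le \sqrt{\boundweight}\,\sigma$ by \assumptionref{assum:noise}, hence $\sigma_{\hat{w}}\le\sqrt{\boundweight}\sigma$.

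Next I would feed $(\rho_{\hat{w}}, b_{\hat{w}}, \sigma_{\hat{w}})$ into the general theorem. Its sample-size requirement $n \gtrsim \rho_{\hat{w}}^2 d(\log d + t)$ becomes the stated $n \ge 6\boundweight\boundx^2(\log d + t)/(\boundlambda - \epsilon\sqrt{\bbE[\|\boldX\|_2^4]})$ after substituting $\rho_{\hat{w}}^2 d = \boundweight\boundx^2/(\boundlambda-\epsilon\sqrt{\bbE[\|\boldX\|_2^4]})$, and its three failure events account for the probability $1-3e^{-t}$ and the restriction $t>\max\{0,2.6-\log d\}$. The theorem's noise contribution $\sigma_{\hat{w}}^2(d+2\sqrt{td}+2t)/n$ and its approximation contribution, in which $\rho_{\hat{w}}^2 d$ multiplies the mean-square weighted approximation error $\bbE[\hat{w}(\boldX)\appr(\boldX)^2]$ against the $(1+\sqrt{8t})^2$ tail factor, reproduce the first displayed inequality. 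The second inequality then follows from $\bbE[\hat{w}(\boldX)\appr(\boldX)^2] \le \boundappr^2\,\bbE[\hat{w}(\boldX)] \le \boundappr^2(1+\epsilon)$, where $\bbE[\hat{w}(\boldX)]\le 1 + \bbE[\hat{w}(\boldX)-w(\boldX)] \le 1+\epsilon$ uses $\bbE[w(\boldX)]=1$ together with Cauchy--Schwarz and \assumptionref{assum:estimate-weighting}.

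The main obstacle I anticipate is bookkeeping rather than a single hard idea: tracking constants through the reduction so that the powers of $\boundweight$, the factor $d$, and the spectral denominator land exactly as stated, and confirming that the cited bound applies with $\boldbeta_w$ (rather than the reweighted optimum) as reference predictor. The latter is the delicate point—one must check that the empirical moment identities underlying the bound require only the decomposition of $Y$ and the conditional mean-zero property of $\noise$, so that the non-orthogonality $\bbE[\hat{w}(\boldX)\boldX\appr(\boldX)]\ne 0$ is harmless: the estimation error $\hat{\boldbeta}_{\hat{w}}-\boldbeta_{\hat{w}}$ compares two quantities defined consistently through $\Sigma_{\hat{w}}^{-1}$, so only the empirical-versus-population fluctuation of the approximation contribution, not its mean, needs to be controlled.
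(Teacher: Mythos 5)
Your proof follows the paper's own argument essentially step for step: the paper verifies \conditionref{cond:bounded-statistical}, \conditionref{cond:bounded-approx}, and \conditionref{cond:sub-gaussian} for $\hat{w}$ with exactly your bounds (combining \assumptionref{assum:bound-weight} and \assumptionref{assum:bound-covariate} with \propositionref{prop:spectral-lower-bound} for the leverage term, adding \assumptionref{assum:bound-appr} for the approximation term, and absorbing $\sqrt{\hat{w}(\boldX)} \le \sqrt{\boundweight}$ into the sub-gaussian parameter from \assumptionref{assum:noise}), then feeds these parameters into \theoremref{thrm:wls} and closes with the same estimate $\bbE[\hat{w}(\boldX)\appr(\boldX)^2] \le \boundappr^2 \bbE[|\hat{w}(\boldX)|] \le \boundappr^2\left(\bbE[w(\boldX)] + \bbE[|w(\boldX)-\hat{w}(\boldX)|]\right) \le \boundappr^2(1+\epsilon)$. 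The one subtlety you flag at the end --- whether the general bound tolerates $\appr$ being defined through $\boldbeta_w$ while the weight is $\hat{w}$ --- is passed over silently in the paper's proof, which simply cites \theoremref{thrm:wls}, so your attempt is, if anything, slightly more careful on that point.
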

\begin{proof}
    Based on \assumptionref{assum:bound-covariate}, \assumptionref{assum:target-weighting}, \assumptionref{assum:estimate-weighting}, and \assumptionref{assum:bound-weight}, according to \propositionref{prop:spectral-lower-bound}, almost surely,
    $$
    \frac{\sqrt{\hat{w}(\boldX)}\left\|\Sigma_{\hat{w}}^{-1/2}\boldX\right\|_2}{\sqrt{d}} \le \frac{\sqrt{\boundweight}\left\|\Sigma_{\hat{w}}^{-1/2}\right\|_2\|\boldX\|_2}{\sqrt{d}} \le \frac{\sqrt{\boundweight}\boundx}{\sqrt{d\left(\boundlambda - \epsilon \sqrt{\mathbb{E}\left[\|\boldX\|_2^4\right]}\right)}}
    $$
    Based on \assumptionref{assum:bound-covariate}, \assumptionref{assum:bound-appr}, \assumptionref{assum:target-weighting}, \assumptionref{assum:estimate-weighting}, and \assumptionref{assum:bound-weight}, according to \propositionref{prop:spectral-lower-bound}, almost surely,
    $$
    \frac{\hat{w}(\boldX)\left\|\Sigma_{\hat{w}}^{-1/2}\appr(\boldX)\boldX\right\|_2}{\sqrt{d}} \le \frac{\boundweight\boundappr\boundx\left\|\Sigma_{\hat{w}}^{-1/2}\right\|}{\sqrt{d}} \le \frac{\boundweight\boundappr\boundx}{\sqrt{d\left(\boundlambda - \epsilon \sqrt{\mathbb{E}\left[\|\boldX\|_2^4\right]}\right)}}
    $$
    Based on \assumptionref{assum:noise} and \assumptionref{assum:bound-weight}, almost surely,
    $$
    \forall \eta \in \bbR, \quad \bbE\left[\left.\exp\left(\eta\sqrt{\hat{w}(\boldX)}\noise(\boldX)\right) \right| \boldX\right] \le \exp\left(\eta^2 \hat{w}(\boldX) \sigma^2 /2\right) \le \exp\left(\eta^2 \left(\sqrt{\boundweight}\sigma\right)^2/2\right).
    $$
    Because
    $$
    \begin{aligned}
        \bbE\left[\hat{w}(\boldX)\appr(\boldX)^2\right] & \le \boundappr^2\bbE\left[|\hat{w}(\boldX)|\right] \le \boundappr^2\left(\bbE[w(\boldX)] + \bbE[|w(\boldX) - \hat{w}(\boldX)|]\right) \\
        & \le \boundappr^2\left(1 + \sqrt{\bbE\left[\left(w(\boldX) - \hat{w}(\boldX)\right)^2\right]}\right) = \boundappr^2(1 + \epsilon).
    \end{aligned}
    $$
    Now the claim follows from \theoremref{thrm:wls}. \theoremref{thrm:wls} provides the non-asymptotic property of WLS and we analyze it in detail in \sectionref{sect:WLS}.
\end{proof}

\subsubsection{Error Caused by Imperfectly Learned Weights}
\begin{proposition} \propositionlabel{prop:error-weightes}
    Suppose \assumptionref{assum:target-weighting} (with parameter $\boundlambda$) and \assumptionref{assum:estimate-weighting} (with parameter $\epsilon$) hold. Then
    $$
    \|\boldbeta_{\hat{w}}-\boldbeta_w\|_2 \le \frac{\epsilon\|\Sigma_w\|_2\|\boldbeta_w\|_2}{\boundlambda - \epsilon \ \sqrt{\mathbb{E}\left[\|\boldX\|_2^4\right]}}\left(\frac{\sqrt{\mathbb{E}\left[\|\boldX\|_2^4\right]}}{\|\Sigma_w\|_2}+\frac{\sqrt{\mathbb{E}\left[\|\boldX Y\|_2^2\right]}}{\|\bbE[w(\boldX)\boldX Y]\|_2}\right)
    $$
\end{proposition}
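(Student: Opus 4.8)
The plan is to bound the gap between the two population weighted least squares solutions $\boldbeta_w = \Sigma_w^{-1}\bbE[w(\boldX)\boldX Y]$ and $\boldbeta_{\hat{w}} = \Sigma_{\hat{w}}^{-1}\bbE[\hat{w}(\boldX)\boldX Y]$ by a first-order perturbation decomposition. Writing $b_w \triangleq \bbE[w(\boldX)\boldX Y]$ and $b_{\hat{w}} \triangleq \bbE[\hat{w}(\boldX)\boldX Y]$, and reusing the notation $\Delta_w(\boldX) \triangleq \hat{w}(\boldX) - w(\boldX)$ and $\Delta_\Sigma = \Sigma_{\hat{w}} - \Sigma_w = \bbE[\Delta_w(\boldX)\boldX\boldX^T]$ from the proof of \propositionref{prop:spectral-lower-bound}, I would first establish the identity
$$\boldbeta_{\hat{w}} - \boldbeta_w = \Sigma_{\hat{w}}^{-1}\left[(b_{\hat{w}} - b_w) - \Delta_\Sigma\boldbeta_w\right],$$
which follows from the resolvent identity $\Sigma_{\hat{w}}^{-1} - \Sigma_w^{-1} = -\Sigma_{\hat{w}}^{-1}\Delta_\Sigma\Sigma_w^{-1}$ together with $\boldbeta_w = \Sigma_w^{-1}b_w$. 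Taking norms and applying submultiplicativity then gives
$$\|\boldbeta_{\hat{w}} - \boldbeta_w\|_2 \le \|\Sigma_{\hat{w}}^{-1}\|_2\left(\|b_{\hat{w}} - b_w\|_2 + \|\Delta_\Sigma\|_2\|\boldbeta_w\|_2\right).$$

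Next I would collect the three ingredients on the right. \propositionref{prop:spectral-lower-bound} already supplies $\|\Sigma_{\hat{w}}^{-1}\|_2 \le 1/(\boundlambda - \epsilon\sqrt{\bbE[\|\boldX\|_2^4]})$, and its proof gives $\|\Delta_\Sigma\|_2 \le \epsilon\sqrt{\bbE[\|\boldX\|_2^4]}$. The one new estimate is for $\|b_{\hat{w}} - b_w\|_2 = \|\bbE[\Delta_w(\boldX)\boldX Y]\|_2$, which I would bound by the triangle inequality for norms followed by Cauchy--Schwarz, exactly mirroring the treatment of $\Delta_\Sigma$:
$$\|b_{\hat{w}} - b_w\|_2 \le \bbE\left[|\Delta_w(\boldX)|\,\|\boldX Y\|_2\right] \le \sqrt{\bbE[\Delta_w(\boldX)^2]}\,\sqrt{\bbE[\|\boldX Y\|_2^2]} = \epsilon\sqrt{\bbE[\|\boldX Y\|_2^2]},$$
where the last equality uses $\bbE[\Delta_w(\boldX)^2] = \epsilon^2$ from \assumptionref{assum:estimate-weighting}.

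Substituting the three bounds yields the tight intermediate estimate
$$\|\boldbeta_{\hat{w}} - \boldbeta_w\|_2 \le \frac{\epsilon\left(\sqrt{\bbE[\|\boldX Y\|_2^2]} + \sqrt{\bbE[\|\boldX\|_2^4]}\,\|\boldbeta_w\|_2\right)}{\boundlambda - \epsilon\sqrt{\bbE[\|\boldX\|_2^4]}},$$
after which the final step is to rewrite this in the stated symmetric form. Factoring the common prefactor $\epsilon\|\Sigma_w\|_2\|\boldbeta_w\|_2/(\boundlambda - \epsilon\sqrt{\bbE[\|\boldX\|_2^4]})$, the summand $\sqrt{\bbE[\|\boldX\|_2^4]}\,\|\boldbeta_w\|_2$ already matches the $\sqrt{\bbE[\|\boldX\|_2^4]}/\|\Sigma_w\|_2$ term; for the other summand I would invoke $b_w = \Sigma_w\boldbeta_w$, so that $\|b_w\|_2 \le \|\Sigma_w\|_2\|\boldbeta_w\|_2$ and hence $1 \le \|\Sigma_w\|_2\|\boldbeta_w\|_2/\|b_w\|_2$. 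Multiplying $\sqrt{\bbE[\|\boldX Y\|_2^2]}$ by this factor $\ge 1$ only enlarges the bound and turns it into $\|\Sigma_w\|_2\|\boldbeta_w\|_2\sqrt{\bbE[\|\boldX Y\|_2^2]}/\|b_w\|_2$, producing exactly the claimed inequality.

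I expect the only subtle point to be this last relaxation: the target is deliberately written in a looser symmetric form so that a single prefactor can be pulled out, and recognizing that $\|b_w\|_2 \le \|\Sigma_w\|_2\|\boldbeta_w\|_2$ is precisely what bridges the tight intermediate estimate and the stated one. Everything else is a routine operator-norm perturbation argument that reuses the machinery already developed for \propositionref{prop:spectral-lower-bound}.
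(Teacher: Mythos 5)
Your proof is correct, and it reaches the stated bound by a route that genuinely differs from the paper's in its key step. The paper treats the pair of linear systems $\Sigma_w\boldbeta_w = \bbE[w(\boldX)\boldX Y]$ and $(\Sigma_w + \Delta_\Sigma)\boldbeta_{\hat{w}} = \bbE[w(\boldX)\boldX Y] + \Delta_b$ and then invokes the sensitivity lemma of Chandrasekaran (\lemmaref{lemma:linear-systems}) as a black box, which directly outputs the relative-error form with the factor $\frac{\|\Sigma_w\|_2\|\Sigma_w^{-1}\|_2}{1-\|\Sigma_w^{-1}\|_2\|\Delta_\Sigma\|_2}$; into this it substitutes the same three estimates you use, namely $\|\Delta_\Sigma\|_2 \le \epsilon\sqrt{\bbE[\|\boldX\|_2^4]}$, $\|\Delta_b\|_2 \le \epsilon\sqrt{\bbE[\|\boldX Y\|_2^2]}$ (proved by the identical triangle-inequality-plus-Cauchy--Schwarz computation), and $1/\|\Sigma_w^{-1}\|_2 = \boundlambda$. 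You instead re-derive the perturbation bound inline via the resolvent identity $\Sigma_{\hat{w}}^{-1} - \Sigma_w^{-1} = -\Sigma_{\hat{w}}^{-1}\Delta_\Sigma\Sigma_w^{-1}$, which makes the argument self-contained (no external citation needed) and in fact produces a strictly tighter intermediate estimate: your bound carries $\sqrt{\bbE[\|\boldX Y\|_2^2]}$ where the stated one carries $\|\Sigma_w\|_2\|\boldbeta_w\|_2\sqrt{\bbE[\|\boldX Y\|_2^2]}\,/\,\|\bbE[w(\boldX)\boldX Y]\|_2$, and your closing observation that $\|\bbE[w(\boldX)\boldX Y]\|_2 = \|\Sigma_w\boldbeta_w\|_2 \le \|\Sigma_w\|_2\|\boldbeta_w\|_2$ is exactly the slack hidden inside the cited lemma, so the two proofs reconcile at that point. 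What the paper's route buys is brevity (one citation replaces the algebra); what yours buys is transparency, a sharper intermediate inequality, and an explicit accounting of where the looseness in the stated form comes from. The only hypothesis you use implicitly is invertibility of $\Sigma_{\hat{w}}$, which \propositionref{prop:spectral-lower-bound} supplies (via \assumptionref{assum:estimate-weighting}, which forces $\boundlambda - \epsilon\sqrt{\bbE[\|\boldX\|_2^4]} > 0$), and you correctly lean on that proposition for the prefactor, so nothing is missing.
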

\begin{proof}
    Let $\Delta_w (\boldX) \triangleq \hat{w}(\boldX) - w(\boldX)$ and $\Delta_b  \triangleq \mathbb{E}\left[\Delta_w(\boldX)\boldX Y\right]$. We can prove that
    $$
    \begin{aligned}
        \|\Delta_b\|_2 & = \|\mathbb{E}\left[\Delta w(\boldX)\boldX Y\right]\|_2 \\
        & \le \mathbb{E} \left[\Delta w(\boldX)\|\boldX Y\|_2\right] && (\text{triangle inequality of norms})\\
        & \le \sqrt{\mathbb{E}[\Delta w(\boldX)^2]\mathbb{E}\left[\|\boldX Y\|_2^2\right]} && \text{(Cauchy–Schwarz inequality)} \\
        & = \epsilon \sqrt{\mathbb{E}\left[\|\boldX Y\|_2^2\right]} && (\mathbb{E}\left[\Delta w(\boldX)^2\right] = \epsilon) \\
    \end{aligned}
    $$
    In addition, $\Sigma_w\boldbeta_w = \bbE[w(\boldX)\boldX Y]$ and $(\Sigma_w + \Delta_\Sigma)\boldbeta_{\hat{w}} = \bbE[w(\boldX)\boldX Y] + \Delta_b$. As a result, according to \lemmaref{lemma:linear-systems} and \propositionref{prop:spectral-lower-bound},
    $$
    \begin{aligned}
        \frac{\|\boldbeta_{\hat{w}}-\boldbeta_w\|_2}{\|\boldbeta_w\|_2} & \le \frac{\|\Sigma_w\|_2\|\Sigma_w^{-1}\|_2}{1-\|\Sigma_w^{-1}\|_2\|\Delta_\Sigma\|_2}\left(\frac{\|\Delta_\Sigma\|_2}{\|\Sigma_w\|_2}+\frac{\|\Delta_b\|_2}{\|\bbE[w(\boldX)\boldX Y]\|_2}\right) \\
        & \le \frac{\epsilon\|\Sigma_w\|_2}{\boundlambda - \epsilon \ \sqrt{\mathbb{E}\left[\|\boldX\|_2^4\right]}}\left(\frac{\sqrt{\mathbb{E}\left[\|\boldX\|_2^4\right]}}{\|\Sigma_w\|_2}+\frac{\sqrt{\mathbb{E}\left[\|\boldX Y\|_2^2\right]}}{\|\bbE[w(\boldX)\boldX Y]\|_2}\right)
    \end{aligned}
    $$
\end{proof}

\subsection{Proof of \texorpdfstring{\propositionref{prop:unique-boundary-all-blankets}}{Proposition}}
The proof is based on the following lemma.
\begin{lemma} [Intersection Property] \lemmalabel{lemma:intersection}
    Under \assumptionref{assum:positive}, let $\boldV_1$, $\boldV_2$, and $\boldS$ be subset of $\boldX$.
    Then,
    $$
    Y \perp \boldV_1 \mid (\boldS \cup \boldV_2) \,\,\, \& \,\,\, Y \perp \boldV_2 \mid (\boldS \cup \boldV_1) \Longrightarrow Y \perp (\boldV_1 \cup \boldV_2) \mid \boldS.
    $$
\end{lemma}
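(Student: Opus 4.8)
The plan is to prove this classical intersection property directly from the strictly positive density assumption (\assumptionref{assum:positive}), which guarantees that every conditional density appearing below is well-defined and strictly positive on the whole support. I work with the conditional density of $Y$ under $\ptrain$ (marginalizing out any coordinates of $\boldX$ outside $\boldV_1 \cup \boldV_2 \cup \boldS$; positivity of the full joint keeps every such marginal strictly positive, so the conditionals never divide by zero). Treating $\boldV_1$, $\boldV_2$, $\boldS$ as disjoint — the relevant case, since any shared coordinates can be folded into $\boldS$ — I first rewrite the two hypotheses in density form: $Y \perp \boldV_1 \mid (\boldS \cup \boldV_2)$ becomes $\ptrain(y \mid \bolds, \boldv_1, \boldv_2) = \ptrain(y \mid \bolds, \boldv_2)$, and $Y \perp \boldV_2 \mid (\boldS \cup \boldV_1)$ becomes $\ptrain(y \mid \bolds, \boldv_1, \boldv_2) = \ptrain(y \mid \bolds, \boldv_1)$, both holding for all values $y, \bolds, \boldv_1, \boldv_2$ in the support.

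Next I would chain the two relations through their common left-hand side to get $\ptrain(y \mid \bolds, \boldv_2) = \ptrain(y \mid \bolds, \boldv_1)$ for every pair $(\boldv_1, \boldv_2)$. The key step is an \emph{argument-separation} argument: the left-hand side does not depend on $\boldv_1$ and the right-hand side does not depend on $\boldv_2$, yet they coincide for all arguments; hence each side must be constant in both $\boldv_1$ and $\boldv_2$ and equal to some function $h(y, \bolds)$ of $(y,\bolds)$ alone. Substituting back gives $\ptrain(y \mid \bolds, \boldv_1, \boldv_2) = h(y, \bolds)$, so the conditional law of $Y$ given $(\boldS, \boldV_1, \boldV_2)$ is free of $(\boldv_1, \boldv_2)$.

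To finish, I would identify $h$ with the target conditional by marginalization: integrating $\ptrain(y \mid \bolds, \boldv_1, \boldv_2) = h(y,\bolds)$ against $\ptrain(\boldv_1, \boldv_2 \mid \bolds)$ over $(\boldv_1, \boldv_2)$ yields $\ptrain(y \mid \bolds) = h(y,\bolds)$, since the conditional density integrates to one. Combining, $\ptrain(y \mid \bolds, \boldv_1, \boldv_2) = \ptrain(y \mid \bolds)$ for all values, which is precisely $Y \perp (\boldV_1 \cup \boldV_2) \mid \boldS$.

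The only genuine subtlety — and the place where \assumptionref{assum:positive} is indispensable — is that the argument-separation step needs the density equalities to hold \emph{pointwise} on the entire support, not merely almost everywhere. Without strict positivity, the conditionals could be undefined or disagree on null sets, and the ``constant in both arguments'' conclusion (hence the identification $h(y,\bolds) = \ptrain(y \mid \bolds)$) could fail. Strict positivity removes these exceptional sets, so the deduction is clean; everything else is routine manipulation of conditional densities.
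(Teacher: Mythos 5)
Your proof is correct, but it is worth noting that the paper does not actually prove this lemma at all: it simply cites \citet[Section 3.1.2]{pearl2014probabilistic} for the intersection property. What you have written is, in effect, a reconstruction of that standard argument — express both hypotheses as pointwise identities of conditional densities, chain them to get $\ptrain(y \mid \bolds, \boldv_1) = \ptrain(y \mid \bolds, \boldv_2)$ for \emph{all} pairs $(\boldv_1, \boldv_2)$, use the fact that a function of $\boldv_1$ alone equaling a function of $\boldv_2$ alone everywhere forces both to be a function $h(y,\bolds)$ of $(y,\bolds)$ only, and then identify $h$ with $\ptrain(y\mid\bolds)$ by integrating against $\ptrain(\boldv_1,\boldv_2\mid\bolds)$. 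Your emphasis on where strict positivity enters is also the right one: it guarantees both that all conditionals are defined pointwise and that the support is a product set, so the chaining across arbitrary $(\boldv_1,\boldv_2)$ pairs is legitimate. So relative to the paper your proposal is strictly more self-contained; relative to the cited source it is essentially the same proof.

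One caveat: your parenthetical reduction ``any shared coordinates can be folded into $\boldS$'' is not a valid way to handle overlapping sets. If $\boldV_1$ and $\boldV_2$ share variables outside $\boldS$, the lemma as literally stated is false — take $\boldV_1 = \boldV_2 = \{W\}$ and $\boldS = \emptyset$: both hypotheses read $Y \perp W \mid W$ and hold trivially, yet the conclusion $Y \perp W$ generally fails — and folding the overlap into $\boldS$ proves a different (weaker) conclusion than the one claimed. The disjoint case you actually prove is the intended reading (it is what Pearl's axiom asserts and all the paper needs, since in the Markov-blanket application the sets are disjoint by construction), so this is a defect in the lemma's phrasing rather than a gap in your argument, but the justification you gave for ignoring overlap should be replaced by an explicit disjointness hypothesis.
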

The proof of \lemmaref{lemma:intersection} can be found in \citep[Section 3.1.2]{pearl2014probabilistic}. Now we prove the original theorem.
\begin{proof}[Proof of \propositionref{prop:unique-boundary-all-blankets}]
    According to \citet{statnikov2013algorithms}, if the distribution $\ptrain$ satisfies the intersection property, then there exists a unique Markov boundary of $Y$.

    Next we prove the exact form of the Markov blankets. On the one hand, from \lemmaref{lemma:intersection}, we can know that under \assumptionref{assum:positive}, if $\boldS_1$ and $\boldS_2$ are Markov blankets of $Y$, so does $\boldS_1 \cap \boldS_2$. As a result, for any $\boldS \in \mbl(Y)$, we have $\boldS \cap \mbd(Y) \in \mbl(Y)$. Because $\mbd(Y)$ is the minimal element in $\mbl(Y)$, we have $|\boldS \cap \mbd(Y)| \ge |\mbd(Y)|$. Hence, $\mbd(Y) \subseteq \boldS$.
    
    On the other hand, for any $\boldS$ that satisfies $\mbd(Y) \subseteq \boldS \subseteq \boldX$. Let $\boldV = \boldX \backslash \boldS$ and $\boldW = \boldS \backslash \mbd(Y)$. Then
    $$
    \begin{aligned}
        \ptrain(Y, \boldV | \boldS) = & \frac{\ptrain(Y, \boldV, \mbd(Y), \boldW)}{\ptrain(\boldS)} = \frac{\ptrain(Y, \boldV, \boldW | \mbd(Y))\ptrain(\mbd(Y))}{\ptrain(\boldS)} \\
        = & \frac{\ptrain(Y | \mbd(Y))\ptrain(\boldV, \boldW | \mbd(Y))\ptrain(\mbd(Y))}{\ptrain(\boldS)} \\
        = & \frac{\ptrain(Y|\mbd(Y))\ptrain(\boldV, \boldW, \mbd(Y))}{\ptrain(\boldS)} \\
        = & \frac{\ptrain(Y | \mbd(Y)) \ptrain(\boldV, \boldS)}{\ptrain(\boldS)} =  \ptrain(Y | \boldS)\ptrain(\boldV | \boldS).
    \end{aligned}
    $$
    As a result, $Y \perp \boldV \mid \boldS$ and $\boldS$ is a Markov blanket of $Y$. To conclude, $\mbl(Y) = \{\boldS \subseteq \boldX \mid \mbd(Y) \subseteq \boldS\}$.
\end{proof}

\subsection{Proof of \texorpdfstring{\theoremref{thrm:subset}}{Theorem}}
\begin{proof}
    $\forall \boldS \in \mbl(Y)$, $Y \perp (\boldX \backslash \boldS) \mid \boldS$. Hence $\mathbb{E}[Y | \boldX] = \mathbb{E}[Y | \boldS]$ and $\boldS \in \pmbl(Y)$, which implies $\mbl(Y) \subseteq \pmbl(Y)$.
    
    Therefore, $\forall \boldS \in \mbl(Y)$, $\boldS \in \pmbl(Y)$. According to \theoremref{thrm:unique-stable-set}, $\pmbd(Y) \subseteq \boldS$. 
    In particular, let $\boldS = \mbd(Y) \in \mbl(Y)$ and we have $\pmbd(Y) \subseteq \mbd(Y)$.
\end{proof}

\subsection{Proof of \texorpdfstring{\theoremref{thrm:markov-ood}}{Theorem}}
The proof is based on the following proposition.
\begin{proposition} [\citet{statnikov2013algorithms, strobl2016markov}] \propositionlabel{prop:markov-iid}
    Suppose $\mathbb{M}$ is a performance metric that is maximized only when $P(Y | \boldX)$ is estimated accurately and $\mathbb{L}$ is a learning algorithm that can approximate any conditional probability distribution. Suppose $\boldS \subseteq \boldX$ is a subset of variables, then
    \begin{enum}
        \item $\boldS$ is a Markov blanket of $Y$ if and only if it is an optimal predictor of $Y$, and
        \item $\boldS$ is a Markov boundary of $Y$ if and only if it is a minimal and optimal predictor of $Y$.
    \end{enum}
\end{proposition}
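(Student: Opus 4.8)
The plan is to route the entire argument through a single bridge equivalence: for any $\boldS \subseteq \boldX$,
\begin{equation*}
\boldS \text{ is a Markov blanket of } Y \iff P(Y\mid\boldS) = P(Y\mid\boldX).
\end{equation*}
This is immediate from the definition in \equationref{eq:blanket}: the condition $Y \perp (\boldX\backslash\boldS)\mid\boldS$ says exactly that conditioning additionally on $\boldX\backslash\boldS$ changes nothing, i.e. $P(Y\mid\boldS,\boldX\backslash\boldS)=P(Y\mid\boldS)$, which is the same as $P(Y\mid\boldX)=P(Y\mid\boldS)$. No positivity assumption is needed for this reduction; it is purely the definitional unfolding of conditional independence.

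For part~1, I would first prove the forward direction. If $\boldS$ is a Markov blanket, the bridge gives $P(Y\mid\boldS)=P(Y\mid\boldX)$; since $\mathbb{L}$ can approximate any conditional distribution, it reproduces $P(Y\mid\boldS)=P(Y\mid\boldX)$ from the inputs $\boldS$, so $\mathbb{M}$ attains its maximum and $\boldS$ is an optimal predictor. For the converse — which I expect to be the delicate step — the key observation is that $\boldX$ itself is trivially a Markov blanket (since $\boldX\backslash\boldX=\emptyset$ and $Y\perp\emptyset\mid\boldX$), hence an optimal predictor by the forward direction; this certifies that the global maximum of $\mathbb{M}$ is attained precisely when $P(Y\mid\boldX)$ is reproduced exactly. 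Now suppose $\boldS$ is an optimal predictor. A learner with inputs $\boldS$ outputs an estimate that is a function of $\boldS$ alone, and since $\mathbb{M}$ is maximized \emph{only} when this estimate equals $P(Y\mid\boldX)$, optimality of $\boldS$ forces $P(Y\mid\boldX)$ to depend on $\boldX$ solely through $\boldS$, equivalently $P(Y\mid\boldX)=P(Y\mid\boldS)$. The bridge equivalence then returns that $\boldS$ is a Markov blanket.

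Part~2 would follow by a purely bookkeeping argument layered on part~1. If $\boldS$ is a Markov boundary, it is a Markov blanket (hence optimal by part~1) and none of its proper subsets is a Markov blanket (hence none is optimal by part~1), so $\boldS$ is a minimal and optimal predictor. Conversely, if $\boldS$ is a minimal and optimal predictor, part~1 converts ``optimal predictor'' into ``Markov blanket'' and ``no proper subset is optimal'' into ``no proper subset is a Markov blanket,'' which is exactly the definition of a minimal Markov blanket, i.e. a Markov boundary.

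The main obstacle is the converse of part~1: making rigorous the claim that a learner constrained to inputs $\boldS$ can match the true conditional $P(Y\mid\boldX)$ only when $P(Y\mid\boldX)$ already depends on $\boldX$ through $\boldS$. This rests on reading the two hypotheses precisely — $\mathbb{M}$ being maximized solely at the true conditional $P(Y\mid\boldX)$, and $\mathbb{L}$ being expressive enough to realize any conditional of its inputs — together with the fact that $\boldX$ furnishes a trivial Markov blanket certifying that the global optimum is attainable. Everything else reduces to the definitional bridge equivalence.
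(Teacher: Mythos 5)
Your proof is correct and takes essentially the same approach the paper relies on: the paper does not prove this proposition itself (it cites \citet{statnikov2013algorithms,strobl2016markov}), but its proof of the exactly parallel \lemmaref{lemma:stable-sets-and-iid} for stable variable sets follows your structure step for step --- the definitional bridge (there, $\mathbb{E}[Y|\boldS]=\mathbb{E}[Y|\boldX]$ in place of $P(Y|\boldS)=P(Y|\boldX)$), the forward direction via expressiveness of $\mathbb{L}$, the converse via the observation that $\boldX$ is trivially a blanket certifying that the unique optimum of $\mathbb{M}$ is attained at the true conditional, and the same bookkeeping for minimality.
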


Now we can prove the original theorem.

\begin{proof}[Proof of \theoremref{thrm:markov-ood}]
    We use $\mbltest$ and $\mbdtest$ to denote the Markov blankets and Markov boundary in the test distribution. We first prove that $\mbltest(Y) = \mbl(Y)$ and $\mbdtest(Y) = \mbd(Y)$.
    
    Suppose $\boldS$ is a Markov blanket under the training distribution $\ptrain$. Let $\boldV = \boldX \backslash \boldS$. Under \assumptionref{assum:covariate-shift} and \assumptionref{assum:positive}, $\forall \boldv \in \calV, \bolds \in \calS, y \in \calY$,
    $$
    \begin{aligned}
        \ptest(Y = y | \boldV = \boldv, \boldS = \bolds) = \ptrain(Y = y | \boldV = \boldv, \boldS = \bolds) = \ptrain(Y = y | \boldS = \bolds).
    \end{aligned}
    $$
    Hence,
    $$
    \begin{aligned}
        & \ptest(Y = y | \boldS = \bolds) \\
        = & \int_{\calV} \ptest(Y = y | \boldV = \boldv', \boldS=\bolds)\ptest(\boldV=\boldv'|\boldS=\bolds)\mathrm{d}\boldv' \\
        = & \int_{\calV} \ptrain(Y = y | \boldS=\bolds)\ptest(\boldV=\boldv'|\boldS=\bolds)\mathrm{d}\boldv' \\
        = & \ptrain(Y = y | \boldS=\bolds) = \ptest(Y = y | \boldV = \boldv, \boldS = \bolds).
    \end{aligned}
    $$
    As a result, $\boldS$ is a Markov blanket under $\ptest$, which implies $\mbl(Y) \subseteq \mbltest(Y)$. With similar calculations, we can show that $\mbltest(Y) \subseteq \mbl(Y)$, which finally shows that $\mbltest(Y) = \mbl(Y)$. Because Markov boundary is the minimal element of the set of Markov blankets, we can get that $\mbdtest(Y) = \mbd(Y)$.
    
    Now the claim follows from \propositionref{prop:markov-iid}.
\end{proof}

\section{Non-asymptotic Property of WLS} \sectionlabel{sect:WLS}
\subsection{Main Result}
\begin{condition} [Bounded statistical leverage] \conditionlabel{cond:bounded-statistical}
    For a weighting function $w \in \calW$, there exists a finite constant $\rho_w \ge 1$, such that, in the training distribution $\ptrain$, almost surely,
    \begin{equation*}
        \frac{\sqrt{w(\boldX)}\left\|\Sigma_w^{-1/2}\boldX\right\|_2}{\sqrt{d}} \le \rho_w.
    \end{equation*}
\end{condition}

\begin{condition} [Bounded approximation error] \conditionlabel{cond:bounded-approx}
    For a weighting function $w \in \calW$, there exists a finite constant $b_w \ge 0$ such that, in the training distribution $\ptrain$, almost surely,
    \begin{equation*}
        \frac{w(\boldX)\left\|\Sigma_w^{-1/2}\appr(\boldX)\boldX\right\|_2}{\sqrt{d}} \le b_w.
    \end{equation*}
\end{condition}

\begin{condition} [Noise] \conditionlabel{cond:sub-gaussian}
    For a weighting function $w \in \calW$, there exists a finite constant $\sigma_w \ge 0$ such that, in the training distribution $\ptrain$, almost surely,
    \begin{equation*}
        \forall \eta \in \bbR, \quad \bbE\left[\left. \exp\left(\eta \sqrt{w(\boldX)}\noise(\boldX)\right) \right| \boldX\right] \le \exp\left(\frac{\eta^2\sigma_w^2}{2}\right).
    \end{equation*}
\end{condition}

\begin{theorem} \theoremlabel{thrm:wls}
    For a weighting function $w \in \mathcal{W}$. Pick any $t > \max\{0, 2.6 - \log d\}$. Suppose $w$ satisfies \conditionref{cond:bounded-statistical} (with parameter $\rho_w$), \conditionref{cond:bounded-approx} (with parameter $b_w$), and \conditionref{cond:sub-gaussian} (with parameter $\sigma_w$) and that
    \begin{equation*}
        n \ge 6\rho_w^2d(\log d + t).
    \end{equation*}
    With probability at least $1 - 3e^{-t}$,
    $$
    \left\|\hat{\boldbeta}_w - \boldbeta_w\right\|_{\Sigma_w}^2 \le \frac{2\sigma_w^2\left(d+2\sqrt{td}+2t\right)}{n} + \frac{4\rho_w^2d\cdot \bbE[w(\boldX)\appr(X)^2]}{n} (1 + \sqrt{8t})^2 + o(1/n).
    $$
\end{theorem}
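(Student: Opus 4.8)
The plan is to follow the random-design least-squares analysis of \citet{hsu2014random}, adapted to the weighted setting. The starting point is to pass to whitened coordinates by setting $\tildeX \triangleq \Sigma_w^{-1/2}\boldX$, so that the population weighted second-moment matrix becomes the identity: $\bbE[w(\boldX)\tildeX\tildeX^T] = I$. Writing $Y = \langle\boldbeta_w,\boldX\rangle + \appr(\boldX) + \noise(\boldX)$ and substituting into the empirical normal equations $\hat{\Sigma}_w\hat{\boldbeta}_w = \hat{\bbE}[w(\boldX)\boldX Y]$, I would derive the error identity
\[\hat{\Sigma}_w(\hat{\boldbeta}_w - \boldbeta_w) = \hat{\bbE}[w(\boldX)\boldX\appr(\boldX)] + \hat{\bbE}[w(\boldX)\boldX\noise(\boldX)].\]
The crucial preliminary observation is that both residual averages are \emph{mean-zero}: $\bbE[w(\boldX)\boldX\noise(\boldX)]=0$ because $\bbE[\noise(\boldX)\mid\boldX]=0$, and $\bbE[w(\boldX)\boldX\appr(\boldX)]=\bbE[w(\boldX)\boldX Y]-\Sigma_w\boldbeta_w=0$ by the definition of $\boldbeta_w$. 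Letting $\hat{M} \triangleq \Sigma_w^{-1/2}\hat{\Sigma}_w\Sigma_w^{-1/2}$, the target quantity rewrites as
\[\|\hat{\boldbeta}_w - \boldbeta_w\|_{\Sigma_w}^2 = \bigl\|\hat{M}^{-1}\bigl(\hat{\bbE}[w(\boldX)\tildeX\appr(\boldX)] + \hat{\bbE}[w(\boldX)\tildeX\noise(\boldX)]\bigr)\bigr\|_2^2,\]
which cleanly separates a spectral factor $\hat{M}^{-1}$ from the noise and approximation fluctuations.

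The proof then rests on three concentration events, each contributing one $e^{-t}$ to the total $3e^{-t}$ failure probability. First, I would show $\hat{M}$ concentrates around the identity. Each summand $w(\boldX_i)\tildeX_i\tildeX_i^T$ is positive semidefinite with operator norm at most $\rho_w^2 d$ by \conditionref{cond:bounded-statistical}, so a matrix Chernoff bound (e.g.\ Tropp's) yields $\minlambda(\hat{M})$ bounded away from zero---this is exactly where the sample-size hypothesis $n \ge 6\rho_w^2 d(\log d + t)$ is consumed---and hence $\|\hat{M}^{-1}\|_2 = 1/\minlambda(\hat{M})$ is controlled with probability at least $1-e^{-t}$. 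Second, conditioning on the covariates, $\sqrt{w(\boldX_i)}\noise(\boldX_i)$ is $\sigma_w$-sub-Gaussian by \conditionref{cond:sub-gaussian}, so $\hat{\bbE}[w(\boldX)\tildeX\noise(\boldX)]$ is a sub-Gaussian vector whose squared norm admits a Laurent--Massart-type chi-square tail; this produces the $\frac{\sigma_w^2(d + 2\sqrt{td} + 2t)}{n}$ term with probability at least $1-e^{-t}$. Third, the approximation fluctuation $\hat{\bbE}[w(\boldX)\tildeX\appr(\boldX)]$ is a mean-zero average of vectors bounded in norm by $b_w\sqrt{d}$ (via \conditionref{cond:bounded-approx}) with variance governed by $\bbE[w(\boldX)\appr(\boldX)^2]$, so a vector Bernstein inequality delivers the $\frac{\rho_w^2 d\,\bbE[w(\boldX)\appr(\boldX)^2]}{n}(1+\sqrt{8t})^2$ term with probability at least $1-e^{-t}$.

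Finally, I would union-bound the three events and combine the pieces. On the good event $\|\hat{M}^{-1}\|_2 = 1 + o(1)$ as $n \to \infty$, so its deviation from $1$ is swept into the $o(1/n)$ remainder; applying the triangle inequality to the sum of the noise and approximation vectors followed by $(a+b)^2 \le 2a^2 + 2b^2$ yields the factor $2$ on each contribution, and the additional factor $2$ inside the Bernstein bound produces the leading constant $4$ on the approximation term, reproducing the stated inequality. I expect the matrix-concentration step to be the main obstacle: obtaining the precise threshold $n \ge 6\rho_w^2 d(\log d + t)$ together with the clean probability $1-e^{-t}$ requires a sharp exponential matrix inequality and careful tracking of how $\minlambda(\hat{M})$ feeds $\|\hat{M}^{-1}\|_2^2$ into every subsequent term, and the constraint $t > \max\{0, 2.6 - \log d\}$ is precisely what keeps this estimate nontrivial. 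A secondary subtlety is that the noise and approximation tail bounds are conditional on the covariates, so I must verify they can be combined with the (covariate-measurable) spectral event without circular dependence; since each tail statement holds uniformly over the conditioning, a plain union bound resolves this.
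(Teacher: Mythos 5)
Your proposal is correct and follows essentially the same route as the paper's proof: both adapt \citet{hsu2014random} by splitting the error via the normal equations into an approximation fluctuation $\hat{\bbE}[w(\boldX)\boldX\appr(\boldX)]$ and a noise fluctuation $\hat{\bbE}[w(\boldX)\boldX\noise(\boldX)]$ (the paper merely packages this split through an intermediate estimator $\bar{\boldbeta}_w$ regressed on $\bbE[Y|\boldX]$), and both control the pieces with the same three tools---matrix concentration for the whitened empirical covariance, a sub-Gaussian quadratic-form bound for the noise, and vector Bernstein for the approximation term---combined by a union bound over three events of probability $e^{-t}$ each. The one cosmetic difference is that the paper measures the noise term first in the empirical norm $\|\cdot\|_{\hat{\Sigma}_w}$, which makes the quadratic-form kernel exactly $\frac{1}{n}I_d$, whereas your kernel is $\frac{1}{n}\Sigma_w^{-1/2}\hat{\Sigma}_w\Sigma_w^{-1/2}$ and needs the spectral event to control its trace and operator norm; either way the correction is $1+o(1)$ and is absorbed into the $o(1/n)$ remainder.
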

\begin{remark}
    The constant $b_w$ only appears in $o(1/n)$ terms.
\end{remark}

\subsection{Proof}
The main scope of the proof follows \citet{hsu2014random}, which provides the non-asymptotic properties of OLS and ridge regression. We further adapt it to the WLS here. We use $\bbE[\cdot]$ to denote $\bbE_{\ptrain}[\cdot]$ throughout the section.

Let
$$
\bar{\boldbeta}_w \triangleq \hat{\Sigma}_w^{-1}\hat{\bbE}[w(\boldX)\boldX\bbE[Y | \boldX]].
$$
Then
$$
\left\|\hat{\boldbeta}_w - \boldbeta_w\right\|_{\Sigma_w}^2 \le \left(\left\|\bar{\boldbeta}_w - \hat{\boldbeta}_w\right\|_{\Sigma_w} + \left\|\bar{\boldbeta}_w - \boldbeta_w\right\|_{\Sigma_w}\right)^2 \le 2\left(\left\|\bar{\boldbeta}_w - \hat{\boldbeta}_w\right\|_{\Sigma_w}^2 + \left\|\bar{\boldbeta}_w - \boldbeta_w\right\|_{\Sigma_w}^2\right)
$$
We analyze the two terms $\left\|\bar{\boldbeta}_w - \hat{\boldbeta}_w\right\|_{\Sigma_w}^2$ and $\left\|\bar{\boldbeta}_w - \boldbeta_w\right\|_{\Sigma_w}^2$ separately and the result is a straightforward combination of \propositionref{prop:spectral-norm}, \propositionref{prop:approx-error}, and \propositionref{prop:noise}. We first define the following $\Delta$.
$$
\Delta \triangleq \Sigma_w^{-1/2}(\hat{\Sigma}_w - \Sigma_w)\Sigma_w^{-1/2},
$$

\subsubsection{Effect of errors in $\hat{\Sigma}_w$}
\begin{proposition} [Spectral norm error in $\hat{\Sigma}_w$] \propositionlabel{prop:spectral-norm}
    Suppose $w$ satisfies \conditionref{cond:bounded-statistical} (with parameter $\rho_w$) holds. Pick $t > \max\{0, 2.6 - \log d\}$. With probability at least $1 - e^{-t}$,
    $$
        \|\Delta\|_2 \le \sqrt{\frac{4\rho_w^2 d(\log d + t)}{n}} + \frac{2\rho_w^2 d (\log d + t)}{3n}.
    $$
\end{proposition}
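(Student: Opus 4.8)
The plan is to express $\Delta$ as a normalized sum of i.i.d.\ centered matrices and apply a matrix Bernstein inequality in the style of \citet{hsu2014random}. First I would \emph{whiten} the design: for each sample set $\boldZ_i \triangleq \Sigma_w^{-1/2}\sqrt{w(\boldx^{(i)})}\,\boldx^{(i)} \in \bbR^{d}$, so that $w(\boldx^{(i)})\,\Sigma_w^{-1/2}\boldx^{(i)}(\boldx^{(i)})^{T}\Sigma_w^{-1/2} = \boldZ_i\boldZ_i^{T}$ and, by the definition of $\Sigma_w$, $\bbE[\boldZ_i\boldZ_i^{T}] = \Sigma_w^{-1/2}\Sigma_w\Sigma_w^{-1/2} = I$. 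Writing $\boldA_i \triangleq \boldZ_i\boldZ_i^{T} - I$, the matrices $\boldA_i$ are i.i.d., symmetric, and mean zero, and
\[
\Delta = \Sigma_w^{-1/2}(\hat{\Sigma}_w - \Sigma_w)\Sigma_w^{-1/2} = \frac{1}{n}\sum_{i=1}^{n}\boldA_i .
\]
To invoke matrix Bernstein I need an almost-sure bound on $\|\boldA_i\|_2$ and a bound on the matrix variance statistic $\bigl\|\sum_i \bbE[\boldA_i^{2}]\bigr\|_2$.

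Second, I would extract both constants from \conditionref{cond:bounded-statistical}. The leverage bound gives $\|\boldZ_i\|_2^{2} = w(\boldx^{(i)})\,\|\Sigma_w^{-1/2}\boldx^{(i)}\|_2^{2} \le \rho_w^{2}d$ almost surely, so the rank-one perturbation $\boldZ_i\boldZ_i^{T}-I$ has eigenvalues in $[-1,\,\rho_w^{2}d-1]$ and hence $\|\boldA_i\|_2 \le \rho_w^{2}d$ (using $\rho_w \ge 1$). For the variance I would expand $(\boldZ\boldZ^{T}-I)^{2} = \|\boldZ\|_2^{2}\,\boldZ\boldZ^{T} - 2\boldZ\boldZ^{T} + I$ and take expectations, using $\bbE[\boldZ\boldZ^{T}] = I$, to obtain $\bbE[\boldA_i^{2}] = \bbE[\|\boldZ_i\|_2^{2}\boldZ_i\boldZ_i^{T}] - I$. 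This matrix is positive semidefinite, and since $\|\boldZ_i\|_2^{2}\le\rho_w^{2}d$ almost surely it satisfies $\bbE[\boldA_i^{2}] \preceq \rho_w^{2}d\,\bbE[\boldZ_i\boldZ_i^{T}] - I = (\rho_w^{2}d-1)I$, so $\|\bbE[\boldA_i^{2}]\|_2 \le \rho_w^{2}d$ and the variance statistic of $\Delta$ is at most $\rho_w^{2}d/n$. It is exactly the whitening normalization $\bbE[\boldZ\boldZ^{T}]=I$ that lets both quantities collapse to the single constant $\rho_w^{2}d$.

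Finally, feeding the per-sample bounds $L = \rho_w^{2}d$ and $\sigma^{2} = \rho_w^{2}d$ into the symmetric matrix Bernstein inequality used by \citet{hsu2014random} yields, with failure probability at most $2d\,e^{-u'}$, the deviation bound $\|\Delta\|_2 \le \sqrt{2\rho_w^{2}d\,u'/n} + \rho_w^{2}d\,u'/(3n)$. Choosing $u' = 2(\log d + t)$ reproduces precisely the two terms in the claim, while the failure probability becomes $2d\,e^{-2(\log d + t)} = 2d^{-1}e^{-2t} \le e^{-t}$, the last inequality holding because $t > \max\{0,\,2.6-\log d\}$ forces $\log d + t > 2.6$ and hence $e^{-t} < d\,e^{-2.6} < d/2$. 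I expect the main obstacle to be twofold: arranging that the operator-norm and variance bounds share the \emph{same} constant $\rho_w^{2}d$ (which relies on the whitened normalization rather than crude triangle-inequality estimates), and inverting the Bernstein tail so that the ambient-dimension prefactor $2d$ is absorbed into the $e^{-t}$ tail—the threshold $2.6$ being precisely the slack that the particular (intrinsic-dimension) matrix Bernstein inequality needs for a clean statement.
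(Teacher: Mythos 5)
Your decomposition and both moment bounds coincide exactly with the paper's proof: the paper also whitens via $\tildeX = \sqrt{w(\boldX)}\,\Sigma_w^{-1/2}\boldX$, sets $\boldZ = \tildeX\tildeX^T - I$ so that $\Delta = \hat{\bbE}[\boldZ]$, and derives $\|\boldZ\|_2 \le \rho_w^2 d$ and $\lambda_{\max}(\bbE[\boldZ^2]) \le \rho_w^2 d$ from \conditionref{cond:bounded-statistical}. The gap is in your final step: the inequality you invoke --- deviation $\sqrt{2vu'/n} + ru'/(3n)$ paired with failure probability $2d\,e^{-u'}$ --- is not a theorem; it is a hybrid of two incompatible results. (i) The ambient-dimension (Tropp-style) matrix Bernstein bound with tail $2d\,e^{-u'}$ forces the linear term $2ru'/(3n)$: inverting $2d\exp\bigl(-\tfrac{s^2/2}{V+Ls/3}\bigr) = 2d\,e^{-u'}$ gives $s = \tfrac{Lu'}{3} + \sqrt{\tfrac{L^2(u')^2}{9} + 2Vu'}$, which is strictly larger than $\sqrt{2Vu'} + \tfrac{Lu'}{3}$. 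With your choice $u' = 2(\log d + t)$ this yields a second term $\tfrac{4\rho_w^2 d(\log d + t)}{3n}$, twice the constant in the proposition, so the stated bound is not recovered. (ii) The sharp $ru'/(3n)$ term is exactly what \lemmaref{lemma:matrix-bound} \citep{hsu2012taila} provides --- this is the lemma the paper uses --- but its tail is $ku'\left(e^{u'}-u'-1\right)^{-1}$, which exceeds $2ke^{-u'}$ for every $u' \ge 2$; so the tail you assert is strictly stronger than what that deviation form supports, and the computation $2d^{-1}e^{-2t} \le e^{-t}$ that closes your argument rests on this invalid pairing.

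The fix is short and lands on the paper's own proof. Apply \lemmaref{lemma:matrix-bound} with $r = v = \rho_w^2 d$ and $k = d$; the latter requires the trace bound $\mathrm{tr}(\bbE[\boldZ^2]) \le \rho_w^2 d^2$, which you skipped (it follows from the same computation as your variance bound, since $\mathrm{tr}(I) = d$) but which is needed because the lemma's prefactor is the intrinsic dimension $k$, not the ambient $2d$. Taking the same level $u' = 2(\log d + t)$, the hypothesis $t > \max\{0,\,2.6 - \log d\}$ gives $u' > 5.2 > 2.6$, so the simplification $u'\left(e^{u'}-u'-1\right)^{-1} \le e^{-u'/2}$ applies and the failure probability is at most $d\,e^{-(\log d + t)} = e^{-t}$, with exactly the claimed deviation. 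This --- not your $e^{-t} < d/2$ calculation --- is the actual role of the $2.6$ threshold. One caveat you share with the paper: \lemmaref{lemma:matrix-bound} controls $\lambda_{\max}$ only, so a fully rigorous two-sided bound on $\|\Delta\|_2$ needs a union bound over $\pm\boldZ$ (costing a factor $2$ in the tail); your $2d$ prefactor was an attempt to handle this, but that prefactor belongs to the Tropp form, which, as noted in (i), cannot deliver the stated constants.
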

\begin{proof}
    First, define $\tildeX \triangleq \sqrt{w(\boldX)}\Sigma_w^{-1/2}\boldX$ and let
    $$
    \boldZ \triangleq \tildeX \tildeX^T - I = \Sigma_w^{-1/2}\left(w(\boldX)\boldX\boldX^T-\Sigma_w\right)\Sigma_w^{-1/2}.
    $$
    So $\Delta=\hat{\bbE}[\boldZ]$. Observe that $\bbE[\boldZ]=0$, and
    $$
    \|\boldZ\|_2 = \max\{\lambda_\text{max}(\boldZ), \lambda_\text{max}(-\boldZ)\} \le \max\{\|\tildeX\|_2^2, 1\} \le \rho_w^2d.
    $$
    Here the second inequality is based on \conditionref{cond:bounded-statistical}. Moreover,
    $$
    \bbE\left[\boldZ^2\right] = \bbE\left[\left(\tildeX\tildeX^T\right)^2\right] - I = \bbE\left[\|\tildeX\|_2^2\left(\tildeX\tildeX^T\right)\right] - I. 
    $$
    As a result,
    $$
    \begin{aligned}
        \lambda_\text{max}\left(\bbE\left[\boldZ^2\right]\right) & \le \lambda_\text{max}\left(\bbE\left[\|\tildeX\|_2^2\left(\tildeX\tildeX^T\right)\right]\right) \le \rho_w^2d \cdot \lambda_\text{max}(I) \le \rho_w^2d, \\
        \text{tr}\left(\bbE\left[\boldZ^2\right]\right) & \le \text{tr}\left(\bbE\left[\|\tildeX\|_2^2\left(\tildeX\tildeX^T\right)\right]\right) \le \rho_w^2d \cdot \text{tr}(I) = \rho_w^2d^2.
    \end{aligned}
    $$
    The proposition now follows from \lemmaref{lemma:matrix-bound}.
\end{proof}

\begin{proposition} [Relative spectral norm error in $\hat{\Sigma}_w$ \citep{hsu2014random}] \propositionlabel{prop:relative-spectral}
    If $\|\Delta\|_2 < 1$, then
    $$
    \left\|\Sigma_w^{1/2}\hat{\Sigma}_w^{-1}\Sigma_w^{1/2}\right\|_2 \le \frac{1}{1 - \|\Delta\|_2}.
    $$
\end{proposition}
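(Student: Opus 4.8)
The plan is to reduce the claim to a standard Neumann-series estimate by rewriting $\Sigma_w^{1/2}\hat{\Sigma}_w^{-1}\Sigma_w^{1/2}$ purely in terms of $\Delta$. Note first that $\Sigma_w$ is symmetric positive definite: its inverse square root already appears in the definition $\Delta = \Sigma_w^{-1/2}(\hat{\Sigma}_w - \Sigma_w)\Sigma_w^{-1/2}$, and positivity is guaranteed by \assumptionref{assum:target-weighting} (i.e. $\minlambda(\Sigma_w) = \boundlambda > 0$). Hence the symmetric square root $\Sigma_w^{1/2}$ and its inverse $\Sigma_w^{-1/2}$ are well defined and invertible.

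The first step is to solve the defining relation for $\hat{\Sigma}_w$. Multiplying $\Delta = \Sigma_w^{-1/2}(\hat{\Sigma}_w - \Sigma_w)\Sigma_w^{-1/2}$ on the left and right by $\Sigma_w^{1/2}$ gives $\Sigma_w^{1/2}\Delta\,\Sigma_w^{1/2} = \hat{\Sigma}_w - \Sigma_w$, so that $\hat{\Sigma}_w = \Sigma_w^{1/2}(I + \Delta)\Sigma_w^{1/2}$. The second step is to invert this identity. Since $\|\Delta\|_2 < 1$, the matrix $I + \Delta$ is invertible, and therefore $\hat{\Sigma}_w^{-1} = \Sigma_w^{-1/2}(I+\Delta)^{-1}\Sigma_w^{-1/2}$. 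Conjugating back by $\Sigma_w^{1/2}$ yields the clean identity $\Sigma_w^{1/2}\hat{\Sigma}_w^{-1}\Sigma_w^{1/2} = (I+\Delta)^{-1}$, which collapses the whole statement to a bound on $\|(I+\Delta)^{-1}\|_2$.

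The final step is the Neumann-series bound. Because $\|\Delta\|_2 < 1$, the series $\sum_{k \ge 0}(-\Delta)^k$ converges in operator norm and equals $(I+\Delta)^{-1}$. Applying the triangle inequality and submultiplicativity of the spectral norm gives $\|(I+\Delta)^{-1}\|_2 \le \sum_{k \ge 0}\|\Delta\|_2^k = 1/(1 - \|\Delta\|_2)$, which is exactly the asserted bound.

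There is no genuine analytic obstacle here; the argument is essentially algebraic rearrangement followed by a textbook perturbation estimate. The only points requiring care are bookkeeping ones: ensuring $\Sigma_w^{\pm 1/2}$ are legitimate (handled by positive definiteness of $\Sigma_w$) and invoking $\|\Delta\|_2 < 1$ to justify both the invertibility of $I+\Delta$ and the convergence of the Neumann series. This is why the hypothesis $\|\Delta\|_2 < 1$ — which \propositionref{prop:spectral-norm} secures with high probability once $n \gtrsim \rho_w^2 d(\log d + t)$ — is the essential input.
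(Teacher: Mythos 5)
Your proof is correct. Both your argument and the paper's hinge on the same algebraic reduction: writing $\hat{\Sigma}_w = \Sigma_w^{1/2}(I+\Delta)\Sigma_w^{1/2}$ so that $\Sigma_w^{1/2}\hat{\Sigma}_w^{-1}\Sigma_w^{1/2} = (I+\Delta)^{-1}$. Where you diverge is the final bounding step. The paper exploits symmetry: since $\Delta$ is symmetric, it applies Weyl's theorem to get $\lambda_{\min}(I+\Delta) \ge 1 - \|\Delta\|_2 > 0$, and then uses the spectral characterization $\|(I+\Delta)^{-1}\|_2 = \lambda_{\max}\left((I+\Delta)^{-1}\right) = 1/\lambda_{\min}(I+\Delta)$, valid because $I+\Delta$ is symmetric positive definite. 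You instead invoke the Neumann series $\sum_{k\ge 0}(-\Delta)^k$ together with submultiplicativity, which yields the same bound $1/(1-\|\Delta\|_2)$. Your route is slightly more general and elementary --- it never uses symmetry of $\Delta$, so it would survive a setting where $\hat{\Sigma}_w$ and $\Sigma_w$ are not symmetric --- while the paper's eigenvalue route stays in the symmetric calculus that the surrounding propositions (notably the Bernstein bound in \propositionref{prop:spectral-norm}, which controls $\lambda_{\max}$) already operate in, and it makes explicit the positive-definiteness of $\Sigma_w^{-1/2}\hat{\Sigma}_w\Sigma_w^{-1/2}$, which can give a sharper constant when $\Delta \succeq 0$. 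Either proof is acceptable here.
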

\begin{proof}
    Observe that,
    $$
    \Sigma_w^{-1/2}\hat{\Sigma}_w\Sigma_w^{-1/2} = \Sigma_w^{-1/2}\left(\Sigma_w + \hat{\Sigma}_w - \Sigma_w\right)\Sigma_w^{-1/2} = I + \Delta,
    $$
    and according to the assumption $\|\Delta\|_2 < 1$ and Weyl's theorem \citep{horn2012matrix}, we have
    $$
    \lambda_{\min}\left(I + \Delta\right) \ge 1 - \|\Delta\|_2 > 0.
    $$
    Therefore,
    $$
    \left\|\Sigma_w^{1/2}\hat{\Sigma}_w^{-1}\Sigma_w^{1/2}\right\|_2 = \lambda_{\max}\left(\left(\Sigma_w^{-1/2}\hat{\Sigma}_w\Sigma_w^{-1/2}\right)^{-1}\right) = \lambda_{\max}\left(\left(I + \Delta\right)^{-1}\right) = \frac{1}{\lambda_{\min}\left(I + \Delta\right)} \le \frac{1}{1 - \|\Delta\|_2}.
    $$
\end{proof}

\subsubsection{Effect of approximation error}
\begin{proposition} \propositionlabel{prop:approx-error}
    Suppose $w$ satisfies \conditionref{cond:bounded-statistical} (with parameter $\rho_w$) and \conditionref{cond:bounded-approx} (with parameter $b_w$) hold. Pick any $t > 0$. If $\|\Delta\|_2 < 1$, then
    $$
    \|\bar{\boldbeta}_w - \boldbeta_w\|_{\Sigma_w} \le \frac{1}{1 - \|\Delta\|_2}\left\|\hat{\bbE}[w(\boldX)\appr(\boldX)\boldX]\right\|_{\Sigma_w^{-1}}.
    $$
    Moreover, with probability at least $1 - e^{-t}$,
    $$
    \begin{aligned}
        \left\|\hat{\bbE}[w(\boldX)\appr(\boldX)\boldX]\right\|_{\Sigma_w^{-1}} & \le \sqrt{\frac{\bbE\left[\left\|\Sigma_w^{-1/2}w(\boldX)\appr(\boldX)\boldX\right\|_2^2\right]}{n}} (1 + \sqrt{8t}) + \frac{4b_wt\sqrt{d}}{3n} \\
        & \le \sqrt{\frac{\rho_w^2d\cdot \bbE[w(\boldX)\appr(\boldX)^2]}{n}} (1 + \sqrt{8t}) + \frac{4b_wt\sqrt{d}}{3n}
    \end{aligned}
    $$
\end{proposition}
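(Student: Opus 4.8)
The plan is to split the statement into its deterministic inequality and its high-probability bound. For the deterministic part, I would first rewrite the regression target using the decomposition $\bbE[Y \mid \boldX] = \langle \boldbeta_w, \boldX\rangle + \appr(\boldX)$. Substituting this into $\bar{\boldbeta}_w = \hat{\Sigma}_w^{-1}\hat{\bbE}[w(\boldX)\boldX\bbE[Y\mid\boldX]]$ and recognizing $\hat{\bbE}[w(\boldX)\boldX\boldX^T] = \hat{\Sigma}_w$ gives the clean identity
\[
\bar{\boldbeta}_w - \boldbeta_w = \hat{\Sigma}_w^{-1}\hat{\bbE}[w(\boldX)\appr(\boldX)\boldX].
\]
Taking the $\Sigma_w$-norm and inserting $\Sigma_w^{1/2}\Sigma_w^{-1/2}$ between $\hat{\Sigma}_w^{-1}$ and the empirical average, I would apply submultiplicativity of the operator norm to peel off the factor $\|\Sigma_w^{1/2}\hat{\Sigma}_w^{-1}\Sigma_w^{1/2}\|_2$, which is bounded by $1/(1 - \|\Delta\|_2)$ through \propositionref{prop:relative-spectral}. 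The remaining factor is precisely $\|\hat{\bbE}[w(\boldX)\appr(\boldX)\boldX]\|_{\Sigma_w^{-1}}$, yielding the first displayed inequality.

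For the probabilistic part, I would write $\hat{\bbE}[w(\boldX)\appr(\boldX)\boldX] = \frac{1}{n}\sum_{i=1}^n \boldxi_i$ with $\boldxi_i = w(\boldx^{(i)})\appr(\boldx^{(i)})\boldx^{(i)}$, and study the whitened vectors $\Sigma_w^{-1/2}\boldxi_i$. The crucial algebraic step is to establish that these summands are mean-zero: by the tower property and the population normal equations $\Sigma_w\boldbeta_w = \bbE[w(\boldX)\boldX Y]$, one verifies $\bbE[w(\boldX)\appr(\boldX)\boldX] = \bbE[w(\boldX)\boldX Y] - \Sigma_w\boldbeta_w = 0$. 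With mean-zero summands in hand, I would supply the two ingredients a vector Bernstein inequality requires: the almost-sure bound $\|\Sigma_w^{-1/2}\boldxi_i\|_2 \le b_w\sqrt{d}$, read off directly from \conditionref{cond:bounded-approx}, and the second-moment quantity $\bbE[\|\Sigma_w^{-1/2}w(\boldX)\appr(\boldX)\boldX\|_2^2]$.

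Applying a vector Bernstein-type concentration inequality (in the style of \citet{hsu2014random}) to $\frac{1}{n}\sum_i \Sigma_w^{-1/2}\boldxi_i$ then yields the first line of the second display, with the standard-deviation term carrying the prefactor $(1 + \sqrt{8t})$ and the almost-sure bound $b_w\sqrt{d}$ entering only the higher-order $t/n$ remainder. To pass to the simplified second line, I would control the second moment via \conditionref{cond:bounded-statistical}: writing $\|\Sigma_w^{-1/2}w(\boldX)\appr(\boldX)\boldX\|_2^2 = w(\boldX)\appr(\boldX)^2 \cdot \bigl(w(\boldX)\|\Sigma_w^{-1/2}\boldX\|_2^2\bigr)$ and using the leverage bound $w(\boldX)\|\Sigma_w^{-1/2}\boldX\|_2^2 \le \rho_w^2 d$ gives $\bbE[\|\Sigma_w^{-1/2}w(\boldX)\appr(\boldX)\boldX\|_2^2] \le \rho_w^2 d\cdot\bbE[w(\boldX)\appr(\boldX)^2]$.

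The main obstacle is twofold. First, establishing the mean-zero property requires carefully tracking the $w$-weighting through the tower-property argument, since the relevant orthogonality is with respect to the weighted inner product rather than the plain one; this is exactly the normal-equation structure of WLS. Second, invoking the vector concentration inequality with precisely the stated constants — the $(1+\sqrt{8t})$ prefactor and the $\tfrac{4}{3}$ coefficient on the remainder — demands matching the exact shape of the underlying Bernstein lemma, so while a generic vector Bernstein bound delivers this form, the numerical constants must be propagated with care.
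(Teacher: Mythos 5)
Your proposal is correct and follows essentially the same route as the paper's proof: the same identity $\bar{\boldbeta}_w - \boldbeta_w = \hat{\Sigma}_w^{-1}\hat{\bbE}[w(\boldX)\appr(\boldX)\boldX]$ combined with \propositionref{prop:relative-spectral} for the deterministic bound, the same mean-zero verification via the tower property and the WLS normal equations, and the same application of the vector Bernstein bound (\lemmaref{lemma:vector-bound}) with the almost-sure bound $b_w\sqrt{d}$ from \conditionref{cond:bounded-approx} and the second-moment bound $\rho_w^2 d\cdot\bbE[w(\boldX)\appr(\boldX)^2]$ from \conditionref{cond:bounded-statistical}. No gaps; the constants $(1+\sqrt{8t})$ and $\tfrac{4}{3}$ fall out of the lemma exactly as you describe.
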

\begin{proof}
    By definition,
    $$
    \begin{aligned}
        \bar{\boldbeta}_w - \boldbeta_w & = \hat{\Sigma}_w^{-1}\left(\hat{\bbE}[w(\boldX)\boldX \bbE[Y|\boldX]] - \hat{\Sigma}_w \boldbeta_w\right) \\
        & = \Sigma_w^{-1/2}\left(\Sigma_w^{1/2}\hat{\Sigma}_w^{-1}\Sigma_w^{1/2}\right)\Sigma_w^{-1/2}\left(\hat{\bbE}[w(\boldX)\boldX(\langle\boldbeta_w, \boldX\rangle + \appr(\boldX))] - \hat{\Sigma}_w\boldbeta_w\right) \\
        & = \Sigma_w^{-1/2}\left(\Sigma_w^{1/2}\hat{\Sigma}_w^{-1}\Sigma_w^{1/2}\right)\Sigma_w^{-1/2}\hat{\bbE}[w(\boldX)\appr(\boldX)\boldX].
    \end{aligned}
    $$
    Therefore, with the submultiplicative property of the spectral norm,
    $$
    \begin{aligned}
    \|\bar{\boldbeta}_w - \boldbeta_w\|_{\Sigma_w} & \le \left\|\Sigma_w^{1/2}\Sigma_w^{-1/2}\right\|_2\left\|\Sigma_w^{1/2}\hat{\Sigma}_w^{-1}\Sigma_w^{1/2}\right\|_2\left\|\hat{\bbE}[w(\boldX)\appr(\boldX)\boldX]\right\|_{\Sigma_w^{-1}} \\
    & \le \frac{1}{1 - \|\Delta\|_2}\left\|\hat{\bbE}[w(\boldX)\appr(\boldX)\boldX]\right\|_{\Sigma_w^{-1}}.
    \end{aligned}
    $$
    Here the last inequality is according to \propositionref{prop:relative-spectral}.

    Now prove the second part of the claim. Observe that
    $$
    \begin{aligned}
        \bbE[w(\boldX)\appr(\boldX)\boldX] & = \bbE[w(\boldX)\boldX(\bbE[Y | \boldX] - \langle\boldbeta_w, \boldX\rangle)] \\
        & = \bbE\left[w(\boldX)\boldX\bbE[Y | \boldX]\right] - \bbE[w(\boldX)\boldX\langle\boldbeta_w, \boldX\rangle] \\
        & = 0.
    \end{aligned}
    $$
    Therefore,
    $$
    \bbE\left[\Sigma_w^{-1/2}w(\boldX)\appr(\boldX)\boldX\right] = \Sigma_w^{-1/2}\bbE[w(\boldX)\appr(\boldX)\boldX] = 0.
    $$
    In addition, according to \conditionref{cond:bounded-approx},
    $$
    \left\|\Sigma_w^{-1/2}w(\boldX)\appr(\boldX)\boldX\right\|_2 \le b_w\sqrt{d}.
    $$
    Moreover, by \conditionref{cond:bounded-statistical}
    $$
    \bbE\left[\left\|\Sigma_w^{-1/2}w(\boldX)\appr(\boldX)\boldX\right\|_2^2\right] = \bbE\left[w(\boldX)\appr(\boldX)^2\left\|\Sigma_w^{-1/2}\sqrt{w(\boldX)}\boldX\right\|_2^2\right] \le \rho_w^2d\cdot\bbE[w(\boldX)\appr(\boldX)^2].
    $$
    The claim now follows from \lemmaref{lemma:vector-bound}.
\end{proof}

\subsubsection{Effect of noise}
\begin{proposition} \propositionlabel{prop:noise}
    Suppose $w$ satisfies \conditionref{cond:sub-gaussian} (with parameter $\sigma_w$) holds. Pick any $t > 0$. With probability at least $1 - e^{-t}$, either $\|\Delta\|_2 \ge 1$, or
    $$
    \|\Delta\|_2 < 1 \quad \text{and} \quad \left\|\bar{\boldbeta}_w - \hat{\boldbeta}_w\right\|_{\Sigma_w}^2 \le \frac{1}{1 - \|\Delta\|_2}\frac{\sigma_w^2\left(d+2\sqrt{td}+2t\right)}{n}.
    $$
\end{proposition}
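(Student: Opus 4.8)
The plan is to isolate the part of the estimation error driven purely by the response noise and to reduce its $\Sigma_w$-norm to a quadratic form in a sub-gaussian vector. Since $\bar{\boldbeta}_w$ and $\hat{\boldbeta}_w$ share the factor $\hat{\Sigma}_w^{-1}$ and differ only in replacing $\bbE[Y\mid\boldX]$ by $Y$, I would first record the identity $\bar{\boldbeta}_w - \hat{\boldbeta}_w = -\hat{\Sigma}_w^{-1}\hat{\bbE}[w(\boldX)\boldX\noise(\boldX)]$, using $\noise(\boldX) = Y - \bbE[Y\mid\boldX]$. All subsequent work happens on the event $\{\|\Delta\|_2 < 1\}$, on whose complement the first disjunct of the claim holds automatically; there $I+\Delta$ has minimal eigenvalue at least $1-\|\Delta\|_2 > 0$ by Weyl's theorem, so $\hat{\Sigma}_w$ is invertible and everything below is well defined.

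Next I would convert the population $\Sigma_w$-norm into the empirical $\hat{\Sigma}_w$-norm via the elementary bound $\|\boldv\|_{\Sigma_w}^2 \le \|\Sigma_w^{1/2}\hat{\Sigma}_w^{-1}\Sigma_w^{1/2}\|_2\,\|\boldv\|_{\hat{\Sigma}_w}^2$ valid for every $\boldv$ (this holds because $\hat{\Sigma}_w^{-1/2}\Sigma_w\hat{\Sigma}_w^{-1/2}$ and $\Sigma_w^{1/2}\hat{\Sigma}_w^{-1}\Sigma_w^{1/2}$ are of the forms $C^TC$ and $CC^T$ for $C=\Sigma_w^{1/2}\hat{\Sigma}_w^{-1/2}$ and hence share the same operator norm). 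Proposition~\propositionref{prop:relative-spectral} bounds $\|\Sigma_w^{1/2}\hat{\Sigma}_w^{-1}\Sigma_w^{1/2}\|_2 \le 1/(1-\|\Delta\|_2)$, producing exactly the leading factor in the claim. Applying this to $\boldv = \bar{\boldbeta}_w - \hat{\boldbeta}_w = -\hat{\Sigma}_w^{-1}\boldsymbol{g}$ with $\boldsymbol{g} \triangleq \hat{\bbE}[w(\boldX)\boldX\noise(\boldX)]$, and simplifying $\|\hat{\Sigma}_w^{-1}\boldsymbol{g}\|_{\hat{\Sigma}_w}^2 = \boldsymbol{g}^T\hat{\Sigma}_w^{-1}\boldsymbol{g}$, reduces the problem to bounding $\boldsymbol{g}^T\hat{\Sigma}_w^{-1}\boldsymbol{g}$.

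To handle $\boldsymbol{g}^T\hat{\Sigma}_w^{-1}\boldsymbol{g}$, I would pass to a projection quadratic form. Setting $\boldz^{(i)} = \sqrt{w(\boldx^{(i)})}\,\boldx^{(i)}$, collecting these as the columns of $\boldZ \in \bbR^{d\times n}$, and putting $\xi_i = \sqrt{w(\boldx^{(i)})}\,\noise(\boldx^{(i)})$ with $\boldxi = (\xi_1,\dots,\xi_n)^T$, one has $\hat{\Sigma}_w = \tfrac1n\boldZ\boldZ^T$ and $\boldsymbol{g} = \tfrac1n\boldZ\boldxi$, whence
\[ \boldsymbol{g}^T\hat{\Sigma}_w^{-1}\boldsymbol{g} = \tfrac1n\,\boldxi^T\boldZ^T(\boldZ\boldZ^T)^{-1}\boldZ\,\boldxi = \tfrac1n\,\boldxi^T\Pi\,\boldxi = \tfrac1n\,\|\Pi\boldxi\|_2^2, \]
with $\Pi = \boldZ^T(\boldZ\boldZ^T)^{-1}\boldZ$ the orthogonal projection onto the $d$-dimensional row space of $\boldZ$ (here $\|\Delta\|_2 < 1$ guarantees that $\boldZ$ has full row rank, so $\Pi$ is genuinely idempotent of rank $d$). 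Conditioning on the design, Condition~\conditionref{cond:sub-gaussian} together with i.i.d.\ sampling makes the coordinates $\xi_i$ conditionally independent and $\sigma_w$-sub-gaussian, so $\boldxi$ is a $\sigma_w$-sub-gaussian vector. I would then invoke the sub-gaussian quadratic-form tail inequality of \citet{hsu2014random} with $A=\Pi$: since $\mathrm{tr}(\Pi)=\mathrm{tr}(\Pi^2)=d$ and $\|\Pi\|_2=1$, with probability at least $1-e^{-t}$ one gets $\|\Pi\boldxi\|_2^2 \le \sigma_w^2(d + 2\sqrt{dt} + 2t)$. Because this bound is free of the design, it transfers to an unconditional statement, and chaining it through the previous two steps yields the stated inequality on $\{\|\Delta\|_2<1\}$, which together with the trivial disjunct completes the claim.

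The main obstacle is the concentration step and its bookkeeping: upgrading the per-coordinate sub-gaussianity of Condition~\conditionref{cond:sub-gaussian} to sub-gaussianity of the full vector $\boldxi$, reading off the exponents $d$, $2\sqrt{dt}$, $2t$ from the trace, squared-trace, and operator norm of the projection, and arguing that the design-conditional tail bound decouples from the event $\{\|\Delta\|_2<1\}$ used both for invertibility and for the norm conversion. Everything else is algebraic identities and a single application of Proposition~\propositionref{prop:relative-spectral}.
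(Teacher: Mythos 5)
Your proposal is correct and follows essentially the same route as the paper's proof: the same reduction via $\|\boldv\|_{\Sigma_w}^2 \le \|\Sigma_w^{1/2}\hat{\Sigma}_w^{-1}\Sigma_w^{1/2}\|_2\|\boldv\|_{\hat{\Sigma}_w}^2$ and Proposition~\ref{prop:prop:relative-spectral}, followed by the same sub-gaussian quadratic-form tail bound applied conditionally on the design. Your projection matrix $\tfrac1n\Pi$ is exactly the paper's matrix $\hat{K}$ (the paper establishes its spectrum via similarity to $\tfrac1n I_d$ rather than naming it a projection), so the two arguments coincide.
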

\begin{proof}
    Observe that
    $$
    \left\|\bar{\boldbeta}_w - \hat{\boldbeta}_w\right\|_{\Sigma_w}^2 \le \left\|\Sigma_w^{1/2}\hat{\Sigma}_w^{-1/2}\right\|_2^2\left\|\bar{\boldbeta}_w - \hat{\boldbeta}_w\right\|_{\hat{\Sigma}_w}^2 = \left\|\Sigma_w^{1/2}\hat{\Sigma}_w^{-1}\Sigma_w^{1/2}\right\|_2\left\|\bar{\boldbeta}_w - \hat{\boldbeta}_w\right\|_{\hat{\Sigma}_w}^2.
    $$
    According to \propositionref{prop:relative-spectral}, if $\|\Delta\|_2 < 1$, then $\left\|\Sigma_w^{1/2}\hat{\Sigma}_w^{-1}\Sigma_w^{1/2}\right\|_2 \le 1 / (1 - \|\Delta\|_2)$.

    Let $\boldxi \triangleq (\sqrt{w(\boldx^{(1)})}\noise(\boldx^{(1)}), \sqrt{w(\boldx^{(2)})}\noise(\boldx^{(2)}), \dots, \sqrt{w(\boldx^{(n)})}\noise(\boldx^{(n)}))$ be the random vector and $\noise(\boldx^{(i)}) = \left(y^{(i)} - \bbE[Y | \boldX = \boldx^{(i)}]\right)$. By the definition of $\hat{\boldbeta}_w$ and $\bar{\boldbeta}_w$,
    $$
    \left\|\bar{\boldbeta}_w - \hat{\boldbeta}_w\right\|_{\hat{\Sigma}_w}^2 = \left\|\hat{\Sigma}_w^{-1/2}\hat{\bbE}\left[w(\boldX)\boldX(\bbE[Y | \boldX] - Y)\right]\right\|_2^2 = \boldxi^T\hat{K}\boldxi,
    $$
    where $\hat{K} \in \bbR^{n \times n}$ is a symmetric matrix whose $(i, j)$-th entry is
    $$
    \hat{K}_{i,j}\triangleq 1 / n^2\left\langle\hat{\Sigma}_w^{-1/2}\sqrt{w(\boldx^{(i)})}\boldx^{(i)}, \hat{\Sigma}_w^{-1/2}\sqrt{w(\boldx^{(j)})}\boldx^{(j)}\right\rangle.
    $$
    According to the proof of Lemma 6 \citep{hsu2014random}, the nonzero eigenvalues of $\hat{K}$ are the same as those of
    $$
    \frac{1}{n}\hat{\bbE}\left[\left(\hat{\Sigma}_w^{-1/2}\sqrt{w(\boldX)}\boldX\right)\left(\hat{\Sigma}_w^{-1/2}\sqrt{w(\boldX)}\boldX\right)^T\right] = \frac{1}{n}\hat{\Sigma}_w^{-1/2}\hat{\Sigma}_w\hat{\Sigma}_w^{-1/2} = \frac{1}{n}I_d,
    $$
    where $I_d$ is the identity matrix with dimension $d$. By \lemmaref{lemma:sub-gaussian}, with probability at least $1 - e^{-t}$ (conditioned on $\boldx^{(1)}, \boldx^{(2)}, \dots, \boldx^{(n)}$),
    $$
    \boldxi^T\hat{K}\boldxi \le \sigma_w^2\left(\text{tr}(\hat{K})+2\sqrt{\text{tr}(\hat{K}^2)t} + 2\left\|\hat{K}\right\|_2t\right) \le \frac{\sigma_w^2\left(d+2\sqrt{td}+2t\right)}{n}.
    $$
    Now the claim follows.
\end{proof}

\section{Important lemmas}
\begin{lemma} [\citet{chandrasekaran1995sensitivity}] \lemmalabel{lemma:linear-systems}
    Suppose $Ax = b$ and $\hat{A}\hat{x}=\hat{b}$. Suppose $\|A^{-1}\|\|A - \hat{A}\| < 1$, then
    $$
        \frac{\|x-\hat{x}\|}{\|x\|} \le \frac{\|A\|\|A^{-1}\|}{1-\|A^{-1}\|\|A - \hat{A}\|}\left(\frac{\|A-\hat{A}\|}{\|A\|}+\frac{\|b-\hat{b}\|}{\|b\|}\right).
    $$
\end{lemma}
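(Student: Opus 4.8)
The plan is to treat this as a standard first-order perturbation bound for linear systems and derive it directly from the two defining equations, without invoking any machinery beyond the triangle inequality and submultiplicativity of the operator norm. First I would set $\Delta A \triangleq \hat{A} - A$ and $\Delta b \triangleq \hat{b} - b$ and subtract $Ax = b$ from $\hat{A}\hat{x} = \hat{b}$. Writing $\hat{A} = A + \Delta A$ and rearranging gives $A(\hat{x} - x) = \Delta b - \Delta A\,\hat{x}$, hence $\hat{x} - x = A^{-1}(\Delta b - \Delta A\,\hat{x})$, and therefore $\|\hat{x} - x\| \le \|A^{-1}\|\,(\|\Delta b\| + \|\Delta A\|\,\|\hat{x}\|)$.

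The only genuine subtlety is that the right-hand side still contains the unknown $\hat{x}$, so I would eliminate it using $\|\hat{x}\| \le \|x\| + \|\hat{x} - x\|$ and substituting. Collecting the $\|\hat{x}-x\|$ terms and writing $\kappa \triangleq \|A^{-1}\|\,\|A - \hat{A}\|$, this yields $(1 - \kappa)\,\|\hat{x} - x\| \le \|A^{-1}\|\,\|\Delta b\| + \|A^{-1}\|\,\|\Delta A\|\,\|x\|$. This is precisely the step where the hypothesis $\|A^{-1}\|\,\|A - \hat{A}\| < 1$ is essential: it is exactly what makes $1 - \kappa$ strictly positive, so that dividing by it preserves the inequality direction. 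I expect this rearrangement to be the main (and essentially only) conceptual point; everything else is bookkeeping.

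Finally I would divide through by $\|x\|$ and convert the absolute term $\|\Delta b\|/\|x\|$ into relative form. Since $b = Ax$ gives $\|b\| \le \|A\|\,\|x\|$, we have $1/\|x\| \le \|A\|/\|b\|$, whence $\|\Delta b\|/\|x\| \le \|A\|\,\|\Delta b\|/\|b\|$. Substituting and factoring out $\|A\|\,\|A^{-1}\|$ produces exactly $\frac{\|x - \hat{x}\|}{\|x\|} \le \frac{\|A\|\,\|A^{-1}\|}{1 - \kappa}\bigl(\frac{\|A - \hat{A}\|}{\|A\|} + \frac{\|b - \hat{b}\|}{\|b\|}\bigr)$, using $\|\Delta A\| = \|A - \hat{A}\|$ and $\|\Delta b\| = \|b - \hat{b}\|$. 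No step is computationally heavy; the whole argument is two uses of the triangle inequality plus the single rearrangement, and the only care required is keeping straight where the bound $\|\hat{x}\| \le \|x\| + \|\hat{x}-x\|$ is used versus the bound $\|b\| \le \|A\|\,\|x\|$.
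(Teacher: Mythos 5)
Your proof is correct. Note that the paper does not prove this lemma at all---it is imported verbatim from \citet{chandrasekaran1995sensitivity} as a black-box tool for \propositionref{prop:error-weightes}---so there is no internal argument to compare against; what you have written is the standard elementary derivation of this perturbation bound, and every step checks out: the identity $A(\hat{x}-x) = (\hat{b}-b) - (\hat{A}-A)\hat{x}$, the elimination of $\|\hat{x}\|$ via $\|\hat{x}\| \le \|x\| + \|\hat{x}-x\|$, the use of $\|A^{-1}\|\|A-\hat{A}\| < 1$ to make the coefficient $1-\kappa$ positive before dividing, and the conversion $1/\|x\| \le \|A\|/\|b\|$ from $b = Ax$. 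The only things left implicit are the nondegeneracy assumptions that $A$ is invertible and $x \ne 0$ (equivalently $b \ne 0$), but these are already implicit in the statement itself, since it writes $\|A^{-1}\|$ and divides by $\|x\|$ and $\|b\|$.
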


\begin{lemma} [Matrix Bernstein bound \citep{hsu2012taila}] \lemmalabel{lemma:matrix-bound}
    Let $A$ be a random matrix, and $r > 0$, $v > 0$, and $k > 0$ be such that, almost surely,
    $$
        \bbE[A] = 0, \lambda_\text{max}(A) \le r, \lambda_\text{max}\left(\bbE\left[A^2\right]\right) \le v, \text{tr}\left(\bbE[A^2]\right) \le vk.
    $$
    If $A_1, A_2, \dots, A_n$ are independent copies of $A$, then for any $t > 0$,
    $$
    \pr\left[\lambda_\text{max}\left(\frac{1}{n}\sum_{i=1}^nA_i\right) > \sqrt{\frac{2vt}{n}} + \frac{rt}{3n}\right] \le kt\left(e^t - t - 1\right)^{-1}.
    $$
    If $t > 2.6$, then $t\left(e^t - t - 1\right)^{-1} \le e^{-t/2}$.
\end{lemma}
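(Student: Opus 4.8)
The plan is to follow the matrix Laplace-transform (Ahlswede--Winter) scheme of \citet{hsu2012taila}, but with the bare exponential replaced by the auxiliary function $\psi(x)=e^{x}-x-1$, which is the device that both produces the $e^{t}-t-1$ denominator and removes the ambient dimension. Write $\bar A=\sum_{i=1}^{n}A_i$, so that the target event is $\lambda_{\max}(\bar A)\ge nu$ with $u=\sqrt{2vt/n}+rt/(3n)$. Since $\psi\ge 0$ everywhere and $\psi$ is increasing on $[0,\infty)$, for any $\theta>0$ the trace $\mathrm{tr}\,\psi(\theta\bar A)=\sum_i\psi(\theta\lambda_i(\bar A))$ dominates its single largest summand, so on the tail event $\mathrm{tr}\,\psi(\theta\bar A)\ge\psi(\theta\lambda_{\max}(\bar A))\ge\psi(n\theta u)$. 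A Markov step then yields
\[
\pr\!\left[\lambda_{\max}(\bar A)\ge nu\right]\le\frac{\bbE\!\left[\mathrm{tr}\,\psi(\theta\bar A)\right]}{\psi(n\theta u)},\qquad \psi(n\theta u)=e^{n\theta u}-n\theta u-1 .
\]

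Next I would bound the numerator. Because $\bbE[\bar A]=0$, one has $\bbE[\mathrm{tr}\,\psi(\theta\bar A)]=\bbE[\mathrm{tr}\,e^{\theta\bar A}]-d$. For the matrix moment generating function I would invoke subadditivity of the matrix cumulant generating function (Lieb's concavity theorem, as in the Golden--Thompson argument) to get $\bbE[\mathrm{tr}\,e^{\theta\bar A}]\le\mathrm{tr}\exp\!\big(n\log\bbE[e^{\theta A}]\big)$, followed by the Bernstein-type estimate $\log\bbE[e^{\theta A}]\preceq f(\theta)\,\bbE[A^{2}]$ with $f(\theta)=(e^{\theta r}-\theta r-1)/r^{2}$, obtained by lifting the scalar inequality $e^{x}\le 1+x+f(\theta)x^{2}/\theta^{2}$ (valid for $x\le\theta r$) to $\theta A$, taking expectations under $\bbE[A]=0$, and applying $\log(I+M)\preceq M$. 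Writing $\Sigma=\bbE[A^{2}]$, the numerator is then at most $\mathrm{tr}\,e^{nf(\theta)\Sigma}-d=\mathrm{tr}(e^{nf(\theta)\Sigma}-I)$, and this is the crucial cancellation: the $-d$ produced by $\psi$ exactly absorbs $\mathrm{tr}(I)=d$, so no factor of the ambient dimension survives. Finally, convexity of $\lambda\mapsto e^{c\lambda}$ on $[0,v]$ (the eigenvalues of $\Sigma$ lie there) gives $e^{c\lambda}-1\le\lambda(e^{cv}-1)/v$ per eigenvalue, whence $\mathrm{tr}(e^{nf(\theta)\Sigma}-I)\le\frac{\mathrm{tr}\,\Sigma}{v}(e^{nf(\theta)v}-1)\le k(e^{nf(\theta)v}-1)$, with the trace ratio $\mathrm{tr}\,\Sigma/v\le k$ playing the role of the intrinsic dimension in place of $d$.

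Combining the two estimates leaves $\pr[\lambda_{\max}(\bar A)\ge nu]\le k(e^{nf(\theta)v}-1)/(e^{n\theta u}-n\theta u-1)$, and the remaining work is to calibrate $\theta$. Using the one-sided Bernstein bound $f(\theta)\le\tfrac{\theta^{2}/2}{1-\theta r/3}$ for $\theta r<3$, I would choose $\theta$ and the threshold so that $n\theta u=t$ (forcing the denominator to be exactly $e^{t}-t-1$) while $nf(\theta)v\le\log(1+t)$ (forcing $e^{nf(\theta)v}-1\le t$, hence numerator $\le kt$); the two additive pieces $\sqrt{2vt/n}$ and $rt/(3n)$ of $u$ arise respectively from the variance-dominated and boundedness-dominated regimes of $f$. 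I expect this simultaneous calibration to be the main obstacle, since the clean $kt(e^{t}-t-1)^{-1}$ form only survives if the Bernstein estimate is applied without slack, so the bookkeeping linking $\theta$, $u$, and $t$ must be done carefully. The concluding scalar claim, $t(e^{t}-t-1)^{-1}\le e^{-t/2}$ for $t>2.6$, is then a routine one-variable check, equivalent to $te^{t/2}\le e^{t}-t-1$, which holds on $(2.6,\infty)$ by comparing the two sides and their derivatives.
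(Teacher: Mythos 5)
You should first note that the paper offers no proof of this lemma: it is imported, with citation, from \citet{hsu2012taila}, so the only benchmark is that paper's argument --- which your outline indeed mirrors. Everything up to the calibration is correct and is exactly the Hsu--Kakade--Zhang route: the Markov step with $\psi(x)=e^{x}-x-1$ (nonnegativity lets the trace dominate the top eigenvalue, and the subtracted linear and constant terms cancel $\mathrm{tr}(I)=d$ because $\bbE[A]=0$), Lieb/Tropp subadditivity of the matrix cumulant generating function, the Bernstein MGF bound $\log\bbE[e^{\theta A}]\preceq f(\theta)\,\bbE[A^{2}]$ with $f(\theta)=(e^{\theta r}-\theta r-1)/r^{2}$, and the chord inequality that converts the trace into the intrinsic dimension $k$.

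The genuine gap is the calibration, exactly where you anticipated trouble, and it is not mere bookkeeping: your two requirements are incompatible. Let $U=nu=\sqrt{2vtn}+rt/3$ denote the unnormalized threshold; your condition $n\theta u=t$ means $\theta=t/U$, and since $f(\theta)\ge\theta^{2}/2$,
\begin{equation*}
nvf(\theta)\;\ge\;\frac{nvt^{2}}{2U^{2}}\;=\;\frac{t}{4}\cdot\frac{2vtn}{U^{2}},
\end{equation*}
which approaches $t/4$ whenever $rt/3\ll\sqrt{2vtn}$ (e.g.\ $r\to 0$ or $n\to\infty$). Since $t/4>\log(1+t)$ once $t\ge 10$, the requirement $nvf(\theta)\le\log(1+t)$ then fails, and what your choice of $\theta$ actually delivers is $k\bigl(e^{t/4}-1\bigr)\bigl(e^{t}-t-1\bigr)^{-1}\sim ke^{-3t/4}$, strictly weaker than the claimed $kt\bigl(e^{t}-t-1\bigr)^{-1}\sim kte^{-t}$. (This weaker tail would still imply the $e^{-t/2}$ consequence that is all the paper ever uses downstream, but it does not prove the lemma as stated.) The repair is to stop insisting that the numerator be at most $kt$ on its own and instead couple numerator and denominator: take the sub-gamma (Boucheron--Lugosi--Massart) choice $\theta^{*}$, which at this threshold satisfies $\theta^{*}U-nvf(\theta^{*})\ge t$ because $f(\theta)\le\frac{\theta^{2}/2}{1-\theta r/3}$ and $\sqrt{2(nv)t}+(r/3)t$ is exactly the sub-gamma quantile; then, writing $a=nvf(\theta^{*})$ and using that $\psi$ is increasing on $[0,\infty)$ together with the elementary inequality $(e^{a}-1)\,\psi(t)\le t\,\psi(t+a)$ for all $a\ge0$, $t>0$ (verify it at $a=0$, then compare derivatives in $a$, using $(1-t)e^{t}\le 1$ for $t<1$), you get
\begin{equation*}
k\,\frac{e^{a}-1}{\psi(\theta^{*}U)}\;\le\;k\,\frac{e^{a}-1}{\psi(t+a)}\;\le\;\frac{kt}{\psi(t)}\;=\;kt\bigl(e^{t}-t-1\bigr)^{-1}.
\end{equation*}
This coupling is where the factor $t$ in the numerator really comes from. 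Your final scalar check ($te^{t/2}\le e^{t}-t-1$ for $t>2.6$) is correct and routine, as you say.
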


\begin{lemma} [Vector Bernstein bound \citep{hsu2012tailb}] \lemmalabel{lemma:vector-bound}
    Let $\boldx^{(1)}, \boldx^{(2)}, \dots, \boldx^{(n)}$ be independent random vectors such that
    $$
    \sum_{i=1}^n \bbE\left[\left\|\boldx^{(i)}\right\|_2^2\right] \le v \quad \text{and} \quad \left\|\boldx^{(i)}\right\|_2 \le r
    $$
    for all $i=1,2, \dots, n$, almost surely. Let $\bolds \triangleq \boldx^{(1)} + \boldx^{(2)} + \cdots + \boldx^{(n)}$. For all $t > 0$,
    $$
    \pr\left[\|\bolds\|_2 > \sqrt{v}(1 + \sqrt{8t}) + (4/3)rt\right] \le e^{-t}.
    $$
\end{lemma}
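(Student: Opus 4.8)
The plan is to prove this as a two-step concentration result: first control $\bbE[\|\bolds\|_2]$, then show that $\|\bolds\|_2$ concentrates above its mean with a Bernstein-type upper tail. I would first record that the vectors must be centered, $\bbE[\boldx^{(i)}] = 0$; this is the regime in which the lemma is actually invoked (in \propositionref{prop:approx-error}, where $\bbE[w(\boldX)\appr(\boldX)\boldX] = 0$ is verified), and without it the bound $\sqrt{v}$ below fails, e.g.\ for deterministic nonzero summands. Under centering and independence all cross terms vanish, so $\bbE[\|\bolds\|_2^2] = \sum_{i=1}^n \bbE[\|\boldx^{(i)}\|_2^2] \le v$, and hence by Jensen's inequality $\bbE[\|\bolds\|_2] \le (\bbE[\|\bolds\|_2^2])^{1/2} \le \sqrt{v}$. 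Since $\sqrt{v}(1 + \sqrt{8t}) = \sqrt{v} + \sqrt{8vt}$ and $\bbE[\|\bolds\|_2] \le \sqrt{v}$, it then suffices to establish $\pr[\|\bolds\|_2 > \bbE[\|\bolds\|_2] + \sqrt{8vt} + (4/3)rt] \le e^{-t}$.

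Writing $Z \triangleq \|\bolds\|_2$, the heart of the argument is an exponential-moment bound on $Z - \bbE[Z]$. I would set up the Doob martingale of $Z$ with respect to the filtration generated by $\boldx^{(1)}, \dots, \boldx^{(n)}$, with increments $D_k \triangleq \bbE[Z \mid \boldx^{(1)}, \dots, \boldx^{(k)}] - \bbE[Z \mid \boldx^{(1)}, \dots, \boldx^{(k-1)}]$, so that $Z - \bbE[Z] = \sum_{k=1}^n D_k$. Because $\bolds \mapsto \|\bolds\|_2$ is $1$-Lipschitz and convex, resampling the $k$-th block perturbs $Z$ by at most $\|\boldx^{(k)}\|_2 \le r$ in norm, which bounds each increment by $|D_k| \le 2r$ almost surely and lets me control the conditional second moment of $D_k$ by a multiple of $\bbE[\|\boldx^{(k)}\|_2^2]$. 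Feeding these two facts into the scalar Bernstein moment-generating-function inequality for martingale differences yields, for $0 < \lambda < 3/(4r)$, a bound of Bernstein form $\bbE[\exp(\lambda(Z - \bbE Z))] \le \exp\big(\tfrac{4v\lambda^2}{2(1 - (4/3)r\lambda)}\big)$; equivalently this is the Hilbert-space Bernstein / Pinelis martingale MGF estimate with variance proxy $4v$ and scale $(4/3)r$.

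Finally I would apply Markov's inequality to $\exp(\lambda Z)$ and optimize over $\lambda$: the standard Bernstein optimization of $\exp(\tfrac{\nu\lambda^2}{2(1-c\lambda)} - \lambda u)$ with $\nu = 4v$ and $c = (4/3)r$ gives $\pr[Z - \bbE Z \ge \sqrt{2\nu t} + ct] \le e^{-t}$, that is $\pr[Z - \bbE Z \ge \sqrt{8vt} + (4/3)rt] \le e^{-t}$, which combined with $\bbE Z \le \sqrt{v}$ is exactly the claimed bound. The main obstacle is the middle step: since $Z = \|\bolds\|_2$ is a nonlinear function of the data rather than a sum, I cannot directly invoke the scalar Bernstein inequality, and must instead exploit the convexity and $1$-Lipschitz property of the Euclidean norm to pass from the per-block inputs ($\|\boldx^{(k)}\|_2 \le r$ and $\bbE[\|\boldx^{(k)}\|_2^2]$) to genuine control of the martingale-difference MGF with the correct constants. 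Getting the factors exactly right (the $4v$ in the variance and the $(4/3)r$ in the scale) is where the calculation is delicate; this is precisely the content packaged by the vector Bernstein inequality of \citet{hsu2012tailb}, so I would either reproduce that constant-chasing in full or cite it as the key technical lemma.
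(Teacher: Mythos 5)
The paper itself offers no proof of this lemma --- it is quoted verbatim from \citet{hsu2012tailb} --- so your martingale reconstruction is necessarily a different (and self-contained) route, and it is sound. Two points deserve emphasis. First, your observation that the hypothesis $\bbE[\boldx^{(i)}] = 0$ is missing from the statement as printed is correct and important: without centering the conclusion is false exactly as your deterministic counterexample shows (both $\bbE[\|\bolds\|_2^2] = \sum_i \bbE[\|\boldx^{(i)}\|_2^2]$ and hence the $\sqrt{v}$ term rely on the cross terms vanishing), the source lemma does assume it, and the one place the paper invokes the lemma (\propositionref{prop:approx-error}) explicitly verifies $\bbE[w(\boldX)\appr(\boldX)\boldX] = 0$. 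Second, the one step you hedge on --- whether the martingale-difference MGF control goes through with the right constants --- resolves in your favor, with room to spare. Writing $Z = \|\bolds\|_2$, $D_k = \bbE[Z \mid \mathcal{F}_k] - \bbE[Z \mid \mathcal{F}_{k-1}]$, and $h(\boldx) = \bbE_{T}\left[\|S_{k-1} + \boldx + T\|_2\right]$ with $S_{k-1}$ the past partial sum and $T$ the future one, the map $h$ is $1$-Lipschitz (convexity is not actually needed), so
\begin{equation*}
    \bbE\left[D_k^2 \mid \mathcal{F}_{k-1}\right] = \var\left(h\left(\boldx^{(k)}\right)\right) \le \bbE\left[\left(h\left(\boldx^{(k)}\right) - h(0)\right)^2\right] \le \bbE\left[\left\|\boldx^{(k)}\right\|_2^2\right],
\end{equation*}
\textit{i.e.}, the conditional variance proxy is $v$ itself (factor $1$, not $4$), while $D_k \le 2r$ one-sidedly. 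Bennett's lemma together with $e^u - 1 - u \le u^2/(2(1 - u/3))$ then gives $\bbE[\exp(\lambda(Z - \bbE Z))] \le \exp\left(v\lambda^2/(2(1 - 2r\lambda/3))\right)$ for $0 < \lambda < 3/(2r)$, and the standard sub-gamma optimization yields $\pr\left[Z \ge \bbE Z + \sqrt{2vt} + (2/3)rt\right] \le e^{-t}$, which is strictly stronger than the claimed $\sqrt{8vt} + (4/3)rt$; combined with your Jensen bound $\bbE Z \le \sqrt{v}$ this implies the lemma. So your proposal is correct as written, and the looser constants $(4v, (4/3)r)$ you targeted are simply the slack left in the published statement.
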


\begin{lemma} [Quadratic forms of a sub-Gaussian random vector \citep{hsu2012tailb}] \lemmalabel{lemma:sub-gaussian}
    Let $\boldxi$ be a random vector taking values from $\bbR^n$ such that for some $c \ge 0$,
    $$
    \bbE\left[\exp(\langle \boldu, \boldxi\rangle)\right] \le \exp\left(c\|\boldu\|_2^2/2\right), \quad \forall \boldu \in \bbR^n.
    $$
    For all symmetric positive semidefinite matrices $K \succeq 0$, and all $t > 0$,
    $$
    \pr\left[\boldxi^TK\boldxi > c\left(\text{tr}(K) + 2\sqrt{\text{tr}(K^2)t} + 2 \|K\|_2t\right)\right] \le e^{-t}.
    $$
\end{lemma}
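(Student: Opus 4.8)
The plan is to control the moment generating function (MGF) of the quadratic form $Q \triangleq \boldxi^T K \boldxi$ and then invoke a Chernoff argument. Since $K \succeq 0$ is symmetric, I would first diagonalize it and record its eigenvalues $\lambda_1, \dots, \lambda_n \ge 0$, so that $\text{tr}(K) = \sum_i \lambda_i$, $\text{tr}(K^2) = \sum_i \lambda_i^2$, and $\|K\|_2 = \max_i \lambda_i$. The target inequality has the Bernstein-type shape $\text{tr}(K) + 2\sqrt{\text{tr}(K^2)\,t} + 2\|K\|_2 t$ (scaled by $c$), which is exactly the tail of a sub-gamma random variable, so the whole argument amounts to showing that $Q$ is sub-gamma with variance proxy governed by $\text{tr}(K^2)$ and scale governed by $\|K\|_2$.

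The step I expect to be the crux is linearizing $Q$ so that the hypothesis — which only controls MGFs of \emph{linear} functionals $\langle \boldu, \boldxi\rangle$ — can be applied. For this I would use the Gaussian-integral (Hubbard--Stratonovich) identity $\exp(\tfrac12\|\boldv\|_2^2) = \bbE_{\boldz}\!\left[\exp(\langle \boldz, \boldv\rangle)\right]$ with $\boldz \sim N(0, I_n)$, applied to $\boldv = (2\theta K)^{1/2}\boldxi$ for a parameter $0 < \theta < 1/(2c\|K\|_2)$. After exchanging the two expectations by Fubini (justified by nonnegativity of the integrand) and using the symmetry of $(2\theta K)^{1/2}$,
\begin{align*}
\bbE_{\boldxi}\!\left[\exp\!\left(\theta\, \boldxi^T K \boldxi\right)\right]
&= \bbE_{\boldz}\,\bbE_{\boldxi}\!\left[\exp\!\left(\big\langle (2\theta K)^{1/2}\boldz,\, \boldxi\big\rangle\right)\right] \\
&\le \bbE_{\boldz}\!\left[\exp\!\left(\tfrac{c}{2}\big\|(2\theta K)^{1/2}\boldz\big\|_2^2\right)\right]
= \bbE_{\boldz}\!\left[\exp\!\left(c\theta\, \boldz^T K \boldz\right)\right],
\end{align*}
where the middle inequality is the sub-Gaussian hypothesis applied with $\boldu = (2\theta K)^{1/2}\boldz$ held fixed inside the inner expectation. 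This collapses the bound to the purely Gaussian case.

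Next I would evaluate the Gaussian MGF explicitly. In the eigenbasis $\boldz^T K \boldz = \sum_i \lambda_i z_i^2$ with $z_i$ i.i.d.\ standard normal, so $\bbE_{\boldz}\!\left[\exp(c\theta\, \boldz^T K \boldz)\right] = \prod_i (1 - 2c\theta\lambda_i)^{-1/2}$, finite precisely on the domain $\theta < 1/(2c\|K\|_2)$ fixed above. Taking logarithms and expanding $-\tfrac12\log(1-x) = \tfrac{x}{2} + \tfrac12\sum_{k\ge 2}\tfrac{x^k}{k} \le \tfrac{x}{2} + \tfrac{x^2}{4(1-x)}$ termwise at $x = 2c\theta\lambda_i$, then summing over $i$, gives the cumulant bound
\begin{equation*}
\log \bbE_{\boldxi}\!\left[\exp(\theta Q)\right] \le c\theta\,\text{tr}(K) + \frac{c^2\theta^2\,\text{tr}(K^2)}{1 - 2c\theta\|K\|_2},
\end{equation*}
which is exactly the sub-gamma form with variance factor $\nu = 2c^2\,\text{tr}(K^2)$ and scale $b = 2c\|K\|_2$.

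Finally I would close with the standard sub-gamma deviation calculation: Markov's inequality gives $\pr[Q > u] \le \exp\!\left(-\theta u + \log \bbE[\exp(\theta Q)]\right)$, and substituting the cumulant bound and optimizing $\theta$ over $(0, 1/b)$ yields $\pr\!\left[Q - c\,\text{tr}(K) > \sqrt{2\nu t} + b t\right] \le e^{-t}$. Since $\sqrt{2\nu t} = 2c\sqrt{\text{tr}(K^2)\,t}$ and $bt = 2c\|K\|_2 t$, this is precisely $\pr\!\left[\boldxi^T K \boldxi > c\big(\text{tr}(K) + 2\sqrt{\text{tr}(K^2)\,t} + 2\|K\|_2 t\big)\right] \le e^{-t}$. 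The conceptual heart is the Hubbard--Stratonovich linearization, which converts a statement about quadratic forms into one about linear forms that the hypothesis can handle; the only delicate bookkeeping is keeping $\theta$ strictly inside $(0, 1/(2c\|K\|_2))$ throughout and tracking the constants in the log-expansion so that they match the claimed tail exactly.
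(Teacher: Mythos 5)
The paper never proves this lemma: it is imported verbatim, with citation, from \citet{hsu2012tailb}, so there is no internal proof to compare against. Your argument is correct and is essentially the proof given in that cited source: the Gaussian (Hubbard--Stratonovich) decoupling $\exp(\theta\,\boldxi^TK\boldxi)=\bbE_{\boldz}[\exp(\langle(2\theta K)^{1/2}\boldz,\boldxi\rangle)]$ with Tonelli and the hypothesis applied at $\boldu=(2\theta K)^{1/2}\boldz$ reduces everything to a Gaussian quadratic form, and the rest (the product MGF, the bound $-\tfrac12\log(1-x)\le\tfrac{x}{2}+\tfrac{x^2}{4(1-x)}$, and the sub-gamma inversion) is exactly the Laurent--Massart chi-square tail computation, with constants $\nu=2c^2\operatorname{tr}(K^2)$ and $b=2c\|K\|_2$ matching the stated tail precisely. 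The only steps you invoke as standard (the sub-gamma deviation bound and the restriction $\theta\in(0,1/(2c\|K\|_2))$) are indeed standard and are handled consistently, so there is no gap.
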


\end{document}